\documentclass{article} 
\usepackage{iclr2026_conference,times}

\usepackage{amsmath,amsfonts,bm}

\def\eqref#1{equation~\ref{#1}}

\def\1{\bm{1}}

\def\rk{{\textnormal{k}}}

\DeclareMathAlphabet{\mathsfit}{\encodingdefault}{\sfdefault}{m}{sl}
\SetMathAlphabet{\mathsfit}{bold}{\encodingdefault}{\sfdefault}{bx}{n}

\makeatletter
\newcommand*{\transpose}{%
  {\mathpalette\@transpose{}}%
}
\newcommand*{\@transpose}[2]{%
  \raisebox{\depth}{$\m@th#1\intercal$}%
}

\usepackage[dvipsnames]{xcolor}

\usepackage{algorithm} 
\usepackage{algpseudocode}
\usepackage{url}
\usepackage{adjustbox} 

\algblock{ParFor}{EndParFor}
\algnewcommand\algorithmicparfor{\textbf{parfor}}
\algnewcommand\algorithmicpardo{\textbf{do}}
\algnewcommand\algorithmicendparfor{\textbf{end\ parfor}}
\algrenewtext{ParFor}[1]{\algorithmicparfor\ #1\ \algorithmicpardo}
\algrenewtext{EndParFor}{\algorithmicendparfor}

\usepackage{booktabs}

\definecolor{lb}{RGB}{31,119,180}

\usepackage{tcolorbox}
\newtcolorbox{mybox}[1]{colback=lb!1!white,colframe=lb!70!black,fonttitle=\bfseries,title=#1}

\usepackage{natbib} 
\PassOptionsToPackage{sort}{natbib}
\definecolor{darkgreen}{rgb}{0.0, 0.5, 0.13}

\usepackage{amsmath}
\usepackage{amssymb}
\usepackage{mathtools}
\usepackage{amsthm}
\usepackage{faktor}
\usepackage{svg}
\usepackage{hyperref}
\usepackage{float}
\usepackage{multicol}
\usepackage{bbm}

\usepackage{enumitem}
\usepackage{tikz}
\usetikzlibrary{positioning}
\usepackage{aliascnt}

\theoremstyle{plain}

\newtheorem{theorem}{Theorem}[section]

\newaliascnt{proposition}{theorem}
\newtheorem{proposition}[proposition]{Proposition}
\aliascntresetthe{proposition}

\newaliascnt{lemma}{theorem}
\newtheorem{lemma}[lemma]{Lemma}
\aliascntresetthe{lemma}

\newaliascnt{corollary}{theorem}
\newtheorem{corollary}[corollary]{Corollary}
\aliascntresetthe{corollary}

\theoremstyle{definition}
\newaliascnt{definition}{theorem}
\newtheorem{definition}[definition]{Definition}
\aliascntresetthe{definition}

\newaliascnt{assumption}{theorem}

\aliascntresetthe{assumption}

\theoremstyle{remark}
\newaliascnt{remark}{theorem}
\newtheorem{remark}[remark]{Remark}
\aliascntresetthe{remark}

\newaliascnt{example}{theorem}
\newtheorem{example}[example]{Example}
\aliascntresetthe{example}

\usepackage[capitalize,noabbrev,nameinlink]{cleveref}

\crefname{theorem}{theorem}{theorems}
\crefname{proposition}{proposition}{propositions}
\crefname{lemma}{lemma}{lemmas}
\crefname{corollary}{corollary}{corollaries}
\crefname{definition}{definition}{definitions}
\crefname{assumption}{assumption}{assumptions}
\crefname{remark}{remark}{remarks}
\crefname{example}{example}{examples}

\Crefname{theorem}{Theorem}{Theorems}
\Crefname{proposition}{Proposition}{Propositions}
\Crefname{lemma}{Lemma}{Lemmas}
\Crefname{corollary}{Corollary}{Corollaries}
\Crefname{definition}{Definition}{Definitions}
\Crefname{assumption}{Assumption}{Assumptions}
\Crefname{remark}{Remark}{Remarks}
\Crefname{example}{Example}{Examples}

\definecolor{lightgray}{gray}{0.9}

\newtcolorbox{graybox}{
  colback=lightgray,  
  colframe=black,     
  boxrule=0.5pt,      
  arc=1mm,            
}

\title{The Logical Expressiveness of \\Topological Neural Networks}

\author{%
Amirreza Akbari\\
Aalto University\\
\texttt{amirreza.akbari@aalto.fi} \\
\And
Amauri H. Souza \\
Federal Institute of Ceará, 2$\delta$ AI \\
\texttt{amauriholanda@ifce.edu.br} \\
\And
Vikas Garg \\
 Aalto University and YaiYai Ltd\\
\texttt{vgarg@csail.mit.edu}
}

\iclrfinalcopy 

\newcommand{\rank}[0]{\mathrm{rk}}
\newcommand{\col}[0]{\mathrm{c}}
\newcommand{\len}[0]{\operatorname{len}}
\newcommand{\depth}[0]{\operatorname{depth}}

\newcommand{\ETCLogic}[1]{\mathsf{TC}_{#1}}
\newcommand{\TCLogic}[1]{\mathsf{TC}_{#1}}
\newcommand{\GTCLogic}[1]{\mathsf{GTC}_{#1}}
\newcommand{\CLogic}[1]{\mathsf{C}_{#1}}

\newcommand{\pairLogic}[2]{#1, #2}
\newcommand{\IterativeTCLogic}[2]{\mathsf{TC}_{#1,#2}}
\newcommand{\IterativeGTCLogic}[2]{\mathsf{GTC}_{#1,#2}}

\newcommand{\logicEquiv}[5]{\pairLogic{#1}{#2}\underset{\tiny\mathclap{\TCLogic{#5}}}{\equiv}\pairLogic{#3}{#4}}
\newcommand{\GlogicEquiv}[5]{\pairLogic{#1}{#2}\underset{\tiny\mathclap{\GTCLogic{#5}}}{\equiv}\pairLogic{#3}{#4}}
\newcommand{\generalEquiv}[5]{\pairLogic{#1}{#2}\underset{\tiny\mathclap{#5}}{\equiv}\pairLogic{#3}{#4}}
\newcommand{\logicEquivIterative}[6]{\pairLogic{#1}{#2}\underset{\tiny\mathclap{\IterativeTCLogic{#5}{#6}}}{\equiv}\pairLogic{#3}{#4}}
\newcommand{\GlogicEquivIterative}[6]{\pairLogic{#1}{#2}\underset{\tiny\mathclap{\IterativeGTCLogic{#5}{#6}}}{\equiv}\pairLogic{#3}{#4}}
\newcommand{\ksim}[5]{\pairLogic{#1}{#2}{\sim}_{#5}\pairLogic{#3}{#4}}

\newcommand{\cccwlbase}[3]{\chi_{1,#2}^{(#1)}\left(#3\right)}

\newcommand{\cccwl}[4]{\chi_{#1,#3}^{(#2)}\left(#4\right)}

\newcommand{\collectedccwl}[3]{\chi_{#3, #1}^{(#2)}}

\newcommand{\boundary}{\mathcal{N}_B}
\newcommand{\coboundary}{\mathcal{N}_C}
\newcommand{\upperLaplacian}{\mathcal{N}_\uparrow}
\newcommand{\lowerLaplacian}{\mathcal{N}_\downarrow}

\newcommand{\NB}[2]{N^{\boundary}_{#1}(#2)}

\newcommand{\NLowerL}[2]{N^{\lowerLaplacian}_{#1}(#2)}

\newcommand{\differ}[2]{\operatorname{diff}(#1,#2)}
\newcommand{\atomic}[3]{\operatorname{atp}_{#2,#1}\!\big(#3\big)}

\newcommand{\dshift}[4]{\operatorname{dshift}\!\big(#1, #2, #3, #4\big)}

\newcommand{\kNeighborhoodB}[2]{N^{\boundary}_{#1}(#2)}

\newcommand{\kNeighborhoodLowerL}[2]{N^{\lowerLaplacian}_{#1}(#2)}

\newcommand{\vecX}{\mathbf{x}}

\newcommand{\vecY}{{y}}

\newcommand{\ccnn}[4]{{{#1}}^{(#2)}_{#3}(#4)}
\newcommand{\kccnn}[5]{{{#1}}_{#4,#2}^{(#3)}(#5)}

\newcommand{\agg}[5]{{#1}^{#4,(#2)}_{#3}(#5)}

\newcommand{\subsit}[3]{#1[#2 / #3]}

\newcommand{\mulset}[1]{\left\{\!\!\!\left\{#1\right\}\!\!\!\right\}}
\begin{document}

\maketitle

\begin{abstract}
Graph neural networks (GNNs) are the standard for learning on graphs, yet they have limited expressive power, often expressed in terms of the Weisfeiler-Leman (WL) hierarchy or within the framework of first-order logic. 
In this context, topological neural networks (TNNs) have recently emerged as a promising alternative for graph representation learning. By incorporating higher-order relational structures into message-passing schemes, TNNs offer higher representational power than traditional GNNs.
However, a fundamental question remains open: \emph{what is the logical expressiveness of TNNs?}
Answering this allows us to characterize precisely which binary classifiers TNNs can represent.
In this paper, we address this question by analyzing isomorphism tests derived from the underlying mechanisms of general TNNs.
We introduce and investigate the power of higher-order variants of WL-based tests for combinatorial complexes, called $k$-CCWL test.
In addition, we introduce the topological counting logic (TC$_k$), an extension of standard counting logic featuring a novel pairwise counting quantifier
\(
\exists^{N}(x_i,x_j)\, \varphi(x_i,x_j),
\)
which explicitly quantifies pairs $(x_i, x_j)$ satisfying property $\varphi$. We rigorously prove the exact equivalence:
\(
\text{k-CCWL} \equiv \text{TC}_{k{+}2} \equiv \text{Topological }(k{+}2)\text{-pebble game}.
\)
These results establish a logical expressiveness theory for TNNs.
\end{abstract}

\section{Introduction}
Graph neural networks (GNNs) \citep{GNNsOriginal,Bruna14,gilmer_gnn,hamilton2017inductive} have become the \emph{de facto} paradigm for predictive tasks on graph-structured data, driving progress in several applications such as drug repurposing \citep{stokes2020deep}, simulation of physical systems \citep{ComplexPhysics}, algorithmic reasoning \citep{jurss2023recursive}, and recommender systems \citep{recommendersystem}.
Nonetheless, most GNNs are at most as powerful as the 1-Weisfeiler--Lehman (1-WL) test (or the color refinement algorithm) \citep{weisfeiler1968reduction} in distinguishing non-isomorphic graphs \citep{xu2018powerful,morris2019weisfeiler}. Consequently, GNNs fail to capture fundamental structural information (e.g., cycles or connected components) and decide basic graph properties \citep{Loukas2020What,garg2020,substructures2020}. Importantly, this limitation has also been examined through the lens of first-order logic, which provides a precise characterization of the node-level binary classifiers that GNNs can express \citep{barcelo2020logical}.

Naturally, this limited expressivity has propelled the development of more expressive GNNs \citep{Sato2021RandomFS,wang2022equivariant,horn2022topological,Verma2024,immonen2023going}. Among these, topological neural networks (TNNs) \citep{hajij2023topological,bodnar2021weisfeiler,papillon2024topotune} have recently emerged as the new frontier for relational learning ~\citep{pmlr-v235-papamarkou24a}.
Akin to GNNs, typical TNNs employ message-passing layers where each element of the input (e.g., nodes or cells) updates its representation (or features) based on those of its topological neighbors. 
These TNNs operate on the so-called (uniform) combinatorial complexes \citep{hajij2023topological}: a flexible relational notion that accommodates both non-hierarchical domains (e.g., hypergraphs) and hierarchical ones (e.g., cell complexes).
Unraveling the representation capabilities of TNNs is imperative to understanding their strengths and pitfalls, and designing better, more nuanced models that are theoretically well-grounded. 
However, although the logical expressivity of GNNs, and even their higher-order variants \citep{morris2019weisfeiler,WLsofar}, is well understood within the framework of first-order logic \citep{barcelo2020logical,geerts2022expressiveness}, the logical characterization of TNNs remains largely uncharted.
\looseness=-1

To address this gap, this paper investigates the logical expressiveness of a general framework for topological neural networks and their higher-order variants, here denoted as $k$-TNNs. As a first step, we introduce extensions of the classical color refinement algorithm tailored to TNNs: the \emph{combinatorial complex WL} (CCWL) test and its higher-order counterpart, $k$-CCWL. These refinements are not just abstract gadgets: they capture the expressive power of a broad class of existing TNN architectures. In particular, $1$-CCWL subsumes the expressivity of most message-passing TNNs on cell complexes, including CW Networks (e.g., Cell Isomorphism Networks, CIN)~\citep{bodnar2021weisfeiler}, message-passing simplicial networks~\citep{DBLP:conf/icml/BodnarF0OMLB21}, and higher-order GNNs such as UniGCN and UniGIN~\citep{ijcai2021p353}, among many others summarized in ~\citep{papillon2023architectures}. More generally, any model with invariant update functions under the $k$-CCWL refinement cannot distinguish ACCs that $k$-CCWL cannot separate. For $k = 1$, this reduces to standard message passing on combinatorial complexes; for $k > 1$, the architecture operates on $k$-tuples of cells and updates labels via the same mechanism as $k$-CCWL. Notably, $(k$-)CCWL can be used as a standard tool for analyzing the expressivity of TNNs, and it leads to a strict hierarchy as $k$ grows.

To formally analyze the expressive power of TNNs, we introduce a new fragment of first-order logic, the \emph{topological counting logic} ($\TCLogic{k}$), which is inspired by classical counting logic \citep{ebbinghaus1999finite,otto2017bounded,libkin2004elements,immerman1998descriptive}. The standard counting quantifier can only count single-variable instantiations; however, $\TCLogic{k}$ extends this by introducing a \emph{pairwise counting quantifier} $\exists^{N}(x_i,x_j)\,\varphi(x_i,x_j)$ that allows reasoning about pairs of cells in a topological structure. Intuitively, unlike in graphs, where edges directly connect two vertices, many relations between cells are \emph{mediated} by other cells (e.g., two edges are adjacent because they share a vertex, or they both lie in the same face). Many interesting patterns also come from how a cell's boundary (its faces) and coboundary (its cofaces) interact, which is naturally described by counting pairs of cells that meet in some shared neighbor. This extension is motivated by the setting of TNNs aggregation: When updating a cell’s label through its upper and lower neighborhoods, the update necessarily involves information from intermediary face and co-face cells, which means that TNNs aggregate \emph{pairs} of relational signals rather than single neighbors. The pairwise quantifier thus provides the correct logical abstraction for capturing these higher-order interactions. 

In parallel, we define the \emph{topological $k$-pebble game}, which generalizes the classical pebble games of \citet{immerman1982upper,immerman1990describing} and the earlier work of \citet{ehrenfeucht1961application,fraisse1954quelques}, and serves as the game-theoretic counterpart of $\TCLogic{k}$. Unlike the classical version, however, the topological game must be designed to capture the semantics of the new \emph{pairwise counting quantifier}: Instead of only reasoning about single-variable placements, the game needs to account for pairs of pebbled elements and their topological relations. The main challenge lies in extending the rules so that duplicator–spoiler interactions reflect the ability of $\TCLogic{k}$ to distinguish structures based on the number of cell pairs satisfying a given property; therefore, we need to precisely align the expressive power of the game with the expressive power of the logic.

These tools create a unified analysis of $k$-TNNs from three sides: algorithmic ($k$-CCWL), logical ($\TCLogic{k}$), and game-theoretic (the topological pebble game). Our main contribution shows that these three characterizations are \emph{equivalent}, in direct analogy to the classical $k$-WL / counting logic / pebble games correspondence \citep{cai1992optimal}, which provides the first logic–game–algorithm triad for $k$-TNNs and a precise description of how their expressive power scales with $k$. 

\subsection{Contributions}

In this paper, we extend the classic triad ``$k$-WL $\leftrightarrow \mathrm{C}_{k+1}$ counting logic $\leftrightarrow$ $(k{+}1)$-pebble game from graphs to attributed combinatorial complexes (ACCs), and use it to analyze the expressive power of topological neural networks (TNNs): 

\begin{itemize}[label={},left=4em,itemsep=2pt]
    \item[\Cref{sec:higher-order-topological-isomorphism-tests}] \textit{Algorithm.} We introduce $k$-CCWL, a WL-style refinement on ACCs that respects boundary, coboundary, lower and upper adjacencies. On uniform ACCs, a simple broadcast-anchor gadget gives an \emph{identical-vs-disjoint} global color property (\Cref{prop:disjoint-or-identical-labelings}), and for every $k \in \mathbb{N}$, we provide non-isomorphic ACCs that are distinguished by $k$-CCWL but not by $(k{-}1)$-CCWL (\Cref{thm:main-k-increases}). We also show $k$-CCWL has the same expressive power as $k$-WL in terms of distinguishing non-isomorphic $1$-dimensional ACCs (\Cref{thm:ccwl-vs-wl-1d}).
        
    \item[\Cref{sec:topological-counting-logic}] \textit{Logic.} We define a new topological counting logic $\TCLogic{k}$ with \emph{a pairwise counting quantifier}, $\exists^{N}(x_i,x_j)\,\varphi(x_i,x_j)$, which can capture the aggregation patterns of TNNs. We show that $\TCLogic{k+2}$ \emph{refines} $k$-CCWL in distinguishing non-isomorphic ACCs (\Cref{thm:eccwl-logic-direction}).
    \looseness=-1
    
    \item[\Cref{sec:topological-pebble-game}] \textit{Game.} We introduce a topological $k$-pebble game, whose winning strategies for Player~II characterize $\TCLogic{k}$-equivalence (\Cref{thm:logic-game-direction}). As the final piece of the triad, we then show that $k$-CCWL-indistinguishability implies a winning strategy for Player ~II (\Cref{thm:eccwl-game-direction}). All together, we have our main result\footnote{In case $k = 1$, we show $1$-CCWL has the same expressive power as guarded fragment of $\TCLogic{3}$.}(\Cref{corollary:all-together-tuple-wise}):
    $$
    k\text{-CCWL} \;\equiv\; \TCLogic{k+2} \;\equiv\; \text{topological }(k{+}2)\text{-pebble game},
    $$
    so the hierarchy in $k$ is strict.
    \end{itemize}
    \begin{figure}[h!]
  \centering
  \begin{tikzpicture}[
    scale=0.8,
    transform shape,
    >=stealth,
    node distance=2.4cm,
    box/.style={draw, rounded corners, align=center, inner sep=4pt, minimum width=2.6cm}
  ]

  \node[anchor=east] (labG) at (-0.5,0) {\textbf{Graphs}};
  \node[anchor=east] (labC) at (-0.5,-2) {\textbf{ACCs}};

  \node[box] (gWL)              at (1,0) {$k$-WL};
  \node[box,right=of gWL] (gC)             {$\mathrm{C}_{k+1}$};
  \node[box,right=of gC]  (gGame)          {$(k{+}1)$-pebble\\ counting game};

  \node[box] (cWL)              at (1,-2) {$k$-CCWL};
  \node[box,right=of cWL] (cTC)            {$\mathrm{TC}_{k+2}$};
  \node[box,right=of cTC]  (cGame)         {topological\\ $(k{+}2)$-pebble game};

  \draw[<->] (gWL) -- (gC);
  \draw[<->] (gC)  -- (gGame);

  \draw[<->] (cWL) -- (cTC);
  \draw[<->] (cTC) -- (cGame);

  \draw[->,dashed] (gWL) -- node[right, xshift=0.2cm] {\small lift to ACCs} (cWL);
  \draw[->,dashed] (gC)  -- node[right, xshift=0.2cm] {\small pairwise quantifier} (cTC);

  \end{tikzpicture}
\end{figure}






\section{Background}
This section overviews combinatorial complexes, message-passing topological neural networks, and the logical expressivity of GNNs. 
 
\textbf{Combinatorial Complexes.} \citet{hajij2023topological} introduced combinatorial complexes as a relational structure to accommodate both hierarchical topological domains (e.g., simplicial complexes) and non-hierarchical ones (e.g., hypergraphs). Formally, a combinatorial complex (CC) is a triple $(S, \mathcal{X} , \mathrm{rk})$ consisting of a set of vertices $S$, a set of cells $\mathcal{X} \subseteq \mathcal{P}(S) \setminus \{\emptyset\}$, and a rank function $\mathrm{rk}: \mathcal{X} \rightarrow \mathbb{Z}_{\geq 0}$ such that $\forall s \in S$, $\{s\} \in \mathcal{X}$; and $\forall x, y \in \mathcal{X}$, $x \subseteq y \Rightarrow \mathrm{rk}(x) \leq 
\mathrm{rk}(y)$. The set of $k$-cells of a combinatorial complex, denoted $\mathcal{X}(k)$, consists of all cells with rank exactly $k$.

We can exploit rank functions to specify local neighbors for each cell. Analogously to \citet{bodnar2021weisfeiler}, we define four neighborhood structures:
\begin{itemize}[left=8pt,noitemsep,topsep=0pt]
    \item Boundary adjacency:  $\mathcal{N}_{B}(x)=\{y \subset x: \mathrm{rk}(y)=\mathrm{rk}(x)-1 \}$
    \item Co-boundary adjacency: $\mathcal{N}_{C}(x)=\{y \supset x: \mathrm{rk}(y)=\mathrm{rk}(x)+1 \}$
    \item Upper adjacency: $\mathcal{N}_\uparrow(x)=\{y : \exists \delta \text{ such that }  x \subset \delta,  y \subset \delta \text{ and } \mathrm{rk}(\delta)-1=\mathrm{rk}(y)=\mathrm{rk}(x)\}$
    \item Lower adjacency: $\mathcal{N}_\downarrow(x)=\{y : \exists \delta \text{ such that }  \delta \subset x,  \delta \subset y \text{ and } \mathrm{rk}(\delta)+1=\mathrm{rk}(y)=\mathrm{rk}(x)\}$
\end{itemize}

We are interested in attributed combinatorial complexes. An attributed combinatorial complex (ACC) is a tuple ($S, \mathcal{X}, \mathrm{rk}, c$) where $c: \mathcal{X} \rightarrow \Sigma$ assigns a label $c(x)$ from a domain $\Sigma$ to each cell $x$. The \emph{order} of an ACC is $\max\{\mathrm{rk}(x):x\in \mathcal{X}\}$. Moreover, we call an ACC \emph{uniform} if for every cell $x$ of rank $r>0$ there exists a finite sequence of cells
$
x = x_0, x_1, \ldots, x_t
$
such that $x_t$ is a cell of rank $0$ and, for every $0 \le i < t$, there exists 
$\mathcal{N}_i \in \{\boundary, \coboundary, \lowerLaplacian, \upperLaplacian\}$ with
\[
x_i \in \mathcal{N}_i(x_{i+1}).
\]
In other words, from any cell, there is a finite sequence of cells to a cell of rank $0$ by following the neighborhood structures. 
Further, we assume binary features, i.e., $\Sigma=\{0,1\}^\ell$ for some constant $\ell$.

\textbf{CC-Isomorphism.} \citet{papillon2024topotune} introduced the notion of CC-isomorphism.  
Let $\Gamma_1 = (V_1, C_1, \rank_1, \col_1)$ and $\Gamma_2 = (V_2, C_2, \rank_2, \col_2)$ be combinatorial complexes equipped with neighborhood structures $\mathcal{N}_1$ and $\mathcal{N}_2$, respectively.  
A \emph{CC-isomorphism induced by $(\mathcal{N}_1,\mathcal{N}_2)$} is a bijective function $f : C_1 \to C_2$ such that  
\begin{enumerate}[left=8pt,itemsep=1pt,topsep=0pt]
    \item For all $\sigma,\tau \in C_1$, if $\tau \in \mathcal{N}_1(\sigma)$ then $f(\tau) \in \mathcal{N}_2(f(\sigma))$,  
    \item The inverse $f^{-1}$ is also a CC-homomorphism induced by $(\mathcal{N}_2,\mathcal{N}_1)$, and  
    \item If $\col_1(\sigma) = \col_1(\tau)$, then $\col_2(f(\sigma)) = \col_2(f(\tau))$. 
\end{enumerate}
\textbf{Topological neural networks (TNNs)} \citep{bodnar2021weisfeiler,papillon2024topotune,papillon2023architectures} employ message-passing layers to obtain cell-level representations. In particular, let $\mathcal{N}_{\text{all}}=\{\mathcal{N}_B, \mathcal{N}_C, \mathcal{N}_\downarrow, \mathcal{N}_\uparrow\}$ be the set of neighborhood structures. 
In its general form, starting from $h^{0}_x=c(x)$ for all cells $x$, message-passing TNNs recursively compute:
\begin{align}\label{eq:update-tnn}
    h_x^{\ell+1} &= \varphi\left(h_x^{\ell}, \bigotimes_{\mathcal{N} \in \mathcal{N}_{\text{all}}} \mathrm{Agg}_\ell \left(\mathcal{M}^{\ell}_{\mathcal{N},x} \right)\right)
\end{align}
where $h_x^{\ell}$ is the embedding of $x$ at layer $\ell$, $\bigotimes$ and $\mathrm{Agg}_\ell$ are inter- and intra-neighborhood aggregation functions, respectively, $\varphi$ is an update function (e.g., MLP), and $\mathcal{M}^\ell_{\mathcal{N},x}$ denotes a multiset of messages. Let us define $\mathcal{N}_C(x, y)=\mathcal{N}_C(x) \cap \mathcal{N}_C(y)$ and $\mathcal{N}_B(x, y)=\mathcal{N}_B(x) \cap \mathcal{N}_B(y)$. In the literature, two main variants of message computation have been introduced:
\begin{align}\label{eq:var1}
\mathcal{M}^{\ell}_{\mathcal{N}, x} &= \begin{cases}
\{\!\!\{\phi_{\ell, \mathcal{N}}(h_y^{\ell}, h_x^{\ell}, h_\delta^{\ell}): y \in \mathcal{N}(x), \delta \in \mathcal{N}_B(x, y)\}\!\!\}, \text{ if } \mathcal{N} = 
\mathcal{N}_{\downarrow}\\
\{\!\!\{\phi_{\ell, \mathcal{N}}(h_y^{\ell}, h_x^{\ell}, h_\delta^{\ell}): y \in \mathcal{N}(x), \delta \in \mathcal{N}_C(x, y)\}\!\!\}, \text{ if } \mathcal{N} = \mathcal{N}_{\uparrow}\\
\{\!\!\{\phi_{\ell, \mathcal{N}}(h_y^{\ell}, h_x^{\ell}): y \in \mathcal{N}(x)\}\!\!\}, \text{otherwise}.
\end{cases}
\end{align}
Similar to the equivalence between 1-WL and GNNs, we can define corresponding isomorphism tests for TNNs. The key idea consists of adopting injective (or hash) functions for the maps $\phi, \bigotimes, \mathrm{Agg}, \varphi$ in \cref{eq:update-tnn,eq:var1}.



\textbf{The $k$-WL Algorithm.} The $k$-dimensional Weisfeiler–Leman algorithm ($k$-WL) extends the classical color refinement to $k$-tuples of vertices \citep{grohe2015pebble, grohe2017descriptive, cai1992optimal}. Let $G=(V,E,\col)$ be a vertex-colored graph. For $\mathbf{v}=(v_1,\dots,v_k)\in V^k$, the \emph{atomic type} $\operatorname{atp}_{G,k}(\mathbf{v})$ encodes the equality pattern among coordinates, the adjacency relations $(v_i,v_j)\in E$, and the initial colors $\col(v_i)$. Initialization is given by  
\begin{align}
\mathsf{wl}_k^{(0)}(\mathbf{v}) := \operatorname{atp}_{G,k}(\mathbf{v}).
\end{align}
At iteration $t+1$, the label of $\mathbf{v}$ is refined using substitutions by all $u\in V$:  
\begin{align}
\mathsf{wl}_k^{(t+1)}(\mathbf{v}) 
:= \Big(
   \mathsf{wl}_k^{(t)}(\mathbf{v}),\;
   \left\{\!\!\!\left\{
      \big(\operatorname{atp}_{k+1}(\mathbf{v}, u),\,
      \mathsf{wl}_k^{(t)}(\mathbf{v}[u/1]),\,
      \ldots,\,
      \mathsf{wl}_k^{(t)}(\mathbf{v}[u/k])\big)
      : u \in V
   \right\}\!\!\!\right\}
\Big)
\end{align}
Here $\mathbf{v}[u/i]$ denotes the $k$-tuple obtained by replacing $v_i$ with $u$. The process stabilizes after finitely many steps, yielding the stable coloring $\mathsf{wl}_k^{(\infty)}$. Two graphs $G,H$ are distinguished by $k$-WL if their multisets of stable colors differ.

\citet{morris2019weisfeiler} use a slightly different method as $k$-WL. To avoid confusion, we follow the method used in \cite{cai1992optimal} as $k$-WL, and recall the latter method as oblivious $k$-WL, or $k$-OWL.
\looseness=-1

\textbf{Logical expressivity and counting logic.} The $k$-variable counting logic $\CLogic{k}$ is the fragment of first-order logic restricted to variables $\{x_1,\dots,x_k\}$ and extended with counting quantifiers. For any $N \in \mathbb{N}$, formulas are generated by  
\[
\varphi := (x_i = x_j) 
\;\mid\; P_s(x_i) 
\;\mid\; E(x_i,x_j) 
\;\mid\; \lnot \psi 
\;\mid\; (\psi_1 \wedge \psi_2) 
\;\mid\; \exists^{N}x_i\,\psi,
\]  
where $P_s$ (for $s \in [\ell]$) are unary predicates encoding node attributes, and $E$ is the adjacency relation.
\looseness=-1

Semantics is defined over a vertex-colored graph $G=(V_G,E_G,\col_G)$. Let $C_s=\{v\in V_G : (\col_G(v))_s=1\}$ be the set of vertices that their $s$-th bit of attribute is $1$. For a (partial) valuation $\mu:\{x_1,\dots,x_k\}\rightharpoonup V_G$, the satisfaction relation $\pairLogic{G}{\mu}\models\phi$ is defined as follows:  
\begin{multicols}{2}
\begin{itemize}[left=8pt,itemsep=1pt,topsep=0pt]
    \item $G,\mu \models P_s(x_i)$ iff $\mu(x_i)\in C_s$.
    \item $G,\mu \models (x_i=x_j)$ iff $\mu(x_i)=\mu(x_j)$.
    \item $G,\mu \models E(x_i,x_j)$ iff $(\mu(x_i),\mu(x_j))\in E_G$.
    \item $G,\mu \models \lnot \psi$ iff $G,\mu \not\models \psi$.
\end{itemize}
\end{multicols}
\vspace{-1.6em}
\begin{itemize}[left=8pt,itemsep=1pt,topsep=0pt]
    \item $G,\mu \models (\psi_1\wedge \psi_2)$ iff $G,\mu \models \psi_1$ and $G,\mu \models \psi_2$.
    \item $G,\mu \models \exists^{N}x_i\,\psi$ iff there exist at least $N$ distinct elements 
    $c\in V_G$ such that $G,\mu[x_i\mapsto c]\models \psi$.
\end{itemize}
A key result of \citet{cai1992optimal} shows that $\CLogic{k+1}$ has exactly the same distinguishing power as the $k$-dimensional Weisfeiler–Leman algorithm ($k$-WL). Moreover, two graphs can be distinguished by $k$-WL if and only if they can be separated by some $\CLogic{k+1}$ formulas. Therefore, the iterative refinement of $k$-WL captures precisely those properties which can be defined with $k{+}1$ variables and counting quantifiers. This explains why increasing $k$ strictly increases expressivity, and why the power of GNNs and their higher-order variants can be analyzed by counting logic.
\looseness=-1
\textbf{Pebble Game.}  
The $\CLogic{k}$-pebble game, introduced by \citet{immerman1990describing}, gives a game-theoretic view of $\CLogic{k}$. The game is played between two players on two graphs $A$ and $B$ with $k$ pairs of pebbles $(a_1,b_1),\dots,(a_k,b_k)$. At any moment, the current pebble placements define a partial mapping from vertices of $A$ to
vertices of $B$.
In each round, Player I selects a subset $X \subseteq B$; Player II must respond with a subset $Y \subseteq A$ of equal size. Then, Player I chooses an element of $Y$ to place a pebble $a_i$ there, and Player II must place the matching $b_i$ on an element of $X$. Player I wins immediately if the partial mapping defined by the pebbles is not a partial isomorphism (i.e., fails to preserve adjacency relations or equality).  

Player II has a winning strategy for $q$ rounds if she can always maintain a partial isomorphism for that many rounds. In this case, $A$ and $B$ satisfy the same $\CLogic{k}$-sentences of quantifier depth at most $q$. Indeed, the following are equivalent: $A$ and $B$ agree on all formulas in $\CLogic{k}$; $A$ and $B$ are indistinguishable by $k$-WL; and Player II has a winning strategy in the $k$-pebble game on $A$ and $B$.

\section{Higher-order topological isomorphism tests}\label{sec:higher-order-topological-isomorphism-tests}

This section introduces higher-order isomorphism tests associated with TNNs (using injective aggregate and update functions). In other words, we naturally extend the tuple-based procedure of $k$-WL to combinatorial complexes.

The construction mirrors $k$-WL on graphs: we start from an \emph{atomic type} for $k$-tuples that
encodes equality, ranks, colors, and the four topological adjacencies; we then iteratively refine
by “probing” all possible substitutions and aggregating the resulting colors. The key novelty is the
\emph{double shift} sequence, which simultaneously tracks two substitutions per position; this is
exactly what will force a $(k{+}2)$-variable requirement on the logic side in \cref{sec:topological-counting-logic}.

\begin{mybox}{$k$-CCWL isomorphism test ($k \geq 2$)}
Let $\Gamma = (S, \mathcal{X}, \rank, \col)$ be an ACC, and let $\mathbf{x} = (x_1, \dots, x_k) \in \mathcal{X}^k$ be a $k$-tuple of cells. The \emph{rank sequence} of $\mathbf{x}$ is defined as $\mathrm{rank}(\mathbf{x}) = (\rank(x_1), \dots, \rank(x_k))$. The \emph{atomic type} of a $k$-tuple $\mathbf{x}$, denoted $\operatorname{atp}_k(\mathbf{x})$, encodes structural and attribute-level information about the tuple. It consists of the rank sequence of $\mathbf{x}$, the initial colors $\col(x_i)$ for $i = 1,\dots,k$, and five binary vectors that describe all pairwise relations among components of $\mathbf{x}$: equality, boundary adjacency, coboundary adjacency, lower adjacency, and upper adjacency. Each relation vector shows if the corresponding pair of cells satisfies the relation.

\begin{enumerate}[itemsep=0pt,topsep=0pt,left=8pt]
    \item \textbf{Initialization}: For each $\mathbf{x} \in \mathcal{X}^k$, we define the initial label:
$
\chi_k^{(0)}(\mathbf{x}) = \operatorname{atp}_k(\mathbf{x}).
$

\item \textbf{Update}: For \( t \ge 1 \), the label of each \( \mathbf{x} \in \mathcal{X}^k \) is updated as:
$\chi_k^{(t)}(\mathbf{x}) = \mathrm{HASH}\big( \chi_k^{(t-1)}(\mathbf{x}),\; \mathcal{D}_k^{(t)}(\mathbf{x}) \big),$
where \( \mathcal{D}_k^{(t)}(\mathbf{x}) \) is a multiset of refinement contexts, which is defined as follows:
\[
\mathcal{D}_k^{(t)}(\mathbf{x}) = \mulset{ 
\left( \operatorname{atp}_{k{+}2}(\mathbf{x}\alpha \beta),\;
\Delta_k^{(t-1)}(\mathbf{x}; \alpha, \beta) \right)
\;\middle|\; \alpha, \beta \in \mathcal{X} }.
\]

Here, \( \Delta_k^{(t-1)}(\mathbf{x}; \alpha, \beta) \) denotes the \emph{double shift sequence}:
\[
\Delta_k^{(t-1)}(\mathbf{x}; \alpha, \beta) = 
\left\langle 
\begin{pmatrix}
\chi_k^{(t-1)}(\mathbf{x}[\alpha / i]) \\
\chi_k^{(t-1)}(\mathbf{x}[\beta / i])
\end{pmatrix}
\right\rangle_{i=1}^k,
\]
\item \textbf{Termination}: The process continues until convergence: $\chi_k^{(t+1)}(\mathbf{x}) = \chi_k^{(t)}(\mathbf{x})$ for all $\mathbf{x} \in \mathcal{X}^k$. The final stable labeling is denoted $\chi_k^{(\infty)}(\mathbf{x})$.
\end{enumerate}
\end{mybox}

In both CCWL and $k$-CCWL, after each refinement step, we aggregate the tuple labels to form a global label for the complex. For an ACC $\Gamma=(S,\mathcal{X},\rank,\col)$ and iteration $t$, this is defined as
\[
\chi_{k,\Gamma}^{(t)} \;=\; \mulset{\chi_k^{(t)}(\mathbf{x}) \;\mid\; \mathbf{x}\in \mathcal{X}^k}.
\]

Given two ACCs $A=(S_A,\mathcal{X}_A,\rank_A,\col_A)$ and $B=(S_B,\mathcal{X}_B,\rank_B,\col_B)$, the $k$-CCWL procedure \emph{distinguishes} $A$ and $B$ if their stable global signatures differ, i.e.,
\[
\chi_{k,A}^{(\infty)} \;\neq\; \chi_{k,B}^{(\infty)}.
\]

\paragraph{Complexity.} Let $n = |\mathcal{X}|$. Since we work with finite ACCs, the refinement process induced by $k$-CCWL stabilizes after finitely many iterations (at most $n^k - 1$), so the stable coloring is well-defined (see \Cref{lem:termination-k-ccwl}). Additionally, we can use $O(n^{\max(2,k)})$ space to encode all adjacencies, ranks, and attributes. Updating the colors of all $k$-tuples can be done in $O(n^{k+2})$. Thus, the time complexity of $k$-CCWL is $O(n^{2k+2})$.

For isomorphism testing, we augment each complex by introducing a fresh $0$-cell connected to every existing $0$-cell via new $1$-cells, all initialized with a distinct new color. We then run the $k$-CCWL refinement on these augmented complexes and compare their stable global signatures. Intuitively, the fresh $0$-cell acts as a \emph{broadcast anchor}: any local color difference between the two complexes is propagated through this anchor to all other cells during the refinement process. Importantly, adding the broadcast-anchor cells does not change whether the original complexes are CC-isomorphic; we only mark the anchor and its incident $1$-cells as new elements by choosing distinct colors from the rest of the complex. As a consequence, once stabilization is reached, the global signatures of the two augmented complexes cannot partially overlap: \emph{they are either exactly the same or completely disjoint}.
\looseness=-1

\begin{proposition}[Identical-vs-disjoint]\label{prop:disjoint-or-identical-labelings}
Let \( A = (S_A, \mathcal{X}_A, \rank_A, \col_A) \) and \( B = (S_B, \mathcal{X}_B, \rank_B, \col_B) \) be two uniform ACCs, and let \( k \ge 1 \). Suppose the \( k \)-CCWL algorithm is applied to both \( A \) and \( B \). Then exactly one of the following holds:
\[
\chi_{k,A}^{(\infty)} = \chi_{k,B}^{(\infty)}
\quad\text{or}\quad
\chi_{k,A}^{(\infty)} \cap \chi_{k,B}^{(\infty)} = \emptyset.
\]
\end{proposition}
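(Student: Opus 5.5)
We argue that, once the colouring is stable, the colour of a single $k$-tuple already records the whole global signature $\chi_{k,\Gamma}^{(\infty)}$. If that holds, any colour shared by $A$ and $B$ forces the two multisets to coincide. We work with the augmented complexes described above, with the broadcast anchor $o$ and its incident $1$-cells. Since $A$ and $B$ are uniform, every cell is reached from a $0$-cell by finitely many $\boundary,\coboundary,\lowerLaplacian,\upperLaplacian$ steps. Every $0$-cell is connected to $o$ via a fresh $1$-cell. Hence every cell has a finite neighbourhood-path to $o$, and $o$ is recognisable from its distinct initial colour, which is recorded in $\operatorname{atp}_k$.

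The first step is to show that the stable colour of any tuple determines the multiset of stable colours of all tuples. Fix $\mathbf{x}$ and apply the update rule with arbitrary $\alpha,\beta\in\mathcal{X}$. The multiset $\mathcal{D}_k^{(t)}(\mathbf{x})$ contains, for each coordinate $i$, the colours $\chi_k^{(t-1)}(\mathbf{x}[\alpha/i])$ for all $\alpha$. So one refinement step lets us see, as a multiset, the colours of all tuples differing from $\mathbf{x}$ in one position. This step is global because $\alpha$ ranges over all of $\mathcal{X}$, not only over neighbours.

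Iterating this $k$ times, the stable colour $\chi_k^{(\infty)}(\mathbf{x})$ determines the multiset of colours of tuples obtained by successively replacing coordinates $1,\dots,k$. That is, it determines
\[
\mulset{\chi_k^{(\infty)}(\mathbf{y}) : \mathbf{y}\in\mathcal{X}^k},
\]
with each $\mathbf{y}$ counted exactly once. For this we use the standard induction: stability means $\chi^{(\infty)}$ refines one further step, so the nested multisets are functions of $\chi^{(\infty)}(\mathbf{x})$. We also need the hash to be injective across the two complexes, since the refinement runs jointly. The multiplicities are then correct because each $\mathbf{y}$ arises from exactly one sequence of replacements.

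The anchor's role is to make this argument rest on a common refinement of $A\sqcup B$, and to handle $k=1$. For $k=1$ the update is neighbourhood-based rather than global. There, uniformity together with $o$ being adjacent to all $0$-cells gives a path of bounded length from every cell to $o$. So the stable colour of any cell determines the stable colour of $o$, and $o$'s colour determines the multiset of colours of all $0$-cells, hence inductively of all cells. This inductive step is the main obstacle: counting multiplicities along paths requires care. We plan to use the fact that the stable partition is equitable with respect to each $\mathcal{N}$. Each colour class's size is then fixed by the colour of $o$ through these equitable counts, and $|\{o\}|=1$ normalises them.

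To conclude, suppose some colour $c$ lies in $\chi_{k,A}^{(\infty)}\cap\chi_{k,B}^{(\infty)}$, realised by $\mathbf{x}$ in $A$ and $\mathbf{x}'$ in $B$. The first step then gives $\chi_{k,A}^{(\infty)}=\chi_{k,B}^{(\infty)}$. Otherwise the two signatures are disjoint. Exactly one of the two cases holds, since both signatures are nonempty: the augmentation guarantees at least one cell.
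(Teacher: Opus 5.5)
Your proposal is correct and follows the paper's proof. For $k\ge 2$, your successive-replacement argument is the paper's induction over the sets $F^C_\Gamma(\mathbf{x},P)$, with the single-substitution multisets read off the diagonal $\alpha=\beta$ of $\mathcal{D}_k$ as in \Cref{lem:disequal}. For $k=1$, your two steps are the contrapositives of \Cref{lem:fresh-cell-broadcast} and \Cref{lem:aggregating-lemma}. Your equitable double counting from the singleton class $\{o\}$ is a cleaner version of the paper's base-distance pigeonhole argument. Run it with the $\boundary/\coboundary$ counts, because the lower/upper multisets are indexed by (neighbour, shared face) pairs, and lower/upper adjacency factors through a shared face anyway. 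Also, the anchor plays no role for $k\ge 2$: the common refinement comes from using the same injective hash and the same round on both complexes.
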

\noindent The proof of \Cref{prop:disjoint-or-identical-labelings} is provided in \Cref{sec:ccwl-kccwl-Appendix} of Appendix.
Our next result shows that $k$-CCWL is not less expressive than $(k{-}1)$-CCWL.
\begin{theorem}\label{thm:main-k-increases}
    For any $k \in \mathbb{N}$, there exist pairs of non-isomorphic ACCs that are distinguishable by $k$-CCWL but not by $(k{-}1)$-CCWL.
\end{theorem}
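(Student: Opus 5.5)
We reduce to the classical strict $k$-WL hierarchy on graphs by embedding graphs as $1$-dimensional ACCs. Given a vertex-colored graph $G=(V,E,\col)$, let $\Gamma(G)$ be the ACC with $0$-cells $\{v\}$ for $v\in V$ and $1$-cells $\{u,v\}$ for $\{u,v\}\in E$. The $1$-cells carry a fixed fresh color, and the $0$-cells inherit $\col$. This ACC is uniform, since every $1$-cell has a $0$-cell in its boundary. Moreover, $G\cong H$ iff $\Gamma(G)$ and $\Gamma(H)$ are CC-isomorphic, because a CC-isomorphism must preserve ranks through the boundary relation and hence restricts to a graph isomorphism on the $0$-cells. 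The statement then follows from the announced comparison \Cref{thm:ccwl-vs-wl-1d}: on $1$-dimensional ACCs, $k$-CCWL and $k$-WL distinguish exactly the same pairs. Taking the CFI pairs of \citet{cai1992optimal}, there are non-isomorphic graphs $G_k,H_k$ that are distinguished by $k$-WL but not by $(k{-}1)$-WL. Hence $\Gamma(G_k),\Gamma(H_k)$ are distinguished by $k$-CCWL and not by $(k{-}1)$-CCWL.

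If \Cref{thm:ccwl-vs-wl-1d} must not be invoked because it appears later, I would prove the two needed directions directly, and only for $1$-dimensional ACCs.

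\emph{(a) $(k{-}1)$-CCWL does not separate $\Gamma(G_k),\Gamma(H_k)$.} I would use the known fact that $(k{-}1)$-WL-equivalence of CFI graphs is witnessed by Duplicator in the bijective $k$-pebble game. I would then simulate a $(k{-}1)$-CCWL color on tuples of cells by a $C_{k+1}$-style formula on the graph, encoding each $1$-cell by its two endpoints. The danger is that this encoding doubles the variable count, so this step is the main obstacle. To avoid the doubling I would choose the CFI base graph with enough slack: take treewidth large relative to $k$, so that $G_k,H_k$ are indistinguishable by $(ck)$-WL for a suitable constant $c$. I would then \emph{separately} exhibit separation at level $k$ via a padded family. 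Concretely, I would let the pair for level $k$ be the smallest index $m$ such that the CFI family becomes CCWL-separable, i.e., separable at level $m$ but not at level $m-1$. This is always possible because $k$-CCWL refines as $k$ grows (the CCWL tuples contain the $(k{-}1)$-tuples as padded tuples with a repeated entry) and $n$-CCWL separates any two non-isomorphic complexes with $n$ cells. The argument so far, however, only yields a gap at some level between $k$ and $ck$, not necessarily at $k$ itself.

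To get a gap at \emph{every} $k$, the cleanest route is the exact equivalence on $1$-dimensional ACCs. I would check it by showing that the double shift with $(\alpha,\beta)$ is simulated by two consecutive single-substitution rounds of $k$-WL on the incidence structure, and conversely that each $k$-WL round is the diagonal case $\alpha=\beta$. Upper and lower adjacency of $1$-cells are definable from the boundary relation within the same tuple positions. So I expect the proof to rest on \Cref{thm:ccwl-vs-wl-1d} plus the CFI construction, with the adjacency-simulation step being the delicate part.
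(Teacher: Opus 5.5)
Your main route is the paper's own proof of \Cref{thm:k-strictly}: CFI pairs plus \Cref{thm:ccwl-vs-wl-1d}. The problem below therefore affects both. \Cref{thm:ccwl-vs-wl-1d} is false as stated. It fails in exactly the direction you need at level $k-1$, namely that $(k-1)$-WL-equivalent graphs give $(k-1)$-CCWL-equivalent ACCs.

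Take the $4\times 4$ rook's graph and the Shrikhande graph. Both are strongly regular with parameters $(16,6,2,2)$, so $2$-WL ($\equiv\CLogic{3}$) never separates them. In $2$-CCWL, however, take an ordered pair $(x_1,x_2)$ of upper-adjacent $0$-cells. The context $\mathcal{D}_2^{(1)}(x_1,x_2)$ records how many $(\alpha,\beta)$ make $\operatorname{atp}_4(x_1x_2\alpha\beta)$ say ``four distinct, pairwise upper-adjacent $0$-cells'', i.e.\ the number of $K_4$'s through the edge $x_1x_2$. This number is $2$ in the rook's graph, since every edge lies in exactly one row or column $K_4$. It is $0$ in the Shrikhande graph, whose vertex links are $6$-cycles. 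So $2$-CCWL separates the two ACCs after one round.

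Your verification plan cannot rescue the lemma. First, passing to the incidence structure already adds power, which is exactly the doubling you feared. Even $2$-WL run on the ACC structures separates these graphs: two disjoint $1$-cells spanning a $K_4$ have four common lower neighbours, which is impossible in the Shrikhande graph. Second, the simulation step is unjustified. $\operatorname{atp}_{k+2}(\mathbf{x}\alpha\beta)$ fixes the joint type of $k+2$ arbitrary cells, whereas single-substitution rounds of $k$-WL never have more than $k+1$ of them in scope. Third, lower adjacency of two $1$-cells is not definable ``within the same tuple positions''; it needs the shared $0$-cell as a witness.

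What does survive is the diagonal direction you mention. Restricting $\mathcal{D}_k$ to $\alpha=\beta$ of rank $0$ shows that $k$-CCWL refines $k$-WL on tuples of $0$-cells, so $k$-CCWL does separate $G_k^{\mathrm{CC}}$ and $H_k^{\mathrm{CC}}$. Nothing shows that $(k-1)$-CCWL fails on them, and by the example this cannot be inferred from $(k-1)$-WL-equivalence alone. Your treewidth-slack fallback does not help either: as you note, it only yields a gap somewhere between $k$ and $ck$.

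The missing ingredient is a direct proof that some specifically chosen pair is $(k-1)$-CCWL-equivalent but $k$-CCWL-separable. A usable criterion comes from two inclusions over a fixed pair of finite structures:
\begin{itemize}
\item $\TCLogic{k+1}\subseteq\CLogic{k+1}$: rewrite $\exists^{N}(x_i,x_j)\psi$ as a finite Boolean combination of formulas $\exists^{\ge n}x_i\,\exists^{=m}x_j\,\psi$, using no new variables.
\item $\CLogic{k+1}\subseteq\TCLogic{k+2}$: use $\exists^{N}(x_i,x_{k+2})(\psi\wedge x_i=x_{k+2})$ with $x_{k+2}$ a dummy.
\end{itemize}
Via the paper's triad, it then suffices to find ACCs that are $\CLogic{k+1}$-equivalent as ACC structures yet separated by a $\TCLogic{k+2}$ sentence exploiting genuine pair counting. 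No CFI pair that $k$-WL separates can satisfy this. To keep CFI pairs, you would instead have to show directly that Duplicator wins the $\TCLogic{k+1}$ game on a carefully chosen base graph. That game must account for $k-1$ fixed pebbles, two pebbles landing at once, and pebbles on $1$-cells. Citing an arbitrary CFI pair does not provide this.
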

\noindent The proof of \Cref{thm:main-k-increases} is provided in \Cref{app:ccwl-graph-relation-wl} of Appendix.
\section{Topological Counting Logic}\label{sec:topological-counting-logic}
In what follows, we consider ACCs of the form \( \Gamma = (S, \mathcal{X}, \rank, \col) \), where \( \col: \mathcal{X} \to \{0,1\}^\ell \) assigns binary attribute vectors and \( \rank: \mathcal{X} \to \{0,1,\dots,\rho\} \) assigns a rank to each cell. We define the \( k \)-variable fragment \( \ETCLogic{k} \) of first-order logic extended with a pairwise counting quantifier. Formulas \( \phi \in \ETCLogic{k} \) are built over the variable set \( \{x_1, \dots, x_k\} \) and follow the grammar:
\begin{equation}
    \phi := (x_i = x_j) \mid R_r(x_i) \mid P_s(x_i) \mid E^\mathcal{N}(x_i,x_j) \mid \psi_1 \wedge \psi_2 \mid \lnot \psi \mid \exists^N(x_i, x_j)\, \psi,
\end{equation}
where \( i, j \in \{1, \dots, k\} \) are distinct. The symbols \( R_r \) for \( r \in \{0,1,\dots,\rho\} \) and \( P_s \) for \( s \in \{1,\dots,\ell\} \) are unary predicates encoding rank and bitwise color attributes, respectively. The binary predicate \( E^\mathcal{N} \), for \( \mathcal{N} \in \{\mathcal{N}_B,\mathcal{N}_C, \mathcal{N}_\uparrow, \mathcal{N}_\downarrow\} \), captures boundary, coboundary, lower, and upper adjacency relations.

The set of free variables in a formula $\phi \in \ETCLogic{k}$, denoted $\mathrm{free}(\phi)$, determines the variables that are not bound by quantifiers and is defined inductively as follows:

\begin{tabular}{@{}p{0.96\linewidth}@{}}
1. $\mathrm{free}(x_i = x_j) \;=\; \mathrm{free}(E^{\mathcal{N}}(x_i, x_j)) \;=\; \{x_i, x_j\}$ \\[4pt]
\end{tabular}
\begin{tabular}{@{}p{0.48\linewidth}p{0.48\linewidth}@{}}
2. $\mathrm{free}(R_r(x_i)) \;=\; \mathrm{free}(P_s(x_i)) \;=\; \{x_i\}$ &
3. $\mathrm{free}(\phi_1 \wedge \phi_2) \;=\; \mathrm{free}(\phi_1) \cup \mathrm{free}(\phi_2)$ \\[4pt]
4. $\mathrm{free}(\lnot \phi) \;=\; \mathrm{free}(\phi)$ &
5. $\mathrm{free}(\exists^N(x_i, x_j)\,\phi) \;=\; \mathrm{free}(\phi) \setminus \{x_i, x_j\}$
\end{tabular}
\begin{figure}[H]
    \centering
    \includegraphics[scale = 0.6]{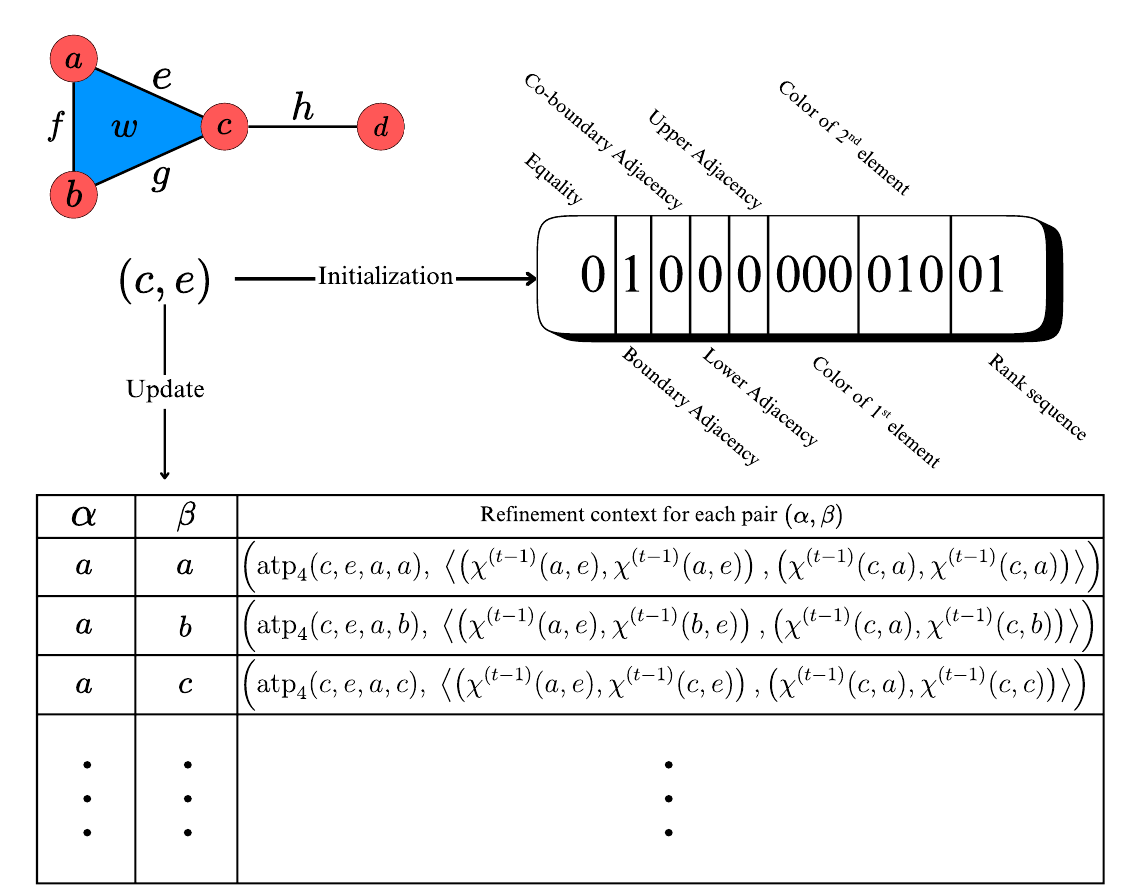}
    \caption{Example of one refinement round in the \textbf{2-CCWL} 
on an ACC where all vertices share the same color, all edges share the same color, 
and there is a single 2-cell (the triangular face). 
The chosen tuple $(c,e)$, consisting of a vertex $c$ and an edge $e$, is first assigned 
its \emph{atomic type}, which includes pairwise equality, adjacency relations, colors, 
and the rank sequence. 
During the \emph{update step}, all substitutions $(\alpha,\beta)$ of cells into the tuple 
are considered, and for each pair the \emph{refinement context} is formed from the extended 
atomic type and the double shift sequence. The resulting collection of contexts is then hashed 
with the initial label to produce the updated label at round $t$.
\looseness=-1
}
\label{fig:placeholder}
\end{figure}
A partial function $\mu:\{x_1,\dots, x_k\} \rightharpoonup \mathcal{X}$ has a domain $\operatorname{dom}_\mu\subseteq \{x_1,\dots, x_k\}$ on which each $x\in\operatorname{dom}_\mu$ maps to exactly one cell of $\Gamma$. The semantics of formulas in $\ETCLogic{k}$ is defined in terms of interpretations relative to a given ACC $\Gamma$ and a partial valuation $\mu$. This interpretation determines whether a formula $\phi$ is satisfied by a given ACC $\Gamma$ and a partial valuation $\mu$ with $\mathrm{free}(\phi) \subseteq \operatorname{dom}_\mu$, denoted as $\pairLogic{\Gamma}{\mu}\models\phi$. To be more precise, we define:
\begin{itemize}[left=8pt,itemsep=1pt,topsep=0pt]
  \item \( \pairLogic{\Gamma}{\mu} \models R_r(x_i) \) if and only if \( \rank(\mu(x_i)) = r \).
  \item \( \pairLogic{\Gamma}{\mu} \models P_s(x_i) \) if and only if the \( s \)-th bit of \( \col(\mu(x_i)) \) is \( 1 \).
  \item \( \pairLogic{\Gamma}{\mu} \models E^\mathcal{N}(x_i, x_j) \) if and only if \( \mu(x_j) \in \mathcal{N}(\mu(x_i)) \).
  \item \( \pairLogic{\Gamma}{\mu} \models (x_i = x_j) \) if and only if \( \mu(x_i) = \mu(x_j) \).
  \item \( \pairLogic{\Gamma}{\mu} \models (\phi \wedge \psi) \) if and only if \( \pairLogic{\Gamma}{\mu} \models \phi \) and \( \pairLogic{\Gamma}{\mu} \models \psi \).
  \item \( \pairLogic{\Gamma}{\mu} \models \lnot \phi \) if and only if \( \pairLogic{\Gamma}{\mu} \not\models \phi \).
  \item \( \pairLogic{\Gamma}{\mu} \models \exists^N(x_i, x_j)\, \psi \) if and only if
  $\left|\left\{(c_1, c_2) \in \mathcal{X}^2 \mid \pairLogic{\Gamma}{\mu[x_i \mapsto c_1,\, x_j \mapsto c_2]} \models \psi \right\}\right| \ge N.$
\end{itemize}
In the last case, $\mu[x_i \mapsto c_1,\, x_j \mapsto c_2]$ denotes the partial valuation obtained by extending $\mu$ to include the assignments $x_i$ to $c_1$ and $x_j$ to $c_2$. Moreover, when \(\mathrm{free}(\phi)=\emptyset\), we simply write
$
\Gamma \models \phi
$
instead of
$
\pairLogic{\Gamma}{\mu} \models \phi,
$
since the choice of valuation \(\mu\) is then immaterial.

\paragraph{Limitations.} Note that $\TCLogic{k}$ is still a finite-variable counting logic and it inherits the usual locality restrictions of $\CLogic{k}$. Many global graph/topological properties need unbounded $k$ and cannot be expressed by $\TCLogic{k}$ for any fixed $k$, e.g.: (1) classic global graph properties that $k$-WL cannot capture for fixed $k$, such as connectivity, existence of Hamiltonian path or a perfect matching. (2) topological invariants that depend on the overall pattern of ``holes'' and voids across the entire complex, where the difference between two complexes may only appear in very large cycles that cannot be detected from any fixed-radius local neighborhood. Note that \citet{eitan2024topologicalblindspotsunderstanding} shows that higher-order message passing on CCs has ``blindspots'' (e.g., for diameter, orientability, planarity, homology). Indeed, our hierarchy is complementary: many such blindspots correspond to properties requiring unbounded quantification and thus lie outside $\TCLogic{k+2}$.

The $k$-CCWL procedure refines tuple colors step by step, and each update depends on how many pairs of cells $(\alpha,\beta)$ yield a given atomic type for the extended tuple and produce a specific color pattern.  
On the logic side, $\ETCLogic{k{+}2}$ can describe exactly this situation: with $k$ variables for the tuple and two extra variables, it can state, “there are at least $N$ such pairs $(\alpha,\beta)$ making the extension satisfy property $\varphi$”.  
Thus, each refinement step in $k$-CCWL can be seen as the logical counterpart of adding one more layer of quantifiers in $\ETCLogic{k{+}2}$. Formally, we have

\begin{theorem}\label{thm:eccwl-logic-direction}
  Let \( A = (S_A, \mathcal{X}_A, \rank_A, \col_A) \) and \( B = (S_B, \mathcal{X}_B, \rank_B, \col_B) \) be two ACCs, and let \( u_A \in \mathcal{X}_A^k \), \( u_B \in \mathcal{X}_B^k \) be \(k\)-tuples of cells from \( A \) and \( B \), respectively. If for every formula \( \phi \in \ETCLogic{k{+}2} \), we have
\[
\pairLogic{A}{u_A} \models \phi \iff \pairLogic{B}{u_B} \models \phi,
\]
then the $k$-CCWL colorings agree:
\[
\cccwl{k}{\infty}{A}{u_A} = \cccwl{k}{\infty}{B}{u_B}.
\]
\end{theorem}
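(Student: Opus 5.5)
We argue by the contrapositive, via induction on the refinement round. The claim we prove is: for every $t \ge 0$ and every color $c$ that occurs at round $t$, there is a formula $\phi^{t}_{c}(x_1,\dots,x_k) \in \ETCLogic{k+2}$ with free variables among $x_1,\dots,x_k$ such that, for every ACC $\Gamma$ and every $\mathbf{x} \in \mathcal{X}_\Gamma^k$,
\[
\pairLogic{\Gamma}{\mathbf{x}} \models \phi^t_c \iff \chi_k^{(t)}(\mathbf{x}) = c .
\]
Here $\mathbf{x}$ denotes the valuation $x_i \mapsto \mathbf{x}_i$. Since only finitely many colors are realized in $A$ and $B$, and the refinement stabilizes (\Cref{lem:termination-k-ccwl}), we can take $t$ large enough that $\chi^{(t)}_k = \chi^{(\infty)}_k$ on both complexes. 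The formula $\phi^t_c$ with $c = \cccwl{k}{\infty}{A}{u_A}$ then holds at $(A, u_A)$. By the hypothesis it also holds at $(B, u_B)$, which forces $\cccwl{k}{\infty}{B}{u_B} = c$.

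\textbf{Base case.} The atomic type $\operatorname{atp}_k(\mathbf{x})$ consists of the rank sequence, the colors, and the five relation vectors on pairs. It is therefore a conjunction of (possibly negated) atoms $R_r(x_i)$, $P_s(x_i)$, $x_i = x_j$ and $E^{\mathcal{N}}(x_i,x_j)$. This conjunction defines the type exactly and uses only $x_1,\dots,x_k$.

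\textbf{Inductive step.} The color $\chi^{(t)}_k(\mathbf{x})$ is determined by the pair $\chi^{(t-1)}_k(\mathbf{x})$ and $\mathcal{D}^{(t)}_k(\mathbf{x})$. The first component is handled by the induction hypothesis. For the multiset, fix a context $\tau = (a, \langle (c_i, d_i)\rangle_{i=1}^k)$. Let $\psi_\tau(x_1,\dots,x_{k+2})$ be the conjunction of three pieces:
\begin{itemize}[left=8pt,itemsep=1pt,topsep=0pt]
\item the atomic-type formula for $a$ over $x_1,\dots,x_{k+2}$;
\item for each $i$, the formula $\phi^{t-1}_{c_i}$ with $x_i$ replaced by $x_{k+1}$;
\item for each $i$, the formula $\phi^{t-1}_{d_i}$ with $x_i$ replaced by $x_{k+2}$.
\end{itemize}
The multiplicity of $\tau$ in $\mathcal{D}^{(t)}_k(\mathbf{x})$ is exactly the number of pairs $(\alpha,\beta)$ satisfying $\psi_\tau$ when $x_{k+1} \mapsto \alpha$ and $x_{k+2} \mapsto \beta$. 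A multiplicity of exactly $m$ is expressed by
\[
\exists^{m}(x_{k+1},x_{k+2})\,\psi_\tau \;\wedge\; \lnot\exists^{m+1}(x_{k+1},x_{k+2})\,\psi_\tau .
\]
This is precisely why the paper introduces the pairwise quantifier. We then take the conjunction over the finitely many contexts occurring in $A \cup B$ with their exact counts. To exclude contexts outside this finite list, we add a conjunct stating that the total number of pairs equals $|\mathcal{X}|^2$. Alternatively, we can define $\phi^t_c$ relative to the finite set of colors realized in $A$ and $B$, which suffices for the contrapositive.

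\textbf{Main obstacle: variable reuse.} We must perform the substitutions $x_i \mapsto x_{k+1}$ and $x_i \mapsto x_{k+2}$ while staying within $k+2$ variables. The formulas $\phi^{t-1}$ themselves quantify over $x_{k+1}$ and $x_{k+2}$ internally, so a naive substitution would capture the new free variable. The standard remedy is to strengthen the induction hypothesis: we build $\phi^{t}_c$ for every injective choice of $k$ variables among the $k+2$ available, not only for $x_1,\dots,x_k$. Concretely, for each $k$-tuple $\mathbf{y} = (y_1,\dots,y_k)$ of distinct variables from $\{x_1,\dots,x_{k+2}\}$, we construct $\phi^t_c(\mathbf{y})$. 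The bound variables of this formula are then the two variables not in $\mathbf{y}$.

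With this strengthening, substituting $x_{k+1}$ into position $i$ yields a tuple of $k$ distinct variables, namely $\mathbf{y}$ with its $i$-th entry replaced by $x_{k+1}$, since the old $y_i$ is dropped. The inductive construction at this new tuple quantifies over the two remaining variables. These include $y_i$, which is currently unused in that subformula. The same holds for $x_{k+2}$.

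The delicate point is to check that the bound pair in each quantifier is always disjoint from the current free variables, and that the symmetric definitions over all variable renamings remain consistent. We handle this by a simultaneous induction over all renamings, as in the Cai–Fürer–Immerman proof for $\CLogic{k+1}$ and $k$-WL.
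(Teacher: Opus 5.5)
Your proof is correct and follows the same route as the paper's: induction on the refinement round, the atomic-type formula for the base case, and in the inductive step a pairwise quantifier $\exists^{N}(x_{k+1},x_{k+2})$ applied to the atomic-type formula of the extended tuple conjoined with the shifted color-defining formulas. Your strengthening of the induction to every injective choice of $k$ of the $k+2$ variables is more rigorous than the paper, whose substitutions $\psi_{\tilde{C}_i}[x_i/x_{k+1}]$ do not address that $x_{k+1},x_{k+2}$ are already bound inside $\psi_{\tilde{C}_i}$.
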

\textbf{Proof sketch.}  
The proof proceeds by induction on the number of refinement steps $t$.  
For the base case ($t=0$), we use the agreement on all quantifier-free formulas to show that the initial atomic types, and therefore, the initial colors, are the same.  
For the inductive step, we assume the claim holds up to step $n-1$.  
If a difference appears at step $n$, we have two cases: (1) It happened at step $n-1$ (contradicting the induction hypothesis) or (2) comes from the neighborhood aggregation.  
In the latter case, we can construct a formula of quantifier depth $n$ in $\ETCLogic{k{+}2}$ that separates the two structures, which again contradicts the assumption.  
Therefore, the colors must agree at every step. The full proof of \Cref{thm:eccwl-logic-direction} is provided in \Cref{sec:appendix-logic-coloring} of Appendix.

\section{Topological Pebble Game}\label{sec:topological-pebble-game}
Let \( A = (S_A, \mathcal{X}_A, \rank_A, \col_A) \) and \( B = (S_B, \mathcal{X}_B, \rank_B, \col_B) \) be two ACCs, and let 
\( u_A, u_B : \{x_1, \dots, x_k\} \rightharpoonup \mathcal{X}_A, \mathcal{X}_B \) be partial valuations such that 
\( \operatorname{dom}(u_A) = \operatorname{dom}(u_B) \). We say that \( (A, u_A) \) and \( (B, u_B) \) are \emph{\(k\)-dimensionally structurally similar}, which is written as
$(A, u_A) \sim_k (B, u_B)$,
if for every pair of variables \( x_i, x_j \in \operatorname{dom}(u_A) \), the following conditions hold:
\begin{enumerate}[left=8pt,itemsep=1pt,topsep=0pt]
  \item \( u_A(x_i) = u_A(x_j) \iff u_B(x_i) = u_B(x_j) \)
  \item \( \rank_A(u_A(x_i)) = \rank_B(u_B(x_i)) \) 
  \item \( \col_A(u_A(x_i)) = \col_B(u_B(x_i)) \) 
  \item \( u_A(x_i) \in \mathcal{N}(A, u_A(x_j)) \iff u_B(x_i) \in \mathcal{N}(B, u_B(x_j)) \) 
  for all \( \mathcal{N} \in \{\mathcal{N}_B, \mathcal{N}_C, \mathcal{N}_\uparrow, \mathcal{N}_\downarrow\} \)
\end{enumerate}
Note that the above conditions show a local CC-isomorphism between the subcomplexes of $A$ and $B$, which is induced by $u_A$ and $u_B$.  
In particular, $(A,u_A) \sim_k (B,u_B)$ confirms that the mapping $u_A(x_i) \mapsto u_B(x_i)$ preserves equalities between cells, their ranks,
their labels, and all of their neighborhood relations. In other words, it satisfies exactly the requirements of a labeled CC-isomorphism restricted to the cells in the domain of the (partial) valuations.  
Therefore, we can take $k$-dimensional structural similarity as a local form of CC-isomorphism, defined by partial valuations. Here, $\mathcal{N}(A, x)$ denotes the neighborhood of cell $x$ in the combinatorial complex $A$, making explicit which complex the neighborhood is taken with respect to.

The topological pebble game in Box 1 below is designed to match exactly the semantics of $\TCLogic{k}$.
Intuitively, the pairwise counting quantifier in the logic is reflected by the game rule where Player I selects a set of cell pairs in one complex and Player II must answer with a set of the same size in the other.  
To keep winning, Player II must choose the responses so that all ranks, colors, and neighborhood relations remain consistent between the two structures. 
This means that if Player II can survive for \(t\) rounds without losing, then the two structures cannot be distinguished by any \(\TCLogic{k}\) formula with quantifier depth at most \(t\). 
\begin{mybox}{Box 1: Topological Pebble Game $\ETCLogic{k}$}\label{box:topological_peble_game}
The topological pebble game is a two-player combinatorial game played between Player I (the \emph{Spoiler}) and Player II (the \emph{Duplicator}) on two ACCs $A = (S_A, \mathcal{X}_A, \rank_A, \col_A)$ and \( B = (S_B, \mathcal{X}_B, \rank_B, \col_B) \). The state of the game is described by a pair of partial valuations \( \mu_A, \mu_B : \{x_1, \dots, x_k\} \rightharpoonup \mathcal{X}_A, \mathcal{X}_B \), which assign up to \( k \) variables to cells in \( A \) and \( B \), respectively. Each round of the game proceeds as follows:

\begin{enumerate}[label=\textbf{Step \arabic*:}, left=0pt]
  \item Player I chooses one of the complexes (say $A$) and selects a finite set of ordered pairs $P_A \subseteq \mathcal{X}_A^2$.
  \item Player II must respond with a set \( P_B \subseteq \mathcal{X}_B^2 \) such that \( |P_B| = |P_A| \).
  \item Player I chooses a pair \( (\sigma_B, \tau_B) \in P_B \) and updates the valuation \( \mu_B \) by placing pebbles: \( \mu_B[x_i \mapsto \sigma_B,\; x_j \mapsto \tau_B] \) for some distinct \( i, j \).
  \item Player II must choose a pair \( (\sigma_A, \tau_A) \in \mathcal{X}_A^2 \) and update \( \mu_A[x_i \mapsto \sigma_A,\; x_j \mapsto \tau_A] \).
\end{enumerate}

After \( r \) rounds, we obtain valuations \( (\mu_A^{(r)}, \mu_B^{(r)}) \), which show all pebble placements so far. Player II wins the \( r \)-round game if they preserve a \( k \)-dimensional structural similarity,
\[
(A, \mu_A^{(i)}) \sim_k (B, \mu_B^{(i)}), \quad \text{for all } i \leq r.
\]
Player II has a \emph{winning strategy} if this similarity is preserved in every \( r \)-round game. Player I wins if Player II fails to preserve structural similarity in any round.
\end{mybox}

\begin{theorem}\label{thm:logic-game-direction}
  Let \( A = (S_A, \mathcal{X}_A, \rank_A, \col_A) \) and \( B = (S_B, \mathcal{X}_B, \rank_B, \col_B) \) be two ACCs, and let \( u_A \in \mathcal{X}_A^k \), \( u_B \in \mathcal{X}_B^k \) be \(k\)-tuples of cells from \( A \) and \( B \), respectively. If Player II has a winning strategy in the topological \( (k) \)-pebble game on \( (A) \) and \( (B) \) with initial configuration $(u_A, u_B)$, then for every formula \( \phi \in \ETCLogic{k} \),
\[
\pairLogic{A}{u_A} \models \phi \iff \pairLogic{B}{u_B} \models \phi.
\]
\end{theorem}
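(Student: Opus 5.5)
We prove the contrapositive by structural induction on formulas; the natural strengthening is in terms of quantifier depth. Say Player II has a winning strategy in the $r$-round game from a configuration $(\mu_A,\mu_B)$ (with $\operatorname{dom}\mu_A=\operatorname{dom}\mu_B$). We show by induction on $\phi\in\ETCLogic{k}$ that, whenever $\mathrm{free}(\phi)\subseteq\operatorname{dom}\mu_A$ and the quantifier depth of $\phi$ is at most $r$, we have $A,\mu_A\models\phi \iff B,\mu_B\models\phi$. Since a winning strategy in the unbounded game is a winning strategy for every $r$, the theorem follows from this claim applied to $(u_A,u_B)$ with $r$ equal to the quantifier depth of $\phi$.

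\textbf{Atomic and Boolean cases.} For atomic formulas $(x_i=x_j)$, $R_r(x_i)$, $P_s(x_i)$, and $E^{\mathcal N}(x_i,x_j)$, the winning condition includes the $0$-round requirement $(A,\mu_A)\sim_k(B,\mu_B)$. Conditions 1--4 of structural similarity are precisely agreement on equality, rank, colour bits, and the four neighbourhood relations, so these cases are immediate. Negation and conjunction follow directly from the induction hypothesis, since the quantifier depth does not increase.

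\textbf{Quantifier case.} Let $\phi=\exists^N(x_i,x_j)\psi$ with $\psi$ of depth at most $r-1$. Suppose, for contradiction, that $A,\mu_A\models\phi$ but $B,\mu_B\not\models\phi$; the symmetric case is handled the same way, which is why Step 1 lets Player I pick either complex. Let $P_A=\{(c_1,c_2): A,\mu_A[x_i\mapsto c_1,x_j\mapsto c_2]\models\psi\}$, so $|P_A|\ge N$, and let Player I play $P_A$ in $A$. Player II's strategy answers with some $P_B$ with $|P_B|=|P_A|\ge N$. Since $B$ has fewer than $N$ satisfying pairs, some $(\sigma_B,\tau_B)\in P_B$ fails $\psi$ under $\mu_B[x_i\mapsto\sigma_B,x_j\mapsto\tau_B]$, and Player I picks it. Player II's reply $(\sigma_A,\tau_A)$ then gives a configuration from which she still wins the remaining $r-1$ rounds, so the induction hypothesis gives agreement on $\psi$.

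\textbf{Main obstacle.} The rules as written let Player II answer in Step 4 with an arbitrary pair $(\sigma_A,\tau_A)\in\mathcal{X}_A^2$, not necessarily one from $P_A$. That would break the argument, because we need $(\sigma_A,\tau_A)\in P_A$ so that $A$ satisfies $\psi$ there and a contradiction arises. The intended reading, matching the classical bijective/counting game, is that Player I picks from the responding set and Player II must answer inside the set Player I originally chose. I will state this reading explicitly: Player II's reply lies in the set chosen in Step 1. Under it, $(\sigma_A,\tau_A)\in P_A$, so $A$ satisfies $\psi$ at the new valuation while $B$ does not, contradicting the induction hypothesis. Pebble reuse causes no difficulty, because the new valuation simply overwrites $x_i,x_j$ and $\mathrm{free}(\psi)\subseteq\operatorname{dom}\mu_A\cup\{x_i,x_j\}$ is preserved.
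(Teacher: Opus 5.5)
Your proposal is correct and follows the paper's proof essentially step for step. Both argue by induction on quantifier depth, take the atomic cases from the conditions of $\sim_k$, and handle the counting-quantifier case by having Player~I play the set of all pairs satisfying $\psi$ and then pick a pair in Player~II's answer that fails $\psi$. Your stipulation that Player~II's Step~4 reply must lie in the set chosen in Step~1 is exactly what the paper tacitly uses when it concludes that Player~II cannot win from that position; under the literal Box~1 rule, where she may answer with an arbitrary pair, the argument and in fact the statement break, so making this explicit is a real improvement.
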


\textbf{Proof sketch.}  
The proof proceeds by induction on the number of moves $t$ in the game. For the base case ($t=0$), we prove that if Player II has a winning strategy, then $\pairLogic{A}{u_A}$ and $\pairLogic{B}{u_B}$ agree on all quantifier-free formulas. For the inductive step, we assume the claim holds up to $t=n-1$. At $t=n$, the only important case we need to handle is when the formula is of the form $\exists^N(x,y)\phi(x,y)$.  
If Player I selects a set of candidate pairs in one structure, then Player II must respond with a set of equal size in the other.  
If no such response were possible, Player II would fail to maintain a winning strategy, which contradicts the assumption.  
Therefore, the result also holds for $t=n$.  The full proof of \Cref{thm:logic-game-direction} is provided in \Cref{sec:all-together-appendix-section} of Appendix.

The next step is to connect $k$-CCWL with the topological pebble game.  
Intuitively, if two tuples receive the same color at every step of $k$-CCWL, then Player II can always mirror Player I’s moves in the $(k{+}2)$-pebble game without losing. To give more intuition, one can recall that in the classical Weisfeiler–Leman correspondence, where $k$-WL matches with the $(k{+}1)$-pebble game: the extra pebble serves as a cursor to resample one coordinate, while the other $k$ pebbles keep the rest of the tuple fixed. In our setting, however, each refinement step inspects \emph{pairs} of extensions $(\alpha,\beta)$ via the double-shift sequence, and the logic  quantifies over such pairs. As a result, we need two spare pebbles for $(\alpha,\beta)$, which leads to $(k{+}2)$ pebbles rather than $(k{+}1)$ pebbles.
This correspondence is formalized in the following theorem.

\begin{theorem}\label{thm:eccwl-game-direction}
  Let \( A = (S_A, \mathcal{X}_A, \rank_A, \col_A) \) and \( B = (S_B, \mathcal{X}_B, \rank_B, \col_B) \) be two ACCs, and let \( u_A \in \mathcal{X}_A^k \), \( u_B \in \mathcal{X}_B^k \) be \(k\)-tuples of cells from \( A \) and \( B \), respectively. If the $k$-CCWL colorings of \( u_A \) and \( u_B \) agree, i.e., 
$\cccwl{k}{\infty}{A}{u_A} = \cccwl{k}{\infty}{B}{u_B}$,
then Player II has a winning strategy in the topological \( (k{+}2) \)-pebble game on \( (A) \) and \( (B) \) with initial configuration $(u_A, u_B)$.
\end{theorem}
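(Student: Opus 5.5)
We will show that Player~II can maintain, as a game invariant, equality of stable $k$-CCWL colors on the $k$ "tuple" pebbles. The two extra pebbles are used only transiently. Concretely, think of the $k{+}2$ variables as $x_1,\dots,x_k$ (the tuple positions) together with two spare variables $y,z$. The invariant after each round is: the positions $x_1,\dots,x_k$ carry $k$-tuples $\mathbf{v}_A,\mathbf{v}_B$ with $\chi_k^{(\infty)}(\mathbf{v}_A)=\chi_k^{(\infty)}(\mathbf{v}_B)$. Initially this is the hypothesis on $(u_A,u_B)$.

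Since $\chi^{(\infty)}$ refines $\chi^{(0)}=\operatorname{atp}_k$, the invariant immediately gives $(A,\mathbf{v}_A)\sim_{k}(B,\mathbf{v}_B)$ on those positions. Stability means $\chi^{(\infty)}$ also determines the multiset $\mathcal{D}^{(\infty)}_k$. So equal colors yield a bijection $\pi:\mathcal{X}_A^2\to\mathcal{X}_B^2$ such that, for every pair $(\alpha,\beta)$, the tuples $\mathbf{v}_A\alpha\beta$ and $\mathbf{v}_B\pi(\alpha,\beta)$ share the same $\operatorname{atp}_{k+2}$ and the same double shift sequence $\Delta^{(\infty)}_k$.

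Duplicator's strategy in a round is as follows. When Spoiler picks $P_A\subseteq\mathcal{X}_A^2$, she answers $P_B=\pi(P_A)$, which has the same size; symmetrically she uses $\pi^{-1}$. When Spoiler places pebbles on $(\sigma_B,\tau_B)\in P_B$, she answers with $(\sigma_A,\tau_A)=\pi^{-1}(\sigma_B,\tau_B)$.

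We then split by which variables $x_i,x_j$ Spoiler uses.

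\textbf{Case (a): both are spare variables $y,z$.} The equality of $\operatorname{atp}_{k+2}$ gives $\sim_{k+2}$ for the full configuration, and the tuple part is unchanged.

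\textbf{Case (b): one is a tuple variable $x_i$ and one is spare.} The new tuple is $\mathbf{v}[\sigma/i]$, and the double shift sequence gives $\chi^{(\infty)}(\mathbf{v}_A[\sigma_A/i])=\chi^{(\infty)}(\mathbf{v}_B[\sigma_B/i])$. The remaining spare pebble sits on the other element, and its relations to the new tuple are read off the $(k{+}2)$-atomic type. One must check that replacing $x_i$ only drops relations and does not create new unchecked ones.

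\textbf{Case (c): both are tuple variables $x_i,x_j$.} Here the new tuple replaces two coordinates at once, and the double shift only controls single replacements $\mathbf{v}[\alpha/i]$ and $\mathbf{v}[\beta/i]$ at the same index. My first attempt is to simulate the move in two stages. First record $(\alpha,\beta)$ on the spare pebbles via case (a). Then move the tuple pebbles one at a time using the same index-wise bijection argument from $\mathbf{v}\alpha\beta$ with $\mathbf{v}[\alpha/i]$, applying stability again to the intermediate tuple. This is the main obstacle: the game rules force simultaneous placement, so the strategy must be defined relative to a richer invariant. The invariant I would adopt is that the full $(k{+}2)$-configuration is "consistent", meaning every $k$-subtuple (with spare positions treated as tuple positions after relabeling) has matching stable colors. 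I would prove that the bijection $\pi$ preserves this, using that $\operatorname{atp}_{k+2}$ plus the double shifts at all indices constrain each resulting $k$-subtuple.

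Finally, the similarity condition $\sim_{k+2}$ holds at every round because all pebbled pairs lie either within one $k$-tuple, whose color refines its atomic type, or within the extended tuple $\mathbf{v}\alpha\beta$, whose $\operatorname{atp}_{k+2}$ matches. Since the invariant is preserved indefinitely, Duplicator wins every $r$-round game, which is the claim of \Cref{thm:eccwl-game-direction}.
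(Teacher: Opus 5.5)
You have located the difficulty correctly, but the step you postpone in case (c) does not follow from the definitions, so the proposal has a genuine gap. Two things go wrong.

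First, the bijection has to be built from the $k$ pebbles that Spoiler leaves on the board, not from the cells on $x_1,\dots,x_k$. In case (b) your $\pi$ never sees the cell $c$ under the unmoved spare pebble. So the relations between $c$ and the newly placed $\sigma,\tau$ are unchecked, and $\sim_{k+2}$ can fail.

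Second, and more seriously, take the correct invariant: every $k$-subtuple of the $(k{+}2)$-configuration has matching stable colours. Suppose Spoiler keeps a tuple $\mathbf{w}$ and places $(\alpha,\beta)$. Preserving the invariant then requires matching colours for \emph{all} $k$-subtuples of $\mathbf{w}\alpha\beta$. The recorded context $(\operatorname{atp}_{k+2}(\mathbf{w}\alpha\beta),\Delta_k(\mathbf{w};\alpha,\beta))$ only covers $\mathbf{w}[\alpha/i]$ and $\mathbf{w}[\beta/i]$. For $k\ge 2$, the tuples $\mathbf{w}[\alpha/i][\beta/j]$ with $i\ne j$ appear nowhere in $\mathcal{D}_k$. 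Stability only tells you that $\chi_k^{(\infty)}(\mathbf{w}[\alpha/i])$ determines the multiset of colours $\chi_k^{(\infty)}(\mathbf{w}[\alpha/i][\gamma/j])$ over all $\gamma$. It does not tell you which of them belongs to $\gamma=\beta$. Your claim that $\operatorname{atp}_{k+2}$ plus the double shifts ``constrain each resulting $k$-subtuple'' is therefore unsupported.

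This is not a technicality: a context-preserving $\pi$ chosen arbitrarily really loses. Take $k=2$ and let $A=B$ be the disjoint union of a $6$-cycle and an isolated $0$-cell $w$. Treat the cycle as a $1$-dimensional ACC with all $0$-cells one colour and all $1$-cells another, and set $u_A=u_B=(w,w)$. All ordered pairs $(\alpha,\beta)$ of distinct, non-adjacent $0$-cells of the cycle have the same $\operatorname{atp}_4(w,w,\alpha,\beta)$. By rotational symmetry they also have the same double shift, so $\pi$ may send a pair at distance $2$ to a pair at distance $3$. Suppose Duplicator answers this way on $y,z$. Spoiler then lifts $x_1,x_2$ and uses them to pebble a $0$-cell upper-adjacent to both cells of the distance-$2$ pair. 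Nothing is upper-adjacent to both cells of the distance-$3$ pair, so Duplicator loses even though $A=B$.

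What is missing is a proof that equality of $\chi_k^{(\infty)}(\mathbf{w})$ forces equality of the multisets of contexts enriched by the colours of all $\mathbf{w}[\alpha/i][\beta/j]$. Nothing in the definition of $k$-CCWL gives this; including those colours in $\mathcal{D}_k$ is what the standard invariant argument needs. You will not find this step in the paper either. Its proof treats only a first move on the fresh pair $x_{k+1},x_{k+2}$, calling it Spoiler's strongest move. It then applies an induction hypothesis about $k$-tuples with equal colours to the resulting $(k{+}2)$-configuration, which silently assumes exactly what your case (c) needs.
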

\textbf{Proof sketch.}  
The proof shows that if  
\(\cccwl{k}{t}{A}{u_A} = \cccwl{k}{t}{B}{u_B}\), then Player II has a winning strategy for the $t$-move $\TCLogic{k{+}2}$ game.  
The proof is by induction on $t$.  
For the base case ($t=0$), equality of $k$-CCWL colors implies identical atomic types for $u_A$ and $u_B$, which satisfies the conditions needed for Player II to win the $0$-move game.  
In the inductive step, assume the result holds for all $t<n$.  
For $t=n$, Player I’s strongest move is to modify an unmodified pair \((x_{k{+}2},x_{k+1})\).  
Player II responds by selecting matching valuations in the other structure, preserving atomic types and color shifts.  
By the induction hypothesis, Player II can then continue to maintain a winning strategy for the remainder of the game. The full proof of \Cref{thm:eccwl-game-direction} is provided in \Cref{sec:all-together-appendix-section} of Appendix.

Combining \Cref{thm:eccwl-logic-direction,thm:logic-game-direction,thm:eccwl-game-direction} yields our three-way equivalence: equality of stable $k$-CCWL colors,
Duplicator’s winning strategy with $k{+}2$ pebbles, and agreement on all $\ETCLogic{k{+}2}$ formulas.

\begin{corollary}\label{corollary:all-together-tuple-wise}
Let \( A = (S_A, \mathcal{X}_A, \rank_A, \col_A) \) and \( B = (S_B, \mathcal{X}_B, \rank_B, \col_B) \) be two ACCs, and let \( u_A \in \mathcal{X}_A^k \), \( u_B \in \mathcal{X}_B^k \) be \(k\)-tuples of cells. Then the following statements are equivalent:
\begin{enumerate}[left=8pt,itemsep=-1pt,topsep=0pt]
    \item \( \cccwl{k}{\infty}{A}{u_A} = \cccwl{k}{\infty}{B}{u_B} \)
    \item Player II has a winning strategy in the topological \( (k{+}2) \)-pebble game on \( (A, u_A) \) vs. \( (B, u_B) \)
    \item For every formula \( \phi \in \ETCLogic{k{+}2} \), we have:
    $
    \pairLogic{A}{u_A} \models \phi \iff \pairLogic{B}{u_B} \models \phi
    $
\end{enumerate}
\end{corollary}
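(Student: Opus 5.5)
The corollary follows by chaining the three theorems just proved into a cycle of implications, so no new combinatorial argument is needed. Concretely, we establish $(1)\Rightarrow(2)\Rightarrow(3)\Rightarrow(1)$. The first implication is exactly \Cref{thm:eccwl-game-direction}. If $\cccwl{k}{\infty}{A}{u_A} = \cccwl{k}{\infty}{B}{u_B}$, then Player~II has a winning strategy in the topological $(k{+}2)$-pebble game with initial configuration $(u_A,u_B)$.

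For $(2)\Rightarrow(3)$ we apply \Cref{thm:logic-game-direction} with the number of pebbles instantiated as $k{+}2$ rather than $k$. There is one bookkeeping point here. \Cref{thm:logic-game-direction} is phrased for $k$-tuples $u_A,u_B$ in a $k$-pebble game, whereas here the initial configuration occupies only $x_1,\dots,x_k$ out of $k{+}2$ variables. We therefore read $u_A,u_B$ as partial valuations with domain $\{x_1,\dots,x_k\}\subseteq\{x_1,\dots,x_{k+2}\}$. This is precisely the setting the game in Box~1 allows, since it is defined for partial valuations. We then check that the inductive proof of \Cref{thm:logic-game-direction} never uses totality of the valuation, only that $\mathrm{free}(\phi)\subseteq\operatorname{dom}$. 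With that observation, Player~II's winning strategy yields agreement of $(A,u_A)$ and $(B,u_B)$ on every $\phi\in\ETCLogic{k+2}$.

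For $(3)\Rightarrow(1)$ we invoke \Cref{thm:eccwl-logic-direction} verbatim. It states that agreement on all $\ETCLogic{k+2}$ formulas forces equality of the stable $k$-CCWL colors of $u_A$ and $u_B$. This closes the cycle, and the three statements are equivalent.

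The only step I expect to need care is the variable and pebble-count alignment in $(2)\Rightarrow(3)$, namely that the theorem proved for ``$k$'' pebbles is applied with $k{+}2$ pebbles and a partially defined initial valuation. If the appendix proof of \Cref{thm:logic-game-direction} did implicitly assume all variables are pebbled, I would fall back on a padding argument. I would extend $u_A,u_B$ to total $(k{+}2)$-valuations by duplicating $u_A(x_1)$ and $u_B(x_1)$ into $x_{k+1},x_{k+2}$; this preserves $\sim_{k+2}$ because equalities, ranks, colors and adjacencies are simply copied. I would then note that formulas whose free variables lie in $\{x_1,\dots,x_k\}$ are insensitive to the values of $x_{k+1},x_{k+2}$. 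Finally I would check that the winning strategy transfers, since Player~I's moves overwrite pairs of variables anyway.
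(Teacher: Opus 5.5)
Your proposal is correct and matches the paper's argument: the corollary follows by closing the cycle $(1)\Rightarrow(2)\Rightarrow(3)\Rightarrow(1)$ using \Cref{thm:eccwl-game-direction}, then \Cref{thm:logic-game-direction} instantiated with $k{+}2$ variables, then \Cref{thm:eccwl-logic-direction}. Your bookkeeping concern is already resolved, because the appendix version \Cref{thm:game-logic} is stated for partial valuations $u_A,u_B:\{x_1,\dots,x_k\}\rightharpoonup\mathcal{X}$, so the padding fallback is unnecessary (and its strategy-transfer step would not be automatic, since the pebbles you park on $x_{k+1},x_{k+2}$ stay in place when Player~I moves a different pair).
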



Restricting the above corollary to the case of uniform ACCs, and using 
\Cref{prop:disjoint-or-identical-labelings}, we have: 

\begin{theorem}\label{thm:equ-uaccs-main}
    The $k$-CCWL procedure and $\ETCLogic{k{+}2}$ are equivalent in expressive power for distinguishing non-isomorphic uniform ACCs. 
\end{theorem}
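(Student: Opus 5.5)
We lift the tuple-level equivalence of \Cref{corollary:all-together-tuple-wise} to a statement about whole complexes: $k$-CCWL distinguishes two uniform ACCs $A$ and $B$ if and only if some sentence $\phi\in\ETCLogic{k+2}$ (no free variables) holds in one and fails in the other. As in the paper's isomorphism-testing convention, we work throughout with the broadcast-anchor augmented complexes. This is legitimate because adding the anchor does not affect CC-isomorphism. The bridge between the two levels is \Cref{prop:disjoint-or-identical-labelings}: the global signatures $\chi_{k,A}^{(\infty)}$ and $\chi_{k,B}^{(\infty)}$ are either equal or disjoint. This dichotomy lets us pass between ``some tuple in $A$ has a color absent from $B$'' and ``no tuple of $A$ matches any tuple of $B$.''

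\emph{Logic refines $k$-CCWL.} Suppose $k$-CCWL distinguishes $A$ and $B$. By \Cref{prop:disjoint-or-identical-labelings}, the two signatures are disjoint. Fix any $u_A\in\mathcal{X}_A^k$. For every $u_B\in\mathcal{X}_B^k$ we have $\chi_k^{(\infty)}(u_A)\neq\chi_k^{(\infty)}(u_B)$, so by \Cref{corollary:all-together-tuple-wise} (direction 3$\Rightarrow$1, i.e.\ \Cref{thm:eccwl-logic-direction}) there is a formula $\phi_{u_B}\in\ETCLogic{k+2}$ with free variables among $x_1,\dots,x_k$ that $(A,u_A)$ satisfies and $(B,u_B)$ does not; negate it if necessary to get this orientation. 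Let $\Phi=\bigwedge_{u_B}\phi_{u_B}$, a finite conjunction since $\mathcal{X}_B$ is finite. Then $\Phi$ holds at $u_A$ and at no $k$-tuple of $B$.

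We now close off the free variables. The logic only has the pairwise quantifier $\exists^{N}(x_i,x_j)$, which binds two variables at once and does not reuse them. Binding $x_1,\dots,x_k$ in consecutive pairs with $\exists^{1}$ works when $k$ is even. When $k$ is odd, we handle the last variable $x_k$ by quantifying the pair $(x_k,x_{k+1})$ with $\exists^{1}$, where $x_{k+1}$ does not occur free in $\Phi$; this is permitted because we have $k+2$ variables available. The resulting sentence $\exists^{1}(x_1,x_2)\cdots\Phi$ holds in $A$ and fails in $B$, so $\ETCLogic{k+2}$ distinguishes them.

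\emph{$k$-CCWL refines logic.} Suppose $\chi_{k,A}^{(\infty)}=\chi_{k,B}^{(\infty)}$. The goal is to show that $A$ and $B$ satisfy the same sentences. Equality of the multisets gives a bijection $\pi:\mathcal{X}_A^k\to\mathcal{X}_B^k$ preserving stable colors. By \Cref{corollary:all-together-tuple-wise}, each matched pair $(u,\pi(u))$ satisfies the same formulas of $\ETCLogic{k+2}$, and in particular the same sentences, since sentences do not depend on the valuation. Taking any single $u$ therefore shows $A\models\phi\iff B\models\phi$ for every sentence $\phi$. Here the matching only needs to supply one pair, but its existence uses that both complexes are nonempty, which holds because every vertex is a cell.

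The main obstacle is the first direction. There, \Cref{prop:disjoint-or-identical-labelings} is essential: without disjointness, a color present in $A$ but absent in $B$ could still coexist with shared colors. One would then have to compare multiplicities of color classes using counting thresholds $\exists^{N}$ over pairs, and a naive count of $k$-tuples through pair quantifiers fails when $k$ is odd. Disjointness removes this difficulty because a single witness tuple suffices. The remaining point to check is that the existential closure stays within $k+2$ variables in the odd case, using the padding variable $x_{k+1}$ as described above.
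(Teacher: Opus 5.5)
Your proposal is correct and follows the same route as the paper's proof. Both arguments split on the identical-vs-disjoint dichotomy of \Cref{prop:disjoint-or-identical-labelings}, handle equal signatures via a color-preserving bijection plus \Cref{corollary:all-together-tuple-wise}, and build a separating sentence from a tuple of $A$ whose color is absent in $B$. Your explicit finite conjunction over $u_B\in\mathcal{X}_B^k$ and your closure by nested $\exists^{1}$ pair quantifiers (padding with $x_{k+1}$ for odd $k$) are more careful than the paper's version. The paper appeals to ``standard finite-model-theoretic'' definability and closes with single-variable quantifiers $\exists x_1\cdots\exists x_k$, which are not literally in the syntax of $\ETCLogic{k+2}$.
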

The proof of \Cref{thm:equ-uaccs-main} is provided in \Cref{sec:all-together-appendix-section}
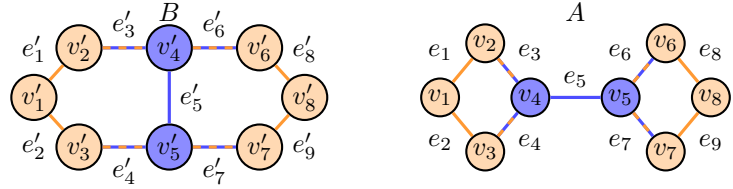
\begin{figure}[h!]
  \centering
  \begin{tikzpicture}[
  scale=0.6,
  vertex/.style={circle,draw=black,thick,inner sep=1.2pt,minimum size=5mm},
  bluev/.style={vertex,fill=blue!45},
  peachv/.style={vertex,fill=orange!30},
  oedge/.style={draw=orange!80, line width=1.2pt},
  bedge/.style={draw=blue!70,   line width=1.2pt},
  mixed/.style={line width=1.2pt,draw=blue!70,
                postaction={draw=orange!80,dash pattern=on 3pt off 3pt}}
]

\node at (0,3.4) {$B$};

\node[peachv] (v1) at (-3,1.5) {$v'_1$};
\node[peachv] (v2) at (-2,2.6) {$v'_2$};
\node[peachv] (v3) at (-2,0.4) {$v'_3$};

\node[bluev]  (v4) at (0,2.6) {$v'_4$};
\node[bluev]  (v5) at (0,0.4) {$v'_5$};

\node[peachv] (v6) at (2,2.6) {$v'_6$};
\node[peachv] (v7) at (2,0.4) {$v'_7$};
\node[peachv] (v8) at (3,1.5) {$v'_8$};

\draw[oedge] (v1) -- node[above left]  {$e'_1$} (v2);
\draw[oedge] (v1) -- node[below left]  {$e'_2$} (v3);

\draw[mixed] (v2) -- node[above]       {$e'_3$} (v4);
\draw[mixed] (v3) -- node[below]       {$e'_4$} (v5);

\draw[bedge] (v4) -- node[right]       {$e'_5$} (v5);

\draw[mixed] (v4) -- node[above]       {$e'_6$} (v6);
\draw[mixed] (v5) -- node[below]       {$e'_7$} (v7);

\draw[oedge] (v8) -- node[above right] {$e'_8$} (v6);
\draw[oedge] (v8) -- node[below right] {$e'_9$} (v7);

\begin{scope}[xshift=6cm]
\node at (3,3.4) {$A$};

\node[peachv] (vp1) at (0,1.5) {$v_1$};
\node[peachv] (vp2) at (1,2.7) {$v_2$};
\node[peachv] (vp3) at (1,0.3) {$v_3$};

\node[bluev]  (vp4) at (2,1.5) {$v_4$};
\node[bluev]  (vp5) at (4,1.5) {$v_5$};

\node[peachv] (vp6) at (5,2.7) {$v_6$};
\node[peachv] (vp7) at (5,0.3) {$v_7$};
\node[peachv] (vp8) at (6,1.5) {$v_8$};

\draw[oedge] (vp1) -- node[above left]  {$e_1$} (vp2);
\draw[oedge] (vp1) -- node[below left]  {$e_2$} (vp3);

\draw[mixed] (vp2) -- node[above right]       {$e_3$} (vp4);
\draw[mixed] (vp3) -- node[below right]       {$e_4$} (vp4);

\draw[bedge] (vp4) -- node[above]       {$e_5$} (vp5);

\draw[mixed] (vp5) -- node[above left]       {$e_6$} (vp6);
\draw[mixed] (vp5) -- node[below left]       {$e_7$} (vp7);

\draw[oedge] (vp8) -- node[above right] {$e_8$} (vp6);
\draw[oedge] (vp8) -- node[below right] {$e_9$} (vp7);

\end{scope}

\end{tikzpicture}
  \caption{Two non-isomorphic simplicial complexes. If all vertices and edges are given the same initial color, then running SWL or 1-CCWL gives the final coloring shown above. \looseness=-1}
  \label{fig:two-non-isomorphic-sc}
\end{figure}
\begin{example}\label{example:logic-game-explain}
     \Cref{fig:two-non-isomorphic-sc} is an example of two non-isomorphic simplicial complexes from \cite{Verma2024}. 
In this example, we want to test both logical and game point of view for $k = 2$ to show that (1) $\TCLogic{4}$ can distinguish these two ACCs, and (2) Player I has a winning strategy in $\TCLogic{4}$ game.
\begin{itemize}[left=7pt,itemsep=1pt,topsep=0pt]
    \item Logic: Consider formula $\phi \in \TCLogic{4}$ as follows:
 \begin{align*}
    \phi &= \exists^{16}(x_1,x_2) \exists^{1}(x_3,x_4)\, \mathrm{CYCLE}(x_1,x_2,x_3,x_4)
\end{align*}
where $\mathrm{CYCLE}(x_1,x_2,x_3,x_4) = E^{\lowerLaplacian}(x_1,x_2) \land E^{\lowerLaplacian}(x_2,x_3) \land E^{\lowerLaplacian}(x_3,x_4) \land E^{\lowerLaplacian}(x_4,x_1) \land \lnot E^{\lowerLaplacian}(x_1,x_3) \land \lnot E^{\lowerLaplacian}(x_2,x_4) \land \lnot(x_1 = x_3) \land \lnot(x_2 = x_4)$.
    In the ACC $A$, we have two copies of $C_4$ connected by an edge, and each $C_4$ gives us eight $4$-tuples $u_A = (u_1,u_2,u_3,u_4)$ that $u_i$ are connected consecutively. Therefore, $A \models \phi$, however, we do not have such tuples in ACC $B$, therefore, $B \not\models \phi$.
    \item Game: Player I first chooses, in each copy of $C_4$ in $A$, all $16$ pairs of adjacent edges as $P_A$. For any reply $P_B$, there is a pair not containing $e'_5$; Player I picks such a pair and modifies $u_B$. W.l.o.g., assume now Player II modifies the valuation $u_A$ such that $u_A(x_1) = e_1$ and $u_A(x_2) = e_2$. In the next round, Player I sets $P_A = \{(e_3,e_4)\}$. Whatever Player II chooses, they lose, since in $A$, the edges $e_1, e_2, e_3$ and $e_4$ are consecutively in each other's lower neighborhood.
\end{itemize}
\end{example}

\section{Conclusion}
We introduced the $k$-dimensional combinatorial complex Weisfeiler–Leman test ($k$-CCWL) together with a Topological Counting Logic ($\TCLogic{k}$) equipped with a pairwise counting quantifier and matched them with a topological $(k{+}2)$-pebble game. Together, these yield the first logic--game--algorithm triad for higher-order message passing on combinatorial complexes. Our main equivalence is
\begin{align}
k\text{-CCWL} \equiv \text{TC}_{k{+}2} \equiv \text{Topological }(k{+}2)\text{-pebble game}.
\end{align}
It follows that $k$-CCWL and $\TCLogic{k+2}$ have identical separation power, which \emph{strictly} increases with $k$. This pins down exactly which binary classifiers higher-order architectures can realize and, via the pairwise-counting view, explains when moving to higher dimensions yields provable gains over standard message passing on graphs.

\subsubsection*{Ethics Statement} This work advances our understanding of the foundations of topological neural networks by establishing their logical expressivity. While it might pave way for design of more powerful topological models, we do not foresee any specific ethical concerns stemming from this work.

\section*{Acknowledgments}
This work was supported by the Quantum Doctoral Programme (QDOC) of the Finnish Quantum Flagship. VG also acknowledges Saab-WASP, Research Council of Finland, and the Jane and Aatos Erkko Foundation for their support. AS acknowledges the support by the Conselho Nacional de Desenvolvimento Científico e Tecnológico (CNPq) (408974/2025-7, 312068/2025-5).

\bibliography{iclr2026_conference}
\bibliographystyle{iclr2026_conference}

\appendix
\section*{Appendix Overview}\label{app:appendix}
The appendix complements the main text by providing formal definitions, detailed proofs, and auxiliary results that underpin our theoretical contributions.  
We present complete proofs for the main results stated in the paper, including the identical-vs-disjoint property (\Cref{prop:disjoint-or-identical-labelings}) and the equivalence theorems connecting $k$-CCWL, $\ETCLogic{k+2}$, and the topological pebble game. 
\section{General Notation}\label{subsec:notation}

The set of real numbers is denoted by $ \mathbb{R} $, the set of integers is denoted by $ \mathbb{Z} $, and the set of natural numbers is denoted by $ \mathbb{N} $. Furthermore, we   denote $ [n] := \{1, 2, \dots, n\} $. Moreover, for any set of numbers $ S $, the following notations are defined: the union of $ S $ and $ \{0\} $ is denoted by $ S_0 := S \cup \{0\} $, and for any $ m \in S $, the subset $ S_{\ge m} := \{x \in S \mid x \ge m\} $ contains all elements in $ S $ greater than or equal to $ m $. 

In general, for functions $ f, g : X \to Y $, $ f $ is said to \emph{refine} $ g $ (denoted by $ f \preceq g $) if for all $ x, x' \in X $, whenever $ f(x) = f(x') $, it follows that $ g(x) = g(x') $. Functions $ f $ and $ g $ are considered \emph{equivalent} (denoted by $ f \equiv g $) if $ f \preceq g $ and $ g \preceq f $, meaning that $ f $ and $ g $ assign the same values to related elements in $ X $. Moreover, consider a subset $ A \subseteq X $, the restriction of $ f $ to $ A $, denoted by $ f|_A $, is the function $ f|_A: A \to Y $ defined by
\[
f|_A(x) = f(x) \quad \text{for all } x \in A.
\]
A \emph{partial function} $f$ from a set $A$ to a set $B$ is a relation $f \subseteq A \times B$ such that for every $x \in A$, there is \emph{at most one} $y \in B$ with $(x, y) \in f$. In other words, a partial function is a function that may not be defined for all elements of its domain $A$. In this paper, we use the arrow notation $f:A \rightharpoonup B$ to show partial function $f$.
The domain of $f $, denoted by $\operatorname{dom}_f $, indicates the set of all $ x \in A $ for which $f(x)$ is defined.
\section{CCWL and $k$-CCWL}\label{sec:ccwl-kccwl-Appendix}
First, we give the formal definition of the combinatorial complex Weisfeiler-Leman (CCWL) algorithm.
 \paragraph{Combinatorial complex Weisfeiler-Leman (CCWL).}
        Let $  \Gamma = (S, \mathcal{X}, \rank, \col)$ be an attributed combinatorial complex. The CCWL algorithm iteratively computes the labeling of cells in the combinatorial complex. The process is defined as follows:
        \begin{itemize}
            \item [1.] Initialize the cell labeling: for each cell $x \in \mathcal{X}$, set 
            $\chi_1^{(0)}(x) = \left(\col(x), \rank(x)\right)$.
            \item [2.] For each iteration $t \geq 1$, update the cell labels using:
            \begin{align*}
                \chi_1^{(t)}(x) = \left(\chi_1^{(t{-}1)}(x), \mathcal{M}^{(t)}_{\Gamma}(x)\right),
            \end{align*}
        \end{itemize}
        where
        \begin{align*}
            \mathcal{M}^{(t)}_{\Gamma}(x) =  \left( c^{B,(t)}(x),c^{C,(t)}(x),c^{\downarrow,(t)}(x),c^{\uparrow,(t)}(x)\right).
        \end{align*}
        The colors of the boundary cells are denoted by $c^{B,(t)}(x)$ , the colors of the co-boundary cells are denoted by $c^{C,(t)}(x)$, the colors of the lower cells are denoted by $c^{\downarrow,(t)}(x)$, and the colors of the upper cells are denoted by $c^{\uparrow,(t)}(x)$. These sets are defined as follows:
    \begin{align*}
        c^{B,(t)}(x) &= \mulset{\chi_1^{(t{-}1)}(y) \,\middle\vert\, y \in \boundary(x)},\\
        c^{C,(t)}(x) &= \mulset{\chi_1^{(t{-}1)}(y) \,\middle\vert\, y \in \coboundary(x) },\\
        c^{\downarrow,(t)}(x) &= \mulset{\left(\chi_1^{(t{-}1)}(y), \chi_1^{(t{-}1)}(z)\right) \,\middle\vert\,  y \in \lowerLaplacian(x), z \in \boundary(x\cap y) },\text{ and}\\
        c^{\uparrow,(t)}(x) &= \mulset{\left(\chi_1^{(t{-}1)}(y), \chi_1^{(t{-}1)}(z)\right) \,\middle\vert\,y \in \upperLaplacian(x), z \in \coboundary(x \cap y)}
    \end{align*}
Here, for the sake of simplifying the notation, we write $\boundary(x \cap y)$ instead of
$\boundary(x) \cap \boundary(y)$, and similarly $\coboundary(x \cap y)$ instead of
$\coboundary(x) \cap \coboundary(y)$. Moreover, whenever we work with two ACCs $A$ and $B$ in the
same argument, we make the complexes explicit in the notation: for example,
$\mathcal{N}(A,x)$ denotes the neighborhood of $x$ in $A$, and $\chi^{(t)}_{1,A}$ denotes the
$1$-CCWL coloring on $A$ at round~$t$.
    In addition, after each iteration, collect all cell labels to form a global label for the combinatorial complex:
    \begin{align*}
        \chi^{(t)}_{1,\Gamma} = \mulset{\chi_1^{(t)}(x) \,\middle\vert\, x \in X}.
    \end{align*}
    The stable labeling is reached at iteration $t$ when, for each $x \in \mathcal{X}$, $\chi_1^{(t+1)}(x) = \chi_1^{(t)}(x)$. The stable labeling is denoted by $\chi_{1}^{(\infty)}(x)$. 
    The set of stable labels for all cells is denoted as $\chi_{1,\Gamma}^{(\infty)}$.

 \paragraph{Convergence.} Before proceeding, we start with a simple termination result for $k$-CCWL for $k \geq 2$, and the termination of CCWL ($k=1$) can be proved similarly.
\begin{lemma}[Termination of $k$-CCWL]\label{lem:termination-k-ccwl}
Let $\Gamma = (S, \mathcal{X}, \mathrm{rk}, \col)$ be an ACC with finite cell set $\mathcal{X}$, and let $k \ge 1$.  
Consider the $k$-CCWL refinement sequence
\[
  \bigl(\chi_k^{(t)} : \mathcal{X}^k \to \Sigma_t\bigr)_{t \in \mathbb{N}_0}
\]
as defined in \Cref{sec:higher-order-topological-isomorphism-tests}. Then this sequence stabilizes after finitely many iterations.  
In particular, there exists $T \in \mathbb{N}_0$ such that
\[
  \chi_k^{(T+1)}(\mathbf{x}) = \chi_k^{(T)}(\mathbf{x})
  \qquad \text{for all } \mathbf{x} \in \mathcal{X}^k,
\]
so the stable labeling $\chi_k^{(\infty)}$ is well-defined.
\end{lemma}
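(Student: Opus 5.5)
The plan is the standard monotone-refinement argument: the partitions of $\mathcal{X}^k$ induced by successive colorings form a non-increasing chain in the finite lattice of partitions, so they must stabilize.

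First, for each $t$ let $\pi_t$ be the partition of $\mathcal{X}^k$ into color classes of $\chi_k^{(t)}$, i.e.\ $\mathbf{x},\mathbf{y}$ lie in the same block iff $\chi_k^{(t)}(\mathbf{x})=\chi_k^{(t)}(\mathbf{y})$.

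Second, I would show that $\chi_k^{(t)} \preceq \chi_k^{(t-1)}$ for every $t\ge 1$, in the notation of \Cref{subsec:notation}. This is immediate from the update rule, because $\chi_k^{(t)}(\mathbf{x})=\mathrm{HASH}(\chi_k^{(t-1)}(\mathbf{x}),\mathcal{D}_k^{(t)}(\mathbf{x}))$ and HASH is injective. Hence equal colors at round $t$ force equal colors at round $t-1$, so $\pi_t$ refines $\pi_{t-1}$ and $|\pi_t|\ge|\pi_{t-1}|$. Since $|\pi_t|\le|\mathcal{X}|^k=n^k$ and $|\pi_0|\ge 1$, the number of blocks can strictly increase at most $n^k-1$ times. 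So there is some $T\le n^k-1$ with $\pi_{T+1}=\pi_T$.

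Third, I would show that once the partition stops changing it never changes again. Assume $\pi_{T+1}=\pi_T$ and prove $\pi_{T+2}=\pi_{T+1}$. The coloring $\chi_k^{(T+1)}$ is obtained from $\chi_k^{(T)}$ by a bijective renaming of colors $\sigma$, since the two induce the same partition. Then $\mathcal{D}_k^{(T+2)}(\mathbf{x})$ is the image of $\mathcal{D}_k^{(T+1)}(\mathbf{x})$ under applying $\sigma$ entrywise to every double shift sequence, with atomic types untouched. Thus $\mathcal{D}_k^{(T+2)}(\mathbf{x})=\mathcal{D}_k^{(T+2)}(\mathbf{y})$ iff $\mathcal{D}_k^{(T+1)}(\mathbf{x})=\mathcal{D}_k^{(T+1)}(\mathbf{y})$. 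Combining this with the previous color shows that $\mathbf{x},\mathbf{y}$ agree at round $T+2$ iff they agree at round $T+1$. By induction the partition is fixed from $T$ onward, so the stable labeling $\chi_k^{(\infty)}$ is well defined.

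The main obstacle is a notational subtlety. Taken literally, the nested HASH makes $\chi_k^{(T+1)}(\mathbf{x})\neq\chi_k^{(T)}(\mathbf{x})$ as raw values, even when nothing changes. So the termination condition $\chi_k^{(T+1)}=\chi_k^{(T)}$ must be read as equality up to a bijective renaming of colors, i.e.\ equality of the induced partitions. Alternatively, one can canonically relabel the colors at each round, for instance by their index in a fixed ordering of the occurring colors; in that convention the relabeled colorings at $T$ and $T+1$ agree literally. I would state this convention explicitly and then run the argument above. The case $k=1$ (CCWL) is handled identically, using the multisets $c^{B},c^{C},c^{\downarrow},c^{\uparrow}$ in place of $\mathcal{D}_k$.
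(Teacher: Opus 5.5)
Your proposal is correct and takes essentially the same route as the paper: injectivity of $\mathrm{HASH}$ makes each color partition refine the previous one, and there are at most $n^k$ blocks, so some $T \le n^k-1$ has $\pi_{T+1}=\pi_T$. The paper only asserts two further points that you actually argue, so your version is slightly more careful: persistence of the partition after $T$, which you prove via the renaming $\sigma$, and that ``$\chi_k^{(T+1)}=\chi_k^{(T)}$'' has to mean equality of the induced partitions rather than of the raw nested hash values.
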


\begin{proof}[Proof of \Cref{lem:termination-k-ccwl}]\label{proof:termination-k-ccwl}
Since $\mathcal{X}$ is finite, say $|\mathcal{X}| = n < \infty$, the set of $k$-tuples
\(
  U \coloneqq \mathcal{X}^k
\)
is also finite with $|U| = n^k$. For each iteration $t \in \mathbb{N}_0$, the labeling
\[
  \chi_k^{(t)} : U \to \Sigma_t
\]
induces an equivalence relation $\sim_t$ on $U$ via
\[
  \mathbf{x} \sim_t \mathbf{y}
  \;\Longleftrightarrow\;
  \chi_k^{(t)}(\mathbf{x}) = \chi_k^{(t)}(\mathbf{y}),
\]
and hence a partition $\mathcal{P}_t$ of $U$ into color classes. By the $k$-CCWL update rule (see the definition in \Cref{sec:higher-order-topological-isomorphism-tests}), we have
\[
  \chi_k^{(t+1)}(\mathbf{x})
  \;=\;
  \mathrm{HASH}\bigl(\chi_k^{(t)}(\mathbf{x}),\, \mathcal{D}_k^{(t)}(\mathbf{x})\bigr),
\]
where $\mathcal{D}_k^{(t)}(\mathbf{x})$ is the multiset of refinement contexts
\[
  \mathcal{D}_k^{(t)}(\mathbf{x})
  \;=\;
  \bigl\{\!\bigl\{\,
    \bigl(\operatorname{atp}_{k+2}(\mathbf{x}\alpha\beta),
          \Delta_k^{(t)}(\mathbf{x};\alpha,\beta)\bigr)
    : \alpha,\beta \in \mathcal{X}
  \,\bigr\}\!\bigr\}.
\]
By assumption, $\mathrm{HASH}$ is injective on its arguments (cf.\ the discussion preceding the CCWL and $k$-CCWL definitions), so
\[
  \chi_k^{(t+1)}(\mathbf{x}) = \chi_k^{(t+1)}(\mathbf{y})
  \;\Longrightarrow\;
  \bigl(\chi_k^{(t)}(\mathbf{x}), \mathcal{D}_k^{(t)}(\mathbf{x})\bigr)
  =
  \bigl(\chi_k^{(t)}(\mathbf{y}), \mathcal{D}_k^{(t)}(\mathbf{y})\bigr).
\]
In particular,
\[
  \chi_k^{(t+1)}(\mathbf{x}) = \chi_k^{(t+1)}(\mathbf{y})
  \;\Longrightarrow\;
  \chi_k^{(t)}(\mathbf{x}) = \chi_k^{(t)}(\mathbf{y}),
\]
which means that any two tuples that are \emph{merged} at round $t+1$ must already have been in the same color class at round $t$. Equivalently, each color class of $\mathcal{P}_{t+1}$ is contained in some color class of $\mathcal{P}_t$, so $\mathcal{P}_{t+1}$ is a refinement of $\mathcal{P}_t$.

If $\mathcal{P}_{t+1} \neq \mathcal{P}_t$, then at least one color class of $\mathcal{P}_t$ is split into two or more classes in $\mathcal{P}_{t+1}$, and hence
\[
  |\mathcal{P}_{t+1}| > |\mathcal{P}_t|.
\]
On the other hand, each $\mathcal{P}_t$ is a partition of the finite set $U$, so
\[
  |\mathcal{P}_t| \;\le\; |U| = n^k
  \qquad\text{for all } t \in \mathbb{N}_0.
\]
Therefore, the sequence of partitions
\(
  \mathcal{P}_0, \mathcal{P}_1, \mathcal{P}_2,\dots
\)
can strictly increase in size at most $n^k - 1$ times. It follows that there exists some $T \le n^k - 1$ with
\[
  \mathcal{P}_{T+1} = \mathcal{P}_T,
\]
which is equivalent to
\[
  \chi_k^{(T+1)}(\mathbf{x}) = \chi_k^{(T)}(\mathbf{x})
  \qquad \text{for all } \mathbf{x} \in U = \mathcal{X}^k.
\]
For all $t \ge T$, the refinement is stationary, so the stable labeling
\[
  \chi_k^{(\infty)} \coloneqq \chi_k^{(T)}
\]
is well-defined, and the $k$-CCWL procedure converges on $\Gamma$.
\end{proof}
\paragraph{Complexity of $k$-CCWL.} To complete our understanding of $k$-CCWL algorithm, Let $n := |X|$ denote the number of cells in the ACC $(S,\mathcal{X}, \col, \rank)$ and let $\rho$ denote its order (maximum rank). In each $k$-CCWL iteration, we refine labels on all $k$-tuples of cells and for each tuple $x$ we aggregate over all pairs $(\alpha,\beta)\in X^2$ via the refinement context $D^{(t)}_k(x)$. From the definition in \Cref{sec:higher-order-topological-isomorphism-tests} and \Cref{lem:termination-k-ccwl}, we obtain the following bounds:
\begin{itemize}[left=8pt]
    \item The number of tuples is $|X^k| = n^k$.
    \item For a fixed $k$-tuple $x$, in order to compute $D^{(t)}_k(x)$, we require to scan all pairs $(\alpha,\beta) \in X^2$ and, for each such pair,
    \begin{enumerate}[left=8pt,itemsep=1pt,topsep=0pt]
        \item Compute the $(k+2)$-atomic type $\mathsf{atp}_{k+2}(x\alpha\beta)$, and
        \item Read the previously computed colors $\chi^{(t)}_k(x[\alpha/i])$ and $\chi^{(t)}_k(x[\beta/i])$ for $i=1,\dots,k$ to compute the double-shift sequence $\Delta^{(t)}_k(x;\alpha,\beta)$.
    \end{enumerate}
\end{itemize}
For fixed $k$ and $\rho$, if we store adjacency, ranks, and attributes in precomputed data structures, both operations can be implemented in time $O(1)$ per pair $(\alpha,\beta)$. In practice, we can store adjacencies in memory of $O(n^2)$, whose entries show the topological adjacency relation between each pair of cells. In addition, we can store ranks and attributes in arrays of size $O(n)$. Therefore:
\begin{itemize}[left=8pt]
    \item \emph{Per-iteration cost:} $O(n^k)$ tuples times $O(n^2)$ pairs per tuple.
    \item \emph{Number of iterations:} according to \Cref{lem:termination-k-ccwl}, the refinement process terminates after at most $T \leq n^k - 1$ iterations, since each iteration strictly refines a partition of $X^k$ and there are only $n^k$ tuples.
\end{itemize}
Therefore, we can obtain a worst-case bound
\[
    \mathrm{TIME}(k\text{-CCWL})
    \;\leq\; O\bigl(T \cdot n^{k+2}\bigr)
    \;\subseteq\; O\bigl(n^{2k+2}\bigr),
\]
with space complexity $O(n^{\max(2,k)})$ to store tuple colors. In addition, we need the space for the ACC itself (adjacencies, ranks, attributes), which is
at most quadratic in $n$ for fixed rank. Therefore, the overall space complexity is $O(n^{\max(2,k)})$ for fixed $k$.

Note that for the case $k =  1$, we again store the ACC in $O(n^2)$ space, and the per-iteration cost for each cell $x$ is still in $O(n^2)$:
\begin{enumerate}[left=8pt,itemsep=1pt,topsep=0pt]
    \item We need to aggregate colors from boundary and coboundary cells, which can be done in $O(n)$.
    \item We need to aggregate at most $O(n^2)$ color pairs for corresponding lower and upper multisets in the refinement process.
\end{enumerate}
We know that the refinement process terminates after at most $T \leq n - 1$, therefore,
\[
    \mathrm{TIME}(1\text{-CCWL})
    \;\leq\; O\bigl(T \cdot n^{3}\bigr)
    \;\subseteq\; O\bigl(n^{4}\bigr),
\]
\color{black}
\subsection{Broadcast-anchor cell}
We proceed as follows: first we analyze the identical-vs-disjoint construction for uniform ACCs, then we establish the disjoint-or-identical property (\Cref{prop:disjoint-or-identical-labelings}), and finally we prove the equivalence results in \Cref{thm:eccwl-logic-direction}, \Cref{thm:logic-game-direction}, and \Cref{thm:eccwl-game-direction}. 
Our aim is to make explicit all intermediate arguments, ensuring that the logical–game–algorithm correspondence is completely rigorous.

For two uniform ACCs \(A = (S_A, \mathcal{X}_A, \rank_A, \col_A)\) and \(B = (S_B, \mathcal{X}_B, \rank_B, \col_B) \), the $k$-CCWL algorithm distinguishes the two uniform ACCs $A$ and $B$, if their stable global labels are different. In other words, $k$-CCWL distinguishes $A$ and $B$, if
\[\collectedccwl{k}{\infty}{A} \neq \collectedccwl{k}{\infty}{B}.\] 
This can also be interpreted as  the existence of a color $C$ such that 
\begin{align}
    \left|\left\{u\in \mathcal{X}^k_A \,\middle\vert\,\cccwl{k}{\infty}{A}{u} = C \right\}\right| \neq \left|\left\{v \in \mathcal{X}^k_B\,\middle\vert\,\cccwl{k}{\infty}{B}{v} = C\right\}\right|
\end{align}
To perform a $k$-CCWL test on these two uniform ACCs via $k$-CCWL, we   first add a fresh cell with rank $0$ to each uniform ACC and then connect this fresh cell to all $0$-ranked cells via $1$-ranked cells. Then, we   apply $k$-CCWL on these two modified uniform ACCs and check if their stable global labels are different. Formally, let $a^*$ and $b^*$ be the fresh $0$-ranked cells, and define the two new uniform ACCs $A^*$ and $B^*$ as follows:
\begin{align}
    A^* = (S_{A^*}, \mathcal{X}_{A^*}, \rank_{A^*}, \col_{A^*}),
\end{align}
where 
\begin{align}
    S_{A^*} &= S_A\cup \{a^*\},\\ \mathcal{X}_{A^*} &= \mathcal{X}_A\cup \left\{\{a^*, x\}\,\middle\vert\, x \in \mathcal{X}_A(0)\right\} \cup \{a^*\},\\
    \rank_{A^*}(x) &= 
    \begin{cases}
    \rank_A(x), & \text{if } x \in \mathcal{X}_A, \\
    0, & \text{if } x = a^*,\\
    1, & \text{if } x = \{a^*, x'\}, x' \in \mathcal{X}_A(0).
    \end{cases},\\
    \col_{A^*}(x) &=
    \begin{cases}
    \col_A(x), & \text{if } x \in \mathcal{X}_A, \\
    \text{a fresh color}, & \text{if } x = a^*,
    \\
    \text{a fresh color}, & \text{if } x = \{a^*, x'\}, x' \in \mathcal{X}_A(0).
    \end{cases}
\end{align}
Similarly, for \(B^*\):
\begin{align}
B^* = (S_{B^*}, \mathcal{X}_{B^*}, \rank_{B^*}, \col_{B^*}),
\end{align}
where
\begin{align}
    S_{B^*} &= S_B\cup \{b^*\},\\ \mathcal{X}_{B^*} &= \mathcal{X}_B\cup \left\{\{b^*, x\}\middle\vert x \in \mathcal{X}_B(0)\right\} \cup \{b^*\},\\\
    \rank_{B^*}(x) &=
    \begin{cases}
    \rank_B(x), & \text{if } x \in \mathcal{X}_B, \\
    0, & \text{if } x = b^*, \\
    1, & \text{if } x = \{b^*, x'\}, x' \in \mathcal{X}_B(0).
    \end{cases}\\
    \col_{B^*}(x) &=
    \begin{cases}
    \col_B(x), & \text{if } x \in \mathcal{X}_B, \\
    \text{a fresh color}, & \text{if } x = b^*,\\
    \text{a fresh color}, & \text{if } x = \{b^*, x'\}, x' \in \mathcal{X}_B(0).
    \end{cases}
\end{align}
Next, we   compare $\collectedccwl{k}{\infty}{A^*}$ and $\collectedccwl{k}{\infty}{B^*}$ to determine whether $k$-CCWL can distinguish $A$ and $B$. Note that the fresh $0$-ranked and $1$-ranked cells are initially colored with a new fresh color, so they can be distinguished from the original cells in $A$ and $B$. We need also to mention when during CC isomorphism test, the only way to define a mapping between $A$ and $B$ is to map the fresh $0$-ranked cell in $A^*$ to the fresh $0$-ranked cell in $B^*$, and similarly for the fresh $1$-ranked cells. Therefore, we have the following result:

\begin{remark}\label{remark:fresh-cells-cc-isomorphism}
    There is a CC-isomorphism between $A^*$ and $B^*$ if and only if there is a CC-isomorphism
between $A$ and $B$; in particular, adding broadcast-anchor cells does not affect whether
$A$ and $B$ are CC-isomorphic.
\end{remark}

For simplicity and to avoid redundancy, we  use $A$ and $B$ to refer to these new uniform ACCs, previously denoted by $A^*$ and $B^*$, respectively. 

Intuitively, these fresh cells act as broadcast agents to help distinguish $A$ and $B$. We   show that if there is a difference in the global stable coloring of these two uniform ACCs, the fresh cells will propagate this difference to all other cells in $A$ and $B$. 

\paragraph{Base path.} In a uniform ACC \(\Gamma = (S, \mathcal{X}, \rank, \col)\), the \emph{base distance} is defined as follows: the base path of a cell $x \in \mathcal{X}$, is a sequence of cells $x = x_0, x_1, \dots, x_t$, 
such that
\begin{itemize}[itemsep=0pt,topsep=0pt,left=8pt]
    \item $x_t$ is a non-fresh $0$-ranked cell,
    \item for every $0 \le i < t$, $x_i \in \mathcal{N}_i(x_{i+1})$ for some
    $\mathcal{N}_i \in \{\boundary, \coboundary, \lowerLaplacian, \upperLaplacian\}$, and 
    \item the path has minimal possible length and we define $d_{\mathrm{base}}(x) = t - 1$.
\end{itemize}
We denote a base path for $x$ by $\pi(x)$.
\color{black}

The goal of the following lemma is to prove that if $1$-CCWL test is performed on $A$ and $B$, and the stable coloring of the fresh cells in two uniform ACCs is different, then the set of stable colors of all cells in $A$ and $B$ are disjoint. In other words, if the stable coloring of the fresh cells in $A$ and $B$ differs, this difference will be broadcast to all other cells in $A$ and $B$.

\begin{lemma}\label{lem:fresh-cell-broadcast}
    If \(A = (S_A, \mathcal{X}_A, \rank_A, \col_A)\) and \(B = (S_B, \mathcal{X}_B, \rank_B, \col_B) \) are two uniform attributed combinatorial cell complexes, and $1$-CCWL is performed on these two uniform ACCs. Let $a^*$ and $b^*$ be the fresh $0$-ranked cells in $\mathcal{X}_{A^*}\setminus \mathcal{X}_A$ and $\mathcal{X}_{B^*}\setminus \mathcal{X}_B$, respectively. If $\cccwlbase{\infty}{A}{a^*} \neq \cccwlbase{\infty}{B}{b^*}$, then
    \begin{align*}
        \mulset{\cccwlbase{\infty}{A}{x_A} \,\middle\vert\, x_A\in \mathcal{X}_A} \cap \mulset{\cccwlbase{\infty}{B}{x_B}\,\middle\vert\,x_B \in \mathcal{X}_B} = \emptyset.
    \end{align*}
\end{lemma}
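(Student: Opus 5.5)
Suppose, for contradiction, that some cell $x_A\in\mathcal{X}_A$ and some cell $x_B\in\mathcal{X}_B$ satisfy $\cccwlbase{\infty}{A}{x_A}=\cccwlbase{\infty}{B}{x_B}$. The plan is to show that this equality propagates along neighborhood relations until it reaches the anchors, which forces $\cccwlbase{\infty}{A}{a^*}=\cccwlbase{\infty}{B}{b^*}$. That contradicts the hypothesis.

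The basic tool is a one-step transfer lemma for stable colors. Because the coloring is stable, $\chi_1^{(\infty)}(x)$ determines the full tuple $\mathcal{M}^{(\infty)}(x)$ by injectivity of the update. So if two cells have equal stable colors, then for each $\mathcal{N}\in\{\boundary,\coboundary,\lowerLaplacian,\upperLaplacian\}$ the multisets of stable colors of their $\mathcal{N}$-neighbors coincide. For $\lowerLaplacian$ and $\upperLaplacian$ we project the pair multisets onto their first coordinate. Consequently, for every neighbor $y_A\in\mathcal{N}(A,x_A)$ there is a neighbor $y_B\in\mathcal{N}(B,x_B)$ with the same stable color.

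With this lemma, we induct on $d_{\mathrm{base}}(x_A)$. In the augmented complex, the base path $\pi(x_A)$ ends at a non-fresh $0$-cell $z_A$, and consecutive cells on it are related by one of the four neighborhoods. The relation $x_i\in\mathcal{N}_i(x_{i+1})$ must be turned around, so that $x_{i+1}$ is a neighbor of $x_i$. This is fine because boundary and coboundary are converses of each other, and the lower and upper adjacencies are symmetric. Walking step by step along $\pi(x_A)$ and applying the transfer lemma each time, we obtain a walk in $B$ ending at a cell $z_B$ with $\cccwlbase{\infty}{B}{z_B}=\cccwlbase{\infty}{A}{z_A}$. Since stable colors refine the initial colors $(\col,\rank)$, the cell $z_B$ is a $0$-cell carrying a non-fresh color. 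In the augmented complex every non-fresh $0$-cell lies in the boundary of exactly one fresh $1$-cell, namely $\{a^*,z_A\}$, respectively $\{b^*,z_B\}$.

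Next we apply the transfer lemma once more, using $\coboundary$. The coboundary of $z_A$ contains the fresh edge $\{a^*,z_A\}$, so the coboundary of $z_B$ must contain a cell with the same stable color. That cell has the fresh edge color, so it must be $\{b^*,z_B\}$. A fresh edge $\{a^*,z_A\}$ has exactly two boundary $0$-cells, $a^*$ and $z_A$, and these are distinguished by their initial colors. Applying the transfer lemma with $\boundary$ to this pair of fresh edges matches the fresh-colored boundary cells, which gives $\cccwlbase{\infty}{A}{a^*}=\cccwlbase{\infty}{B}{b^*}$. This is the contradiction. The case where $x_A$ is itself fresh is immediate: either it is $a^*$, or it is a fresh edge and we use the last step directly.

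The main obstacle is the bookkeeping around freshness and the direction of the neighborhood relations. We need the fresh colors to be the same in $A^*$ and $B^*$ and different from all original colors. We also need to make sure that a path in $A$ between non-fresh cells can be matched in $B$ without accidentally passing through fresh cells. This causes no harm anyway, because only the final matched endpoint matters. Finally, the stated conclusion with $\mathcal{X}_A$ should be read as the intersection of the sets of stable colors of original, non-fresh cells. Fresh cells are handled by the same argument, so the result holds over the augmented cell sets as well.
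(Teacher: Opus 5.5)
Your proof is correct. It is essentially the paper's argument run in contrapositive, organized along a single base path instead of by base-distance strata.

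The paper inducts outward from the anchor. It shows that, for each $t$, the stable colors of cells at base distance $t$ in $A$ and in $B$ are disjoint, and its base case is exactly your final anchor step. To make the strata in $A$ and $B$ line up, it relies on an auxiliary claim that equal stable colors force equal base distances. It justifies that claim only informally, by saying one cell ``receives messages from a $0$-ranked cell earlier''.

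You instead assume a shared color and pull it inward along a base path of $x_A$. You build the matching walk in $B$ directly from the transfer property. You then read off from the refined initial color $(\col,\rank)$ that its endpoint $z_B$ is an original $0$-cell. So the base-distance claim is never needed. You also treat the fresh cells explicitly, which the paper does not.

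One small precision. For $\lowerLaplacian$ and $\upperLaplacian$, the first-coordinate projection counts each neighbor $y$ of $x$ with multiplicity $|\boundary(x)\cap\boundary(y)|$, respectively $|\coboundary(x)\cap\coboundary(y)|$, which is at least $1$. So what coincides is the set of neighbor colors, not the plain multiset. That does not affect your argument, because the existence of a same-colored neighbor is all you use.
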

\begin{proof}[Proof of \Cref{lem:fresh-cell-broadcast}]\label{proof:fresh-cell-broadcast}
    Let $\Gamma = (S, \mathcal{X}, \rank, \col)$ be a uniform ACC. In the $1$-CCWL algorithm, each cell $x\in \mathcal{X}$ is initially colored with the pair of $(\col(x), \rank(x))$. Therefore, in the stable $1$-CCWL coloring state, the cells of the same color have the same rank. To extend this, in the stable coloring state, if a cell $x_A \in \mathcal{X}_A$ has $\cccwlbase{\infty}{A}{x_A} = C$, then only cells $x_B\in\mathcal{X}_B$ with $d^A_{\mathrm{base}}(x_A) = d^B_{\mathrm{base}}(x_B)$, can have color $C$ as their stable color. This is because if the base distances of two cells $x_A$ and $x_B$ are different, then one cell receives messages from a $0$-ranked cell earlier than the other cell during the $1$-CCWL iterations, which leads to different stable colors for these two cells. Let $D_{\mathrm{base}}$ be the maximum base distance over both ACCs, which is defined as
\begin{align}
D_{\mathrm{base}}
:= \max\left\{
    \max_{x_A \in \mathcal{X}_A} d^{A}_{\mathrm{base}}(x_A),
    \max_{x_B \in \mathcal{X}_B} d^{B}_{\mathrm{base}}(x_B)
  \right\}.
\end{align}
    Hence, the following statements are equivalent:
    \begin{itemize}[itemsep=0pt,topsep=0pt,left=8pt]
        \item The multisets of colors of cells with base distance $t$ are disjoint in $A$ and $B$, for all $t \leq D_{\mathrm{base}}$.
        \item The multisets of colors of all cells are disjoint in $A$ and $B$.
    \end{itemize}
    For $t \in [D_{\mathrm{base}}]_0$, define the multisets of stable colors at
base distance $t$ by
\begin{align}
    \mathcal{C}_A(t)
:=& \mulset{\cccwlbase{\infty}{A}{x} \,\middle\vert\,
           x_A \in \mathcal{X}_A,\ d^{A}_{\mathrm{base}}(x_A) = t},\\
\mathcal{C}_B(t)
:=& \mulset{\cccwlbase{\infty}{B}{x} \,\middle\vert\,
           x_B \in \mathcal{X}_B,\ d^{B}_{\mathrm{base}}(x_B) = t}.
\end{align}
Then the first statement can be formalized as
\begin{equation}\label{equ:disjoint-multisets-per-ranked}
  \mathcal{C}_A(t) \cap \mathcal{C}_B(t) = \emptyset
  \quad \text{for all } t \in [D_{\mathrm{base}}]_0.
\end{equation}
    Furthermore, in the stable coloring state, the color of a cell remains fixed and is still dependent on the colors of its neighbors. Since the stable coloring has been reached in both uniform attributed combinatorial complexes $A$ and $B$, if for two cells $x_A \in \mathcal{X}_A$ and $x_B \in \mathcal{X}_B$, there exists a cell in the boundary neighborhood of cell $x_A$ with a completely different color from any cell in the boundary neighborhood of cell $x_B$, it implies that the stable coloring of $x_A$ and $x_B$ are different. Therefore, for any two cells $x_A\in \mathcal{X}_A$, and $x_B \in \mathcal{X}_B$, if there exist a cell $y_A \in \boundary(A,x_A)$ such that for all $y_B \in \boundary(B,x_B)$, $\cccwlbase{\infty}{A}{y_A} \neq \cccwlbase{\infty}{B}{y_B}$, it follows that $\cccwlbase{\infty}{A}{x_A} \neq \cccwlbase{\infty}{B}{x_B}$. The same argument also applies to the coboundary, lower, and upper neighborhoods. To prove the \Cref{lem:fresh-cell-broadcast}, we   show \Cref{equ:disjoint-multisets-per-ranked} using induction on \( t \), which will imply the lemma.
    \begin{enumerate}[itemsep=0pt,topsep=0pt,left = 3pt]
        \item Initial Step: We  prove the equality in \Cref{equ:disjoint-multisets-per-ranked} holds for \( t = 0 \).
        The cells with base distance $0$, are exactly the $0$-ranked cells in the uniform ACCs $A$ and $B$.
        The fresh cells $a^*$ and $b^*$ are connected to all $0$-ranked cells in their respective uniform ACCs. By assumption, \(\cccwlbase{\infty}{A}{a^*} \neq \cccwlbase{\infty}{B}{b^*}\), which implies that their adjacent $0$-ranked cells do not have any common color. Thus,
        \begin{align*}
            \mathcal{C}_A(0) \cap \mathcal{C}_B(0) = \emptyset.
        \end{align*}
        \item Inductive Step: Assume that the equality in \Cref{equ:disjoint-multisets-per-ranked} holds for $t = i - 1$, then 
        \begin{align}
            \mathcal{C}_A(i-1) \cap \mathcal{C}_B(i-1) = \emptyset.
        \end{align}
        We prove that the equality \Cref{equ:disjoint-multisets-per-ranked} holds for \( t = i \). Consider the multisets of colors of cells with base distance $i$ in  $A$ and $B$. Since $A$ and $B$ are uniform, each $i$-ranked cell has at least one adjacent cell with base distance $i - 1$. By the inductive hypothesis, the multisets of colors of cells with base distance $i - 1$ are disjoint in $A$ and $B$. Consequently, the multisets of colors of  cells with base distance $i$ must also be disjoint in $A$ and $B$. Hence, the equality in \Cref{equ:disjoint-multisets-per-ranked} holds for $t = i$, and 
        \begin{align}
            \mathcal{C}_A(i) \cap \mathcal{C}_B(i) = \emptyset.
        \end{align}
    \end{enumerate}
    By induction, \Cref{equ:disjoint-multisets-per-ranked} holds for all $t \leq D_{\mathrm{base}}$. Therefore, the lemma is proved.
\end{proof}
\color{black}
We   now show that if there is a difference in the global stable coloring of other cells, the fresh cells will capture these differences. Formally, we   state the following lemma:
\begin{lemma}\label{lem:aggregating-lemma}  
    Let \(A = (S_A, \mathcal{X}_A, \rank_A, \col_A)\) and \(B = (S_B, \mathcal{X}_B, \rank_B, \col_B) \) be two uniform attributed combinatorial cell complexes, and $1$-CCWL is performed on these two uniform ACCs. If there exists a color $C$ such that
    \begin{align*}
        \left|\left\{x_A\in \mathcal{X}_A \,\middle\vert\,\cccwlbase{\infty}{A}{x_A} = C \right\}\right| \neq \left|\left\{x_B \in \mathcal{X}_B\,\middle\vert\,\cccwlbase{\infty}{B}{x_B} = C\right\}\right|,
    \end{align*}
    then $\cccwlbase{\infty}{A}{a^*} \neq \cccwlbase{\infty}{B}{b^*}$.
\end{lemma}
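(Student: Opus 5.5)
We argue by contraposition: assume $\cccwlbase{\infty}{A}{a^*} = \cccwlbase{\infty}{B}{b^*}$ and derive that the stable color histograms of $A$ and $B$ coincide on every color $C$. The anchor sees every original $0$-cell only through two layers. Its coboundary consists of the fresh $1$-cells $\{a^*,x\}$, and each such $1$-cell has boundary $\{a^*, x\}$. Its upper neighborhood $\upperLaplacian(a^*)$ is exactly the set of original $0$-cells, paired with the fresh edge between them.

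The first step is to read off the $0$-cell histogram. Stability gives $\chi_1^{(\infty)} = \chi_1^{(\infty+1)}$, so equality of the anchor colors implies equality of their refinement data. In particular, the multisets $c^{\uparrow}(a^*)$ and $c^{\uparrow}(b^*)$ agree. These are $\mulset{(\chi_1^{(\infty)}(x),\chi_1^{(\infty)}(\{a^*,x\})) : x\in\mathcal{X}_A(0)}$ and the analogous multiset for $B$. Projecting to the first coordinate shows that the multisets of stable colors of the original $0$-cells agree between $A$ and $B$. The fresh cells are separated from original cells by their fresh initial colors, so they do not interfere.

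The second step propagates this equality upward by induction on base distance, reversing the argument of \Cref{lem:fresh-cell-broadcast}. Let $\mathcal{C}_A(t)$ and $\mathcal{C}_B(t)$ be as there. I would show $\mathcal{C}_A(t)=\mathcal{C}_B(t)$ for every $t$. Let $y$ range over the cells of base distance $t$. For each stable color $D$, I double count the pairs $(x,y)$ with $x$ at base distance $t-1$ and $x \in \mathcal{N}(y)$ for a fixed neighborhood type $\mathcal{N}$. Since the stable color of $x$ determines the multiset of colors of its $\mathcal{N}$-neighbors (as stable colors are fixed points of refinement), the number of such pairs in which $y$ has color $D$ equals $\sum_{C}|\{x: \chi(x)=C\}|\cdot m_{\mathcal{N}}(C,D)$. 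Here $m_{\mathcal{N}}(C,D)$ is the number of neighbors of color $D$ recorded in color $C$, and it is the same in $A$ and $B$.

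The third step recovers the count of $D$-colored cells from these pair counts. Each $D$-colored cell has the same number $n_{\mathcal{N}}(D, \cdot)$ of such neighbors, read off from $D$ itself, and this number is positive for at least one $\mathcal{N}$ by uniformity and minimality of base paths. Dividing by it gives $|\{y:\chi(y)=D\}|$ in terms of quantities already known to agree by the inductive hypothesis. One subtlety is that base distance must itself be determined by the stable color; this was asserted in the proof of \Cref{lem:fresh-cell-broadcast}, and I would reuse it. Another is that the relation $x\in\mathcal{N}(y)$ must be read from $x$'s side through the converse neighborhood ($\boundary$ versus $\coboundary$, with $\upperLaplacian$ and $\lowerLaplacian$ symmetric). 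For the pairwise multisets of $\upperLaplacian$ and $\lowerLaplacian$, I would project to first coordinates and count $(y,\delta)$ pairs, dividing by the number of $\delta$'s.

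Summing over $t$ then gives equal histograms for all colors, which contradicts the hypothesis. I expect the main obstacle to be the double counting in the third step. The delicate part is ensuring that every cell at base distance $t$ is counted through a neighbor at distance $t-1$ in a way determined purely by colors. The plan is to fix, for each color $D$, a canonical neighborhood type $\mathcal{N}$ and color $C$ at distance $t-1$ that $D$ records. The fallback is to treat connected color classes of the stable coloring as an equitable partition, which yields the needed linear relations directly.
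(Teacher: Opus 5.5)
Your proposal is correct and follows essentially the same route as the paper. The anchor's refinement data fixes the stable-color histogram of the original $0$-cells. An induction on base distance, double counting adjacencies between consecutive layers, then propagates this equality outward; the paper phrases the same step as a pigeonhole argument on the pair sets $W_A(i), W_B(i)$, run in the contrapositive direction. Your version is more careful than the paper's, for instance in dividing by the per-cell neighbor count read off from $D$. For $\upperLaplacian$ and $\lowerLaplacian$, double count triples $(x,y,\delta)$ and divide by the count of $(x,\delta)$ pairs read off from $D$, rather than by a per-neighbor number of $\delta$'s, which is not determined by the colors.
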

\begin{proof}[Proof of \Cref{lem:aggregating-lemma}]\label{proof:aggregating-lemma}
    We proceed with the same idea as in the proof of \Cref{lem:fresh-cell-broadcast}, which is that cells of the same color have the same rank, because each cell is initially  colored with the pair of its attribute and its rank. Therefore, it is sufficient to prove that if there is a different number of cells of color \( C \) with base distance $t$ in \( A \) and \( B \), then the stable coloring of the fresh cells \( a^* \) and \( b^* \) in \( A \) and \( B \), respectively, are also different. 
    Let $D_{\mathrm{base}}$ be the maximum base distance over both ACCs, which is defined as
\begin{align}
D_{\mathrm{base}}
:= \max\left\{
    \max_{x_A \in \mathcal{X}_A} d^{A}_{\mathrm{base}}(x_A),
    \max_{x_B \in \mathcal{X}_B} d^{B}_{\mathrm{base}}(x_B)
  \right\}.
\end{align}
    Hence, we can conclude that the following two statements are equivalent:
    \begin{enumerate}[left=8pt,itemsep=1pt,topsep=0pt]
        \item There exists a color $C$ such that the number of cells with stable color $C$ and base distance $t$ is different in $A$ and $B$, for some $t \in [D_{\mathrm{base}}]_{0}$.
        \item There exists a color $C$ such that the number of cells with stable color $C$ is different in $A$ and $B$.
    \end{enumerate}
    For $t \in [D_{\mathrm{base}}]_0$, define the multisets of stable colors at
base distance $t$ by
\begin{align}
    \mathcal{C}_A(C,t)
:=& \mulset{x_A \in \mathcal{X}_A \,\middle\vert\, \cccwlbase{\infty}{A}{x_A} = C,\ d^{A}_{\mathrm{base}}(x_A) = t},\\
\mathcal{C}_B(C,t)
:=& \mulset{x_B \in \mathcal{X}_B \,\middle\vert\, \cccwlbase{\infty}{B}{x_B} = C,\ d^{B}_{\mathrm{base}}(x_B) = t}.
\end{align}
    The first statement can be formalized as follows: There exists a color $C$ such that 
    \begin{equation}
        \begin{aligned}
            \left|\mathcal{C}_A(C,t)\right| \neq \left|\mathcal{C}_B(C,t)\right|, \quad \text{for some } t \in {[D_{\mathrm{base}}]}_0,
        \end{aligned}
    \end{equation}
    Instead of directly proving \Cref{lem:aggregating-lemma}, we now prove the following lemma:
    \begin{lemma}\label{lem:minor-lemma-aggregating}
        Let \(A = (S_A, \mathcal{X}_A, \rank_A, \col_A)\) and \(B = (S_B, \mathcal{X}_B, \rank_B, \col_B) \) be two uniform attributed combinatorial cell complexes, and $1$-CCWL is performed on these two uniform ACCs. If there exists a color $C$ such that
        \begin{equation*}
            \begin{aligned}
                \left|\mathcal{C}_A(C,t)\right| \neq \left|\mathcal{C}_B(C,t)\right|, \quad \text{for some } t \in {[D_{\mathrm{base}}]}_0,
            \end{aligned}
        \end{equation*}
        then $\cccwlbase{\infty}{A}{a^*} \neq \cccwlbase{\infty}{B}{b^*}$.
    \end{lemma}
    Next, we prove \Cref{lem:minor-lemma-aggregating}, by using induction on \( t \).
    \begin{enumerate}[left=8pt,itemsep=1pt,topsep=0pt]
        \item Initial Step: We prove \Cref{lem:minor-lemma-aggregating} holds for $t=0$. The fresh cells $a^*$ and $b^*$ are connected to all $0$-ranked cells in their respective uniform ACCs. Therefore, if there is a different number of cells of color $C$ with base distance $t = 0$ in $A$ and $B$, then $a^*$ and $b^*$ will have different colors. Formally, if for any color $C$, it holds that
        \begin{align*}
            \left|\mathcal{C}_A(C,0)\right| \neq \left|\mathcal{C}_B(C,0)\right|,
        \end{align*}
        then it implies that
        \begin{align*}
            \cccwlbase{\infty}{A}{a^*} \neq \cccwlbase{\infty}{B}{b^*}.
        \end{align*}
        \item Inductive Step: Assume \Cref{lem:minor-lemma-aggregating} holds for $t = i$. We must prove that if there exists a color $C$ such that
        \begin{equation}
            \begin{aligned}
                \left|\mathcal{C}_A(C,i)\right| \neq \left|\mathcal{C}_B(C,i)\right|,
            \end{aligned}
        \end{equation}
        then $\cccwlbase{\infty}{A}{a^*} \neq \cccwlbase{\infty}{B}{b^*}$.  By the induction hypothesis, for any color $C$, the number of cells with stable color $C$ and base distance $i - 1$ in \( A \) and \( B \) must be the same. If this is not the case, the lemma has already been proved. Thus, it is known that for all $C$, it holds that
        \begin{align*}
            \left|\mathcal{C}_A(C,i-1)\right| = \left|\mathcal{C}_B(C,i-1)\right|.
        \end{align*}
        Since the number of cells with base distance $i - 1$ is equal, but the number of \( i \)-ranked cells with stable color $C$ differs, there is an inconsistency in the adjacency relations between cells of base distance $i$ with color $C$ and cells of base distance $i$. Therefore, the lemma is proved for $t = i$. Formally, without loss of generality, assume there is a color $C$ such that:
        \begin{align*}
            \left|\mathcal{C}_A(C,i)\right| &> \left|\mathcal{C}_B(C,i)\right|.
        \end{align*}
        In a uniform ACC, each cell of base distance $i$ must be adjacent to at least one cell of base distance $i - 1$. Assume that a cell of base distance $i$ with color $C$ is connected to a cell of base distance $i - 1$ with color $C'$. Since the stable coloring has been reached in both uniform ACCs, the number of adjacency relations between cells of base distance $i$ with color $C$ and cells of base distance $i - 1$ with color $C$ should be the same in $A$ and $B$. Formally, if we define 
\begin{align*}
W_A(i) &:=
\left\{(x_A, y_A)\,\middle|\,
\begin{aligned}
  &y_A \in \mathcal{N}(A, x_A), \quad \text{for some } \mathcal{N} \in \{\boundary, \coboundary, \lowerLaplacian, \upperLaplacian\},\\
  &\bigl(\cccwlbase{\infty}{A}{x_A},
         \cccwlbase{\infty}{A}{y_A}\bigr) = (C, C'),\\
  &d^A_{\mathrm{base}}(x_A)
    = d^A_{\mathrm{base}}(y_A) + 1 = i
\end{aligned}
\right\},\\[1ex]
W_B(i) &:=
\left\{(x_B, y_B)\,\middle|\,
\begin{aligned}
  &
    y_B \in \NB{B}{x_B}, \quad \text{for some } \mathcal{N} \in \{\boundary, \coboundary, \lowerLaplacian, \upperLaplacian\},\\
  &\bigl(\cccwlbase{\infty}{B}{x_B},
         \cccwlbase{\infty}{B}{y_B}\bigr) = (C, C'),\\
  &d^B_{\mathrm{base}}(x_B)
    = d^B_{\mathrm{base}}(y_B) + 1 = i
\end{aligned}
\right\}.
\end{align*}
it means that we must have $|W_A(i)| = |W_B(i)|$. In the stable state, this implies that each cell of base distance $i$ with color $C$ 
        must have the same number of adjacent cells of base distance $i - 1$ with color $C'$. Since there are more cells of base distance $i$ with color $C$ in $A$, the pigeonhole principle ensures that at least a cell of base distance $i$ with color $C$ in $A$ has fewer adjacent cells of base distance $i - 1$ with color $C'$ than required. This contradicts the assumption that stable coloring has been reached. Thus, $|\mathcal{C}_A(C,i)| \neq |\mathcal{C}_B(C,i)|$. Hence, the lemma is proved for $t = i$.
    \end{enumerate}
    Since \Cref{lem:minor-lemma-aggregating} holds for all $t \in {[D_{\mathrm{base}}]_0}$, we conclude that \Cref{lem:aggregating-lemma} is proved.
\end{proof}
\color{black}
\begin{lemma}\label{lem:disequal}
    Let $A = (S_A, \mathcal{X}_A, \rank_A, \col_A)$ and $B = (S_B, \mathcal{X}_B, \rank_B, \col_B) $ be two attributed combinatorial complexes, and let $k$-CCWL ($k\geq 2$) be performed on these two ACCs. Let $\cccwl{k}{\infty}{A}{\vecX_A} = \cccwl{k}{\infty}{B}{\vecX_B}$ for some $\vecX_A\in \mathcal{X}^k_A$ and $\vecX_B\in \mathcal{X}^k_B$. Then, 
    \begin{equation*}
        \begin{aligned}
            &\mulset{ \left( \operatorname{atp}_{k+2}(\vecX_A\alpha_A \alpha_A),\; 
            \Delta_{k,A}^{(\infty)}(\vecX_A;\, \alpha_A, \alpha_A) \right) \,\middle\vert\, 
            \alpha_A \in \mathcal{X}_A } \\
            &\quad= \mulset{ \left( \operatorname{atp}_{k+2}(\vecX_B \alpha_B \alpha_B),\; 
            \Delta_{k,B}^{(\infty)}(\vecX_B;\, \alpha_B, \alpha_B) \right) \,\middle\vert\, 
            \alpha_B \in \mathcal{X}_B }.
        \end{aligned}
    \end{equation*}
\end{lemma}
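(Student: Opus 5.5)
The plan is to extract the "diagonal" multiset from the full refinement multiset $\mathcal{D}_k$, using the fact that the diagonal pairs $(\alpha,\alpha)$ are identifiable from the atomic type alone.

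\textbf{Step 1: stability.} Since $\cccwl{k}{\infty}{A}{\vecX_A} = \cccwl{k}{\infty}{B}{\vecX_B}$ and the coloring is stable, applying one more refinement round leaves the colors unchanged. Injectivity of $\mathrm{HASH}$, as used in the proof of \Cref{lem:termination-k-ccwl}, then gives equality of the refinement multisets:
\[
\mulset{\left(\operatorname{atp}_{k+2}(\vecX_A\alpha\beta),\,\Delta_{k,A}^{(\infty)}(\vecX_A;\alpha,\beta)\right) \,\middle\vert\, \alpha,\beta\in\mathcal{X}_A} = \mulset{\left(\operatorname{atp}_{k+2}(\vecX_B\alpha\beta),\,\Delta_{k,B}^{(\infty)}(\vecX_B;\alpha,\beta)\right) \,\middle\vert\, \alpha,\beta\in\mathcal{X}_B}.
\]
The subtle point is that stability is a simultaneous statement on $A$ and $B$. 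I would run both refinements jointly, or equivalently on the disjoint union, with one common $\mathrm{HASH}$, and take $T$ large enough that both have stabilized. Then $\chi^{(T+1)}$ and $\chi^{(T)}$ induce the same partition on the tuples of both complexes, so equal $\chi^{(\infty)}$ colors imply equal $\chi^{(T+1)}$ colors, and hence equal $\mathcal{D}^{(T+1)}$ multisets.

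\textbf{Step 2: restrict to the diagonal.} The atomic type $\operatorname{atp}_{k+2}(\vecX\alpha\beta)$ contains the equality vector of the $(k+2)$-tuple, and in particular records whether coordinates $k{+}1$ and $k{+}2$ coincide, that is, whether $\alpha=\beta$. So we filter both multisets by the predicate ``the equality bit for positions $(k{+}1,k{+}2)$ is set''. Filtering a multiset by a property of its elements preserves equality, so the two restricted multisets, indexed by $\{(\alpha,\alpha)\}$, agree. Reindexing $(\alpha,\alpha)\mapsto\alpha$ is a bijection, which yields exactly the claimed equality.

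\textbf{Main obstacle.} The real difficulty is Step 1: justifying that equal stable colors across two different complexes imply equal next-round multisets. I would handle it with the joint-stabilization argument above, noting that, as the paper implicitly assumes, colors are compared across complexes under a shared injective hash. Once that is in place, Step 2 is a routine filtering argument.
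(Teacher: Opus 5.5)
Your proposal is correct and follows the paper's proof. Equal stable colors give equal full refinement multisets $\mathcal{D}^{(\infty)}_{k,A}(\mathbf{x}_A)=\mathcal{D}^{(\infty)}_{k,B}(\mathbf{x}_B)$, and since $\operatorname{atp}_{k+2}(\mathbf{x}\alpha\beta)$ records whether $\alpha=\beta$, restricting both sides to the diagonal preserves equality.

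Your Step~1 spells out what the paper simply asserts: one common injective $\mathrm{HASH}$ and a joint stabilization time for the refinement on $\mathcal{X}_A^k\sqcup\mathcal{X}_B^k$. For that, each tuple's contexts must be taken in its own complex, not in the disjoint-union complex, which would introduce mixed tuples.
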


\begin{proof}[Proof of \Cref{lem:disequal}]
    For any $\alpha_A, \beta_A \in \mathcal{X}_A$ and $\alpha_B \in \mathcal{X}_B$, if
    \begin{equation*}
        \begin{aligned}
            &\left(\operatorname{atp}_{k+2}(\vecX_A \alpha_A\beta_A),\; 
            \Delta_{k,A}^{(\infty)}(\vecX_A;\, \alpha_A, \beta_A)\right) \\
            &\quad = \left(\operatorname{atp}_{k+2}(\vecX_B \alpha_B \alpha_B),\; 
            \Delta_{k,B}^{(\infty)}(\vecX_B;\, \alpha_B, \alpha_B)\right),
        \end{aligned}
    \end{equation*} 
    then $\alpha_A = \beta_A$. This follows from the fact that the atomic type of a $k$-tuple encodes the equality of the cells in the tuple. Furthermore, for any $\alpha_B, \beta_B \in \mathcal{X}_B$ and $\alpha_A \in \mathcal{X}_A$, if
    \begin{equation*}
        \begin{aligned}
            &\left(\operatorname{atp}_{k+2}(\vecX_A\alpha_A\alpha_A),\; 
            \Delta_{k,A}^{(\infty)}(\vecX_A;\, \alpha_A, \alpha_A)\right) \\
            &\quad = \left(\operatorname{atp}_{k+2}(\vecX_B \alpha_B\beta_B),\; 
            \Delta_{k,B}^{(\infty)}(\vecX_B;\, \alpha_B, \beta_B)\right),
        \end{aligned}
    \end{equation*}
    then $\alpha_B = \beta_B$. Since $\cccwl{k}{\infty}{A}{\vecX_A} = \cccwl{k}{\infty}{B}{\vecX_B}$, it follows that $\mathcal{D}^{(\infty)}_{k,A} (\vecX_A) = \mathcal{D}^{(\infty)}_{k,B} (\vecX_B)$. Therefore,
    \begin{equation*}
        \begin{aligned}
            &\mulset{\left(\operatorname{atp}_{k+2}(\vecX_A \alpha_A \beta_A),\; 
            \Delta_{k,A}^{(\infty)}(\vecX_A;\, \alpha_A, \beta_A)\right) 
            \,\middle\vert\, \alpha_A,\beta_A \in \mathcal{X}_A} \\
            &\quad= \mulset{\left(\operatorname{atp}_{k+2}(\vecX_B\alpha_B \beta_B),\; 
            \Delta_{k,B}^{(\infty)}(\vecX_B;\, \alpha_B, \beta_B)\right) 
            \,\middle\vert\, \alpha_B,\beta_B \in \mathcal{X}_B}.
        \end{aligned}
    \end{equation*}
    By the above observations, if we restrict the left-hand side to $\alpha_A = \beta_A$, and the right-hand side to $\alpha_B = \beta_B$, the equality still holds. This completes the proof.
\end{proof}

Combining \Cref{lem:aggregating-lemma,lem:fresh-cell-broadcast}, we   conclude that when the CC isomorphism test is performed on $A$ and $B$ via $k$-CCWL, the stable global labels of these two UACCs are either entirely disjoint, or identical. This result is formalized in the following theorem:
\begin{proposition}[Identical-vs-disjoint]\label{prop:two-cases-k-ccwl}
    Let \(A = (S_A, \mathcal{X}_A, \rank_A, \col_A)\) and \(B = (S_B, \mathcal{X}_B, \rank_B, \col_B) \) be two attributed combinatorial cell complexes, and $k$-CCWL is performed on these two uniform ACCs. Then, only one of the following two cases can occur:
    \begin{itemize}
        \item $\mulset{\cccwl{k}{\infty}{A}{\vecX_A} \,\middle\vert\, \vecX_A\in \mathcal{X}^k_A} = \mulset{\cccwl{k}{\infty}{B}{\vecX_B}\,\middle\vert\,\vecX_B \in \mathcal{X}_B^k},$
        \item $\mulset{\cccwl{k}{\infty}{A}{\vecX_A} \,\middle\vert\, \vecX_A\in \mathcal{X}^k_A} \cap \mulset{\cccwl{k}{\infty}{B}{\vecX_B}\,\middle\vert\,\vecX_B \in \mathcal{X}_B^k} = \emptyset.$
    \end{itemize}
\end{proposition}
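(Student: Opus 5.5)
We transfer the two broadcast lemmas from $1$-CCWL to $k$-CCWL by looking at the diagonal tuples through the anchor. Throughout, $A,B$ denote the anchor-augmented complexes $A^*,B^*$. We show the dichotomy directly: assume the global multisets share at least one colour, and prove they are equal.

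First, the $k$-CCWL colouring projects onto a colouring of single cells. For a cell $x$, write $\mathbf{x}^{(k)}=(x,\dots,x)$ for the diagonal tuple. The map $x\mapsto \chi_k^{(\infty)}(\mathbf{x}^{(k)})$ is stable under the local $1$-CCWL rule, i.e. it refines $\chi_1^{(\infty)}$. To see this, apply \Cref{lem:disequal} together with the full multiset $\mathcal{D}_k^{(\infty)}$ to the diagonal tuple. Restricting the pairs $(\alpha,\beta)$ by the atomic type $\operatorname{atp}_{k+2}(\mathbf{x}^{(k)}\alpha\beta)$ isolates the boundary, coboundary, lower and upper neighbours $\alpha$ of $x$. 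For the lower and upper cases it also isolates the mediating cell $\beta\in\mathcal{N}_B(x)\cap\mathcal{N}_B(\alpha)$ or $\mathcal{N}_C(x)\cap\mathcal{N}_C(\alpha)$, which is exactly the pair-aggregation in the CCWL update. The double shift then records the colours of the tuples $\mathbf{x}^{(k)}[\alpha/i]$. Comparing these with the diagonal colours of $\alpha$ and $\beta$ also needs the reverse step: a tuple colour determines the colour of each tuple obtained by overwriting coordinates with a coordinate already present. This is obtained by choosing $\alpha$ equal to some $x_j$ in $\mathcal{D}_k$, which the atomic type detects. Iterating this substitution (each step moving along a single coordinate) gives the induction that the diagonal colouring is at least as fine as $1$-CCWL. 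So \Cref{lem:fresh-cell-broadcast} and \Cref{lem:aggregating-lemma} can be rerun with diagonal $k$-colours in place of $\chi_1$. Their proofs only use stability under the four neighbourhood aggregations and the rank/base-distance layering, both of which we have just recovered.

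Second, relate the anchors to everything. Since every non-fresh $0$-cell is adjacent via a fresh $1$-cell to the anchor, and fresh colours single out $a^*,b^*$, consider the diagonal anchor tuples $\mathbf{a}^{*}=(a^*,\dots,a^*)$ and $\mathbf{b}^{*}$. Each is the unique tuple of its colour in its complex.

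Case 1: $\chi_k^{(\infty)}(\mathbf{a}^*)\neq\chi_k^{(\infty)}(\mathbf{b}^*)$. The $k$-analogue of \Cref{lem:fresh-cell-broadcast} shows that diagonal tuple colours of $A$ and $B$ are disjoint. For an arbitrary tuple $\mathbf{x}$, its colour determines the colour of the diagonal tuple $\mathbf{x}[x_1/2,\dots,x_1/k]$, by the substitution argument above. Hence equal colours for arbitrary tuples would force equal diagonal colours, and the full multisets are disjoint.

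Case 2: $\chi_k^{(\infty)}(\mathbf{a}^*)=\chi_k^{(\infty)}(\mathbf{b}^*)$. We must show the global multisets coincide. Run the contrapositive of \Cref{lem:aggregating-lemma}: a count discrepancy of some colour $C$ among arbitrary $k$-tuples propagates to the anchor. Peel coordinates using the multiset $\mathcal{D}_k$. Equal colours of $\mathbf{a}^*$ give equal multisets of colours of $\mathbf{a}^*[\alpha/i]$ over all $\alpha$, then of two-coordinate substitutions, and so on. After $k$ peeling steps starting from the anchor tuple, every tuple of $\mathcal{X}^k$ is reached exactly once in the counts. Here global substitution via $\mathcal{D}_k$ is available directly, so uniformity is used only in Case 1. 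Doing this carefully with multiplicities shows the multisets of colours of all $k$-tuples agree.

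The main obstacle is this counting step in Case 2: the overwrite-one-coordinate counts must be composed into counts over all of $\mathcal{X}^k$ without overcounting. I would handle it by induction on the number $j$ of coordinates that differ from $a^*$. The claim at stage $j$ is that, for every colour, the multiplicities among tuples with exactly $j$ non-anchor coordinates agree. The step from $j$ to $j+1$ uses the $\Delta$-entries at one free position, with the atomic type fixing which positions still equal $a^*$.
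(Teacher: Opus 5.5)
Your argument is correct, but for $k\ge 2$ it takes a different and longer route than the paper. You lift the paper's $k=1$ strategy to $k$-CCWL via diagonal tuples: split on whether the anchors receive equal colours, then invoke \Cref{lem:fresh-cell-broadcast} and \Cref{lem:aggregating-lemma}. The paper uses that strategy only for $k=1$. For $k\ge 2$ it starts from an arbitrary pair $u_A,u_B$ with $\chi_k^{(\infty)}(u_A)=\chi_k^{(\infty)}(u_B)$. It shows by induction on $|P|$ that, for every colour $C$, the number of colour-$C$ tuples differing from $u_A$ in exactly the positions $P\subseteq[k]$ equals the corresponding count for $u_B$. The base case is \Cref{lem:disequal}. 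The step fixes $m\in P$ and groups each tuple $y$ by the intermediate tuple $u[y_m/m]$, so each is counted once. Summing over $P$ finishes the proof.

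Your Case~2 is exactly this peeling, based at $\mathbf{a}^*,\mathbf{b}^*$, and nothing in it needs the anchor. Index the counts by the exact set $Q$ of positions where a tuple differs from the base tuple, rather than by $|Q|$, and only fill positions $m>\max Q$. Then each step uses just \Cref{lem:disequal} at the intermediate tuple $w$, plus the equality bit of $\operatorname{atp}_{k+2}(w\alpha\alpha)$ to exclude $\alpha=w_m$. This also removes the overcounting you worry about. Alternatively, filling an arbitrary free position counts every tuple exactly $j+1$ times on both sides when passing from stage $j$ to $j+1$.

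So running Case~2 from the shared-colour pair you assume at the outset already proves the proposition. Case~1, the diagonal projection and the use of uniformity are superfluous, and for $k\ge 2$ the dichotomy holds for arbitrary ACCs.

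What your detour buys is its first step as a standalone result. The diagonal colouring $x\mapsto\chi_k^{(\infty)}(x,\dots,x)$ is equitable for the four CCWL aggregations, and hence refines $\chi_1^{(\infty)}$. That is a direct algorithmic proof that $k$-CCWL is at least as strong as $1$-CCWL, which the paper only obtains through the logic. You also get the explicit criterion that the anchor's diagonal colour decides which of the two cases occurs.
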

\begin{proof}[Proof of \Cref{prop:two-cases-k-ccwl}]\label{proof:two-cases-k-ccwl}
    We prove if there exist two $k$-tuples $u_A \in \mathcal{X}^k_A$ and $u_B \in \mathcal{X}^k_B$ such that $\cccwl{k}{\infty}{A}{u_A} = \cccwl{k}{\infty}{B}{u_B}$, then
    \begin{equation}\label{equ:forall-colors-equality}
        \begin{aligned}
            &\left|\left\{\vecX_A\in \mathcal{X}^k_A \,\middle\vert\,\cccwl{k}{\infty}{A}{\vecX_A} = C \right\}\right| \\ &\quad = \left|\left\{\vecX_B \in \mathcal{X}_B^k\,\middle\vert\,\cccwl{k}{\infty}{B}{\vecX_B} = C\right\}\right|,\quad \text{for all color } C.
        \end{aligned}
    \end{equation}
    Note that \Cref{equ:forall-colors-equality} holds if and only if the multisets of stable colors of $k$-tuples in $A$ and $B$ are the same, which is the first case in \Cref{prop:two-cases-k-ccwl}.

    If $k = 1$ and there exists a color $C$ such that
    \begin{align*}
        \left|\left\{x_A\in \mathcal{X}_A \,\middle\vert\,\cccwlbase{\infty}{A}{x_A} = C \right\}\right| \neq \left|\left\{x_B \in \mathcal{X}_B\,\middle\vert\,\cccwlbase{\infty}{B}{x_B} = C\right\}\right|,
    \end{align*}
    then by using \Cref{lem:aggregating-lemma}, it follows that the stable coloring of the fresh cells $a^*$ and $b^*$ in $A$ and $B$ are different. Furthermore, by \Cref{lem:fresh-cell-broadcast}, It can be concluded that 
    \begin{align*}
        \mulset{\cccwlbase{\infty}{A}{x_A}\,\middle\vert\, x_A\in\mathcal{X}_A} \cap \mulset{\cccwlbase{\infty}{B}{x_B}\,\middle\vert\,x_B\in\mathcal{X}_B} = \emptyset.
    \end{align*}
    However, this contradicts the assumption that $\cccwlbase{\infty}{A}{u_A} = \cccwlbase{\infty}{B}{u_B}$. Therefore, \Cref{equ:forall-colors-equality} holds for $k = 1$.

    If $k > 1$, then for $\Gamma \in \{A,B\}$ and any color $C$, define 
    \begin{align*}
        F^C_{\Gamma}(\vecX, P) = \left\{ \vecY \in \mathcal{X}^k_\Gamma \,\middle\vert\, \cccwl{k}{\infty}{\Gamma}{\vecY} = C, \, \differ{\vecX}{\vecY} = P \right\},\quad \text{for any } \vecX \in \mathcal{X}_\Gamma^k \text{ and } P \subseteq [k].
    \end{align*}
    This is the set of all \( k \)-tuples in \( \mathcal{X}^k_\Gamma \) that have color \( C \) and differ from \( \vecX \) precisely in the positions indexed by \( P \). The proof proceeds by induction on \( |P| \) to show that for any color \( C \), it holds that
    \begin{align}
        |F^{C}_{A}(u_A, P)| = |F^{C}_{B}(u_B, P)|,\quad \text{for all } P\subseteq [k].\label{equ:equality-per-subset}
    \end{align}
    \begin{enumerate}[left=8pt,itemsep=1pt,topsep=0pt]
        \item Initial step: We prove the equality in \Cref{equ:equality-per-subset} holds for \( P \) with cardinality at most one. It is known that \(\cccwl{k}{\infty}{A}{u_A} = \cccwl{k}{\infty}{B}{u_B}\), implying the equality in \Cref{equ:equality-per-subset}. Let $P = \{i\}$ for some $i \in [k]$. Since \(\cccwl{k}{\infty}{A}{u_A} = \cccwl{k}{\infty}{B}{u_B}\), it follows that $\mathcal{D}^{(\infty)}_{k,A}(u_A) = \mathcal{D}^{(\infty)}_{k,B}(u_B)$, and by \Cref{lem:disequal}:
        \begin{equation*}
            \begin{aligned}
                &\mulset{\left(\atomic{k+2}{A}{u_A x_A x_A},\Delta_{k,A}^{(\infty)}(u_A; x_A,x_A)\right) \,\middle\vert\, x_A \in \mathcal{X}_A} \\ &\quad= \mulset{\left(\atomic{k+2}{B}{u_Bx_Bx_B}, \Delta_k^{(\infty)}(u_B;\, x_B, x_B) \right) \,\middle\vert\, x_B \in \mathcal{X}_B}.
            \end{aligned}
        \end{equation*}
        Thus, $F^{C}_{A}(u_A, \{i\}) = F^{C}_{B}(u_B, \{i\})$.  
        \item Inductive step: Assume that the equality in \Cref{equ:equality-per-subset} holds for \( P \) with cardinality at most \( n - 1 \). We must show that it also holds for \( P \) with cardinality \( n \). By the induction hypothesis, for any color \( C' \):
        \begin{align*}
            |F^{C'}_{A}(u_A, P')| = |F^{C'}_{B}(u_B, P')|,
        \end{align*}
        where $P' = P \setminus \{m\}$ for some $m \in P$. Then, for any color $C$:
        \begin{align*}
            |F^{C}_{A}(u_A, P)| &= \sum_{C'}\sum_{w_A\in F^{C'}_{A}(u_A, \{m\})} |F^{C}_{A}(w_A, P')|\\
            &= \sum_{C'}\sum_{w_B\in F^{C'}_{B}(u_B, \{m\})} |F^{C}_{B}(w_B, P')|\\
            &= |F^{C}_{B}(u_B, P)|.
        \end{align*} 
        Thus, the equality in \Cref{equ:equality-per-subset} holds for \( P \) with cardinality \( n \), which completes the proof.
    \end{enumerate}
\end{proof}

\paragraph{Algorithmic benefits of \Cref{prop:disjoint-or-identical-labelings}.} The \Cref{prop:disjoint-or-identical-labelings} states that when performing $k$-CCWL on two uniform ACCs, the stable colorings of these two ACCs are either identical or disjoint. This property can help to improve the performance of comparing global stable color of two ACCs. Formally, without this property, to compare the global stable color of these two uniform ACCs, we need to compare each color in the first ACC with each color in the second ACC. This requires \( O(|\mathcal{X}_A|^k|\mathcal{X}_B|^k) \) comparisons. However, by using the identical-vs-disjoint property, we can reduce the number of comparisons to \( O(\min{\left(|\mathcal{X}_A^k|,|\mathcal{X}_B^k|\right)})\), which is significantly  more efficient.

\paragraph{Uniformity requirement in \Cref{prop:disjoint-or-identical-labelings}.} As it can be observed from the proofs of \Cref{lem:fresh-cell-broadcast} and \Cref{lem:aggregating-lemma}, the uniformity is sufficient to maintain a flow of information from fresh cells to other cells, and vice versa. As a result, the identical-vs-disjoint property holds. Now, by giving the following example, we show that the uniformity is also necessary for the identical-vs-disjoint property to hold.
\begin{example}[Non-uniform ACCs breaking identical-vs-disjoint]\label{ex:non-uniform-ccwl-counterexample}  
    Let \( A = (S_A, \mathcal{X}_A, \rank_A, \col_A) \) and \( B = (S_B, \mathcal{X}_B, \rank_B, \col_B) \) be two attributed combinatorial cell complexes. The ACC \( A \) is defined as follows:
    \begin{itemize}
        \item \( S_A = \{a_1,a_2,a_3,a_4,a_5,a_6\} \)
        \item $\mathcal{X}_A(0) = \Bigl\{ \{ a_1 \}, \{ a_2 \},\{ a_3 \},\{ a_4 \},\{ a_5 \},\{ a_6 \}\Bigr\}$,
        \item $\mathcal{X}_A(1) = \Bigl\{ \{ a_1, a_2 \}, \{ a_2, a_3 \}, \{ a_1, a_3 \}\Bigr\}$,
        \item $\mathcal{X}_A(2) = \Bigl\{ \{ a_4, a_5,a_6 \}\Bigr\}$, 
        \item $\mathcal{X}_A = \mathcal{X}_A(0) \cup \mathcal{X}_A(1) \cup \mathcal{X}_A(2)$, and
        \item the attribute function \( \col_A \) assigns the same attribute to all cells in \( \mathcal{X}_A \).
    \end{itemize}   
    Then, the ACC \( B \) is defined as follows:
    \begin{itemize}
        \item \( S_B = \{b_1,b_2,b_3,b_4,b_5,b_6\} \)
        \item $\mathcal{X}_B(0) = \Bigl\{ \{ b_1 \}, \{ b_2 \},\{ b_3 \},\{ b_4 \},\{ b_5 \},\{ b_6 \}\Bigr\}$,
        \item $\mathcal{X}_B(1) = \Bigl\{ \{ b_1, b_2 \}, \{ b_2, b_3 \} \Bigr\}$,
        \item $\mathcal{X}_B(2) = \Bigl\{ \{ b_4, b_5,b_6 \}\Bigr\}$, 
        \item $\mathcal{X}_B = \mathcal{X}_B(0) \cup \mathcal{X}_B(1) \cup \mathcal{X}_B(2)$, and
        \item the attribute function \( \col_B \) assigns the same attribute to all cells in \( \mathcal{X}_B \).
    \end{itemize}
    It can be seen that both ACCs are non-uniform, since in both of ACCs there are \( 2 \)-ranked cells that are not adjacent to any \( 1 \)-ranked cells or any other \( 2 \)-ranked cells. Therefore, these $2$-ranked cells cannot receive any information from other cells in their respective ACCs, and they are already reached to their stable state. The same argument applies to the cells $\{a_4\}$, $\{a_5\}$, and $\{a_6\}$ in ACC \( A \) and the cells $\{b_4\}$, $\{b_5\}$, $\{b_6\}$ in ACC \( B \). However, in $A$, the $1$-ranked cells form a triangle, while in $B$, they form a path, and thus, the stable colors of $1$-ranked cells in $A$ and $B$ are different. Therefore, the stable coloring of these two ACCs are neither identical nor disjoint, which shows that the uniformity is necessary for \Cref{prop:disjoint-or-identical-labelings} to hold.
    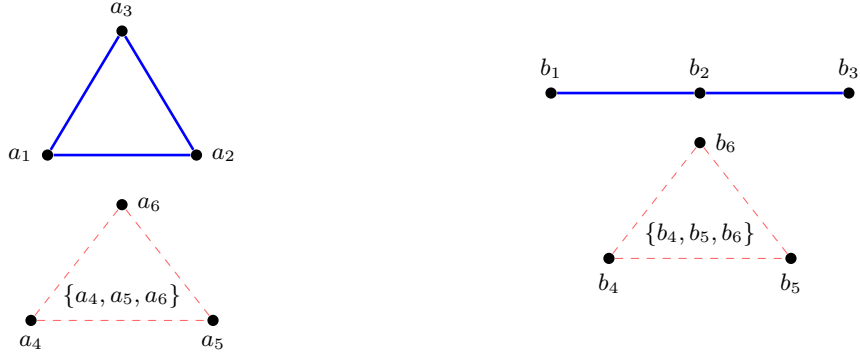
\begin{figure}[h]
\centering
\begin{minipage}{0.45\textwidth}
\centering
\begin{tikzpicture}[scale=1.1, every node/.style={font=\small}]
  \node[circle,fill=black,inner sep=1.5pt,label=left:$a_1$] (a1) at (0,0) {};
  \node[circle,fill=black,inner sep=1.5pt,label=right:$a_2$] (a2) at (1.8,0) {};
  \node[circle,fill=black,inner sep=1.5pt,label=above:$a_3$] (a3) at (0.9,1.5) {};

  \draw[line width=1pt,blue] (a1) -- (a2);
  \draw[line width=1pt,blue] (a2) -- (a3);
  \draw[line width=1pt,blue] (a1) -- (a3);

  \node[circle,fill=black,inner sep=1.5pt,label=below:$a_4$] (a4) at (-0.2,-2) {};
  \node[circle,fill=black,inner sep=1.5pt,label=below:$a_5$] (a5) at (2,-2) {};
  \node[circle,fill=black,inner sep=1.5pt,label=right:$a_6$] (a6) at (0.9,-0.6) {};

  \fill[red!15] (a4) -- (a5) -- (a6) -- cycle;
  \draw[dashed,red!60] (a4) -- (a5);
  \draw[dashed,red!60] (a5) -- (a6);
  \draw[dashed,red!60] (a4) -- (a6);

  \node at (0.9,-1.7) {$\{a_4,a_5,a_6\}$};
\end{tikzpicture}
\end{minipage}\hfill
\begin{minipage}{0.45\textwidth}
\centering
\begin{tikzpicture}[scale=1.1, every node/.style={font=\small}]
  \node[circle,fill=black,inner sep=1.5pt,label=above:$b_1$] (b1) at (0,0) {};
  \node[circle,fill=black,inner sep=1.5pt,label=above:$b_2$] (b2) at (1.8,0) {};
  \node[circle,fill=black,inner sep=1.5pt,label=above:$b_3$] (b3) at (3.6,0) {};

  \draw[line width=1pt,blue] (b1) -- (b2);
  \draw[line width=1pt,blue] (b2) -- (b3);

  \node[circle,fill=black,inner sep=1.5pt,label=below:$b_4$] (b4) at (0.7,-2) {};
  \node[circle,fill=black,inner sep=1.5pt,label=below:$b_5$] (b5) at (2.9,-2) {};
  \node[circle,fill=black,inner sep=1.5pt,label=right:$b_6$] (b6) at (1.8,-0.6) {};

  \fill[red!15] (b4) -- (b5) -- (b6) -- cycle;
  \draw[dashed,red!60] (b4) -- (b5);
  \draw[dashed,red!60] (b5) -- (b6);
  \draw[dashed,red!60] (b4) -- (b6);

  \node at (1.8,-1.7) {$\{b_4,b_5,b_6\}$};
\end{tikzpicture}
\end{minipage}
\caption{Non-uniform ACCs from \Cref{ex:non-uniform-ccwl-counterexample}. Blue edges denote
$1$-cells; red shaded triangles denote $2$-cells whose $1$-faces are \emph{not} in the complex
(dashed edges), so the $2$-cells have empty boundary.}
\label{fig:non-uniform-ccwl-counterexample}
\end{figure}
\end{example}
\color{black}
\section{Logic}
In this section, we first familiarize the reader with the basic concepts of logic, including syntax, semantics, and satisfiability. We also give the concepts of Topological Counting Logic $\TCLogic{k}$, which are extensions of First-Order Logic. 
\begin{definition}[Language]
    Let $ \Sigma $ be a set of symbols called the \textit{alphabet}. The set $ L $ is called a language if $L \subseteq \Sigma^* $. The elements of $ L $ are called \textit{strings}.
\end{definition}
\begin{example}
    Let $ \Sigma = \{0,1\} $. The set $ L = \Sigma^{*}$ is a language of all binary strings. To familiarize the reader further, consider the English language. The alphabet $\Sigma'$ consists of all the letters in the English language, both uppercase and lowercase, as well as punctuation marks and spaces. An example of a language $L$ with alphabet $\Sigma'$ is the set of all valid English sentences. This language includes every possible combination of words and punctuation that form valid English sentences.
\end{example}
\subsection{First-Order Logic}
    A first-order logical language $L$ is defined by the following symbols:
    \begin{itemize}[itemsep=1pt,topsep=0pt,left=8pt]
        \item Variables: $\text{Var}_L = \{x_1, x_2, x_3, \dots\}$
        \item Predicates and functions: The predicate symbols $P_1, P_2, P_3, \dots, P_{i}$, the binary equality symbol $=$, and the function symbols $f_1, f_2, f_3, \dots, f_{j}$.
        \item Logical connectives: $\land, \neg$
        \item Quantifier: $\exists$ (there exists)
        \item Parentheses and commas: $(, )$
    \end{itemize}
    Note that the $0$-ary function symbols are called constants. It is possible for a logical language to not contain any function symbols or any predicate symbols other than equality.
    \begin{center}
        \textbf{Syntax}
    \end{center}
    The terms of the language are defined as follows:
    \begin{itemize}[itemsep=1pt,topsep=0pt,left=8pt]
        \item Each variable is a term.
        \item If $t_1, t_2, \dots, t_n$ are terms and $f$ is an $n$-ary function symbol, then $f(t_1, t_2, \dots, t_n)$ is a term.
    \end{itemize}
    The formulas of the logical language are defined as follows:
    \begin{itemize}[itemsep=1pt,topsep=0pt,left=8pt]
        \item If $t_1, t_2, \dots, t_n$ are terms and $P$ is an $n$-ary predicate symbol, then $P(t_1, t_2, \dots, t_n)$ is a formula.
        \item If $t_1$ and $t_2$ are terms, then $(t_1 = t_2)$ is a formula.
        \item If $\phi$ and $\psi$ are formulas, then $\phi \land \psi$ is a formula.
        \item If $\phi$ is a formula, then $\neg \phi$ is a formula.
        \item If $\phi$ is a formula and $x$ is a variable, then $\exists x \phi$ is a formula.
    \end{itemize}
    The formulas built from the first two rules are called \textbf{atomic formulas}.

    \begin{definition}
        Let $\phi$ be a formula. The set of all variables in $\phi$ is denoted by $\text{Var}(\phi)$, which includes all variables that appear in $\phi$. The sets of \textbf{bound variables}, denoted by $\text{Bound}(\phi)$, and \textbf{free variables}, denoted by $\text{Free}(\phi)$, are defined recursively as follows:
        
        \begin{enumerate}[left=8pt,itemsep=1pt,topsep=0pt]
            \item If $\phi$ is an atomic formula:
            \[
            \text{Bound}(\phi) = \emptyset,\quad
            \text{Free}(\phi) = \text{Var}(\phi).
            \]
            
            \item If $\phi = \neg \psi$:
            \[
            \text{Bound}(\phi) = \text{Bound}(\psi),\quad \text{Free}(\phi) = \text{Free}(\psi).
            \]
        
            \item If $\phi = \psi_1 \land \psi_2$:
            \[
            \text{Bound}(\phi) = \text{Bound}(\psi_1) \cup \text{Bound}(\psi_2),\quad \text{Free}(\phi) = \text{Free}(\psi_1) \cup \text{Free}(\psi_2).
            \]
        
            \item If $\phi = \text{Q} x \psi$ where Q is a quantifier:
            \[
            \text{Bound}(\phi) = \text{Bound}(\psi) \cup \{x\},\quad\text{Free}(\phi) = \text{Free}(\psi) \setminus \{x\}.
            \]
        \end{enumerate}
        
    \end{definition}
    Let $V \subseteq \text{Var}_L$. The sub-language $L(V)$ is the language that syntactically restricts the variables to $V$, using only variables from $V$.
    \begin{center}
        \textbf{Semantics}
    \end{center}
    A mathematical structure $S$ is $(D, P^S_1, \dots, P^S_i, f^S_1, \dots, f^S_j)$ where: 
    \begin{itemize}[itemsep=1pt,topsep=0pt,left=8pt]
        \item $D$ is a non-empty set called the universe of $S$.
        \item $P^S_r \subseteq D^n$ for $r \in [i]$ are interpretations of $P_r$ in $S$, where $n$ is the arity of $P_r$.
        \item $f^S_r: D^n \rightarrow D$ for $r \in [j]$ are interpretations of $f_r$ in $S$, where $n$ is the arity of $f_r$.
    \end{itemize}
    A mapping $\mu: \text{Var}_L \rightharpoonup  D$ is called a partial valuation. The term-mapping  $\overline{\mu}$ assigns each term $t \in L(\operatorname{dom}_\mu)$ to an element in $D$ as follows:
    \begin{itemize}[itemsep=1pt,topsep=0pt,left=8pt]
        \item If $t$ is a variable, then $\overline{\mu}(t) = \mu(t)$.
        \item If $t = f_r(t_1, t_2, \dots, t_n)$, then $\overline{\mu}(t) = f_r^S(\overline{\mu}(t_1), \overline{\mu}(t_2), \dots, \overline{\mu}(t_n))$ for $r \in [j]$.
    \end{itemize}
    Let $S$ be a structure and $\mu: \text{Var}_L \rightharpoonup D$ a partial valuation for $L$. For any formula $\phi \in L$ with $\text{Free}(\phi) \subseteq \operatorname{dom}_\mu$, the satisfaction relation $\pairLogic{S}{\mu}\models \phi$ (or $\pairLogic{S}{\mu}$ satisfies $\phi$) is defined as follows:
    \begin{itemize}[itemsep=1pt,topsep=0pt,left=8pt]
        \item If $\phi = P_r(t_1, t_2, \dots, t_n)$, then $\pairLogic{S}{\mu}\models \phi$ iff $(\overline{\mu}(t_1), \overline{\mu}(t_2), \dots, \overline{\mu}(t_n)) \in P^S_r$.
        \item If $\phi = (t_1 = t_2)$, then $\pairLogic{S}{\mu}\models \phi$ iff $\overline{\mu}(t_1) = \overline{\mu}(t_2)$.
        \item If $\phi = \psi_1 \land \psi_2$, then $\pairLogic{S}{\mu}\models \phi$ iff $\pairLogic{S}{\mu}\models \psi_1$ and $\pairLogic{S}{\mu}\models \psi_2$.
        \item If $\phi = \neg \psi$, then $\pairLogic{S}{\mu}\models \phi$ iff $\pairLogic{S}{\mu}\not\models \psi$.\footnote{The notation $\pairLogic{S}{\mu} \not\models \psi$ indicates that $\pairLogic{S}{\mu}$ do not satisfy the formula $\psi$.}
        \item If $\phi = \exists x_i \psi$, then $\pairLogic{S}{\mu}\models \phi$ iff there exists $c \in D$ such that $\pairLogic{S}{\mu[x_i \to c]}\models \psi$.
    \end{itemize}
    Here, $\mu[x_i \to c]$ denotes the partial valuation that maps $x_i$ to $c$ and agrees with $\mu$ on all other variables.    
The set of all first-order logical languages is denoted by $\mathsf{FOL}$. We use the notation $\mathsf{FOL}_{k}$ denotes the set of languages containing at most $k$ variables. 
\subsection{Topological Counting Logic}
Intuitively, we introduce a new variant of first-order logic capable of counting the number of pairs of elements satisfying a given property. To do this, we define a new counting quantifier that counts over pairs of variables. This new quantifier is called the \emph{pairwise counting quantifier} and it is written as $\exists^{N}$ for any $N \in \mathbb{N}_{0}$. 
Although this symbol is the same as the standard counting quantifier, its syntax and semantics differ. This pairwise counting quantifier $\exists^{N}(x_i, x_j)$ counts  pairs of elements in the universal set.
Formally, if $\phi = \exists^{N}(x_i, x_j)\,\psi$, then $\pairLogic{S}{\mu} \models \phi$ if and only if there exist at least $ N $ pairs of elements $(c_1, c_2)\in D^2$ such that
\[\pairLogic{S}{\mu[x_i\to c_1, x_j\to c_2]} \models \psi.\]
Here, $\mu[x_i\to c_1, x_j\to c_2]$ is the partial valuation that maps $x_i$ to $c_1$ and $x_j$ to $c_2$, and agrees with $\mu$ on all other variables.
The pairwise counting quantifier can be recursively defined in terms of standard quantifiers as follows:
\begin{enumerate}[left=8pt,itemsep=1pt,topsep=0pt]
    \item $\exists^{0}(x_i, x_j) \psi \text{ is equivalent to } \top$.
    \item $\exists^{N+1}(x_i, x_j) \psi \text{ is equivalent to }$
    \[\exists x_i \exists x_j \left( \psi \land \exists^{N} (y,z) \left(\neg(x_i = y) \land \neg(x_j = z) \land \psi[y/x_i, z/x_j]\right) \right)\]
\end{enumerate}
Here, $\psi[y/x_i, z/x_j]$ is the formula obtained from $\psi$ by replacing all occurrences of $x_i$ with $y$ and all occurrences of $x_j$ with $z$. Although it is possible to recursively define the pairwise counting quantifier, it needs extra variables and is inefficient. Hence, the new quantifier is introduced to increase the expressive power of the logic.

The language $\TCLogic{k}$, finite topological counting logic with $k$ variables, is defined by the following symbols:
    \begin{itemize}[itemsep=1pt,topsep=0pt]
        \item Variables: $\text{Var}_{\TCLogic{k}} = \{x_1, x_2, x_3, \dots, x_k\}$
        \item Unary predicate symbols: $R_1, R_2, R_3, \dots, R_{\rho}$ and $P_1, P_2, P_3, \dots, P_{\ell}$
        \item Binary predicate symbols: $E^{\boundary}$, $E^{\coboundary}$, $E^{\lowerLaplacian}$, $E^{\upperLaplacian}$, and $=$
        \item Logical connectives: $\land, \neg$
        \item Counting quantifier: $\exists^{N}$ for $N \in \mathbb{N}_{0}$
        \item Parentheses and commas: $(, )$
    \end{itemize}
    \begin{center}
        \textbf{Syntax}
    \end{center}
    Since the only terms in the language $\TCLogic{k}$ are variables, the formulas of the language are defined as follows:
    \begin{itemize}[itemsep=1pt,topsep=0pt,left=8pt]
        \item If $x_i$ is a variable, then $R_r(x_i)$ is an atomic formula for $r \in [\rho]_0$.
        \item If $x_i$ is a variable, then $P_s(x_i)$ is an atomic formula for $s \in [\ell]$.
        \item If $x_i$ and $x_j$ are variables, then $E^{\boundary}(x_i, x_j)$, $E^{\coboundary}(x_i, x_j)$, $E^{\lowerLaplacian}(x_i, x_j)$, and $E^{\upperLaplacian}(x_i, x_j)$ are atomic formulas.
        \item If $x_i$ and $x_j$ are variables, then $(x_i = x_j)$ is an atomic formula.
        \item If $\phi_1$ and $\phi_2$ are formulas, then $\phi_1 \land \phi_2$ is a formula.
        \item If $\phi$ is a formula, then $\neg \phi$ is a formula.
        \item If $\phi$ is a formula and $x_i$ and $x_j$ are  distinct variables, then $\exists^{N}(x_i,x_j)\,\phi$ is a formula.
    \end{itemize}
    In terms of free and bound variables, the rules are similar to those of first-order logic, except that if $\phi = \exists^{N}(x_i,x_j)\,\psi$, then:
\[        \text{Bound}(\phi) = \text{Bound}(\psi) \cup \{x_i, x_j\}\,\quad\text{Free}(\phi) = \text{Free}(\psi) \setminus \{x_i, x_j\}.
\]    
\begin{center}
        \textbf{Semantics}
    \end{center}
    Let $\Gamma = (S, \mathcal{X}, \rank, \col)$ be an attributed combinatorial complex. Consider the following structure:
    \begin{align}
        (\mathcal{X}, R_0,\dots, R_\rho, C_1,\dots, C_\ell, E_{\Gamma}^{\boundary}, E_{\Gamma}^{\coboundary}, E_{\Gamma}^{\lowerLaplacian},E_{\Gamma}^{\upperLaplacian}),\label{stru:tclogicStru}
    \end{align}
    where $R_r = \mathcal{X}(r)$ for any $r \in [\rho]_0$ and
    \[
        C_s = \{x \in \mathcal{X} | \left(\col(x)\right)_s = 1\}\,\quad\text{for all } s \in [\ell].
    \]
    For any valuation $\mu: \{x_1, \ldots, x_k\} \to \mathcal{X}$ with $\text{Free}(\phi) \subseteq \operatorname{dom}_\mu$, the satisfaction relation $\pairLogic{\Gamma}{\mu}\models \phi$ is defined as follows:
    \begin{itemize}[itemsep=1pt,topsep=0pt,left=8pt]
        \item If $\phi = R_r(x_i)$, then $\pairLogic{\Gamma}{\mu} \models \phi$ iff $\mu(x_i) \in R_r$.
        \item If $\phi = P_s(x_i)$, then $\pairLogic{\Gamma}{\mu} \models \phi$ iff $\mu(x_i) \in C_s$.
        \item If $\phi = E^{\boundary}(x_i, x_j)$, then $\pairLogic{\Gamma}{\mu} \models \phi$ iff $(\mu(x_i),\mu(x_j)) \in E_{\Gamma}^{\boundary}$.
        \item If $\phi = E^{\coboundary}(x_i, x_j)$, then $\pairLogic{\Gamma}{\mu} \models \phi$ iff $(\mu(x_i),\mu(x_j)) \in E_{\Gamma}^{\coboundary}$.
        \item If $\phi = E^{\upperLaplacian}(x_i, x_j)$, then $\pairLogic{\Gamma}{\mu} \models \phi$ iff $(\mu(x_i),\mu(x_j)) \in E_{\Gamma}^{\upperLaplacian}$.
        \item If $\phi = E^{\lowerLaplacian}(x_i, x_j)$, then $\pairLogic{\Gamma}{\mu} \models \phi$ iff $(\mu(x_i),\mu(x_j)) \in E_{\Gamma}^{\lowerLaplacian}$.
        \item If $\phi = (x_i = x_j)$, then $\pairLogic{\Gamma}{\mu} \models \phi$ iff $\mu(x_i) = \mu(x_j)$.
        \item If $\phi = (\psi_1 \land \psi_2)$, then $\pairLogic{\Gamma}{\mu} \models \phi$ iff $\pairLogic{\Gamma}{\mu} \models \psi_1$ and $\pairLogic{\Gamma}{\mu} \models \psi_2$.
        \item If $\phi = \neg \psi$, then $\pairLogic{\Gamma}{\mu} \models \phi$ iff $\pairLogic{\Gamma}{\mu} \not\models \psi$.
        \item If $\phi = \exists^{N}(x_i,x_j)\,\psi$, then $\pairLogic{\Gamma}{\mu} \models \phi$ iff there exist at least $ N $ pairs of elements $(c_1,c_2)\in \mathcal{X}^2$ such that $\pairLogic{\Gamma}{\mu[x_i \to c_1, x_j \to c_2]} \models \psi$.
    \end{itemize}
    Therefore, $\Gamma = (S, \mathcal{X}, \rank, \col)$ can represent a structure in the language $\TCLogic{k}$. For simplicity, instead of using the structure \ref{stru:tclogicStru}, we   use $\Gamma$ itself as a structure.
    \paragraph{Guarded logic $\mathsf{GTC}_{3}$.} The guarded logic $\mathsf{GTC}_{3}$ is a fragment of $\TCLogic{3}$, which only allows to have formulas of the form $x_i = x_j$, rank and attribute predicates, and formula $\phi$ of the following form:
\begin{align*}
    \phi &= \neg \psi,\\
    \phi &= \psi_1(x) \land \psi_2(x),\\
    \phi &= \exists^{N}(y,z)\bigl(
        y = z \land E^{\mathcal{N}}(y,x) \land \psi(y)
    \bigr),\\
    \phi &= \exists^{N}(y,z)\bigl(
        E^{\mathcal{N}_1}(y,x) \land E^{\mathcal{N}_2}(z,x) \land
        E^{\mathcal{N}_2}(z,y) \land \psi_1(y) \land \psi_2(z)
    \bigr)
\end{align*}
where $\mathcal{N} \in \{\boundary, \coboundary\}$, $(\mathcal{N}_1, \mathcal{N}_2) \in \{(\lowerLaplacian,\boundary), (\upperLaplacian,\coboundary)\}$, and $\psi,\psi_1,\psi_2$ are again $\mathsf{GTC}_3$ formulas. The semantics are inherited from $\TCLogic{3}$.
\subsection{General Notation For Logical Languages}
Let $L$ be a language with at least $k$ variables, $S$ be a structure with universe $D$, and $\mu$ be a partial valuation for $L$. For any formula $\phi \in L$, when it is said that $\pairLogic{S}{\mu} \models \phi$ or $\pairLogic{S}{\mu} \not\models \phi$, it necessarily means that $\text{Free}(\phi) \subseteq \operatorname{dom}_\mu$. Otherwise, the satisfaction relation is not defined. In cases where $\operatorname{dom}_\mu = \emptyset$, instead of writing $\pairLogic{S}{\mu} \models \phi$ or $\pairLogic{S}{\mu} \not\models \phi$, it is said that $S\models \phi$ or $S\not\models \phi$, respectively. Each $\mu = (\mu_1,\dots,\mu_k) \in D^k$ can represent a partial valuation such that $\mu(x_i) = \mu_i$ for all $i \in [k]$. Additionally, when we have that $\pairLogic{S_1}{\mu_1}$ and $\pairLogic{S_2}{\mu_2}$ agree on a formula $\phi \in L$, it means that both $\pairLogic{S_1}{\mu_1}$ and $\pairLogic{S_2}{\mu_2}$ either satisfy $\phi$ or do not satisfy $\phi$.
\begin{definition}[$L$-Equivalence]
    Let $ S_1 $ and $ S_2 $ be two structures for a language $ L $, and let $ \mu_1 $ and $ \mu_2 $ be partial valuations corresponding to $ S_1 $ and $ S_2 $, respectively. The following statements are equivalent, and we say $\pairLogic{S_1}{\mu_1} $ and $ \pairLogic{S_2}{\mu_2} $ are $L$-equivalent:
    \begin{enumerate}[itemsep=1pt,topsep=0pt]
        \item $ \pairLogic{S_1}{\mu_1} $ and $ \pairLogic{S_2}{\mu_2} $ agree on all formulas $ \phi \in L $.
        \item $ \generalEquiv{S_1}{\mu_1}{S_2}{\mu_2}{L} $.
        \item $ \pairLogic{S_1}{\mu_1} $ and $ \pairLogic{S_2}{\mu_2} $ are not $L$-distinguishable.
    \end{enumerate}
    These statements are equivalent to the following conditions:
    \begin{itemize}
        \item $ \operatorname{dom}_{\mu_1} = \operatorname{dom}_{\mu_2} $
        \item For all $ \phi \in L $, it holds that $ \pairLogic{S_1}{\mu_1} \models \phi $ if and only if $ \pairLogic{S_2}{\mu_2} \models \phi $.
    \end{itemize}
\end{definition}

In terms of $\TCLogic{k}$, defining the length and depth of logical formulas can bring an inductive approach to solving the problems. For a formula $\phi$, $\len(\phi)$ is defined recursively as follows:
\begin{enumerate}[itemsep=1pt,topsep=0pt]
    \item If $\phi$ is an atomic formula, then $\len(\phi) = 1$.
    \item If $\phi$ is of the form $\neg \psi$, then $\len(\phi) = 1 + \len(\psi)$.
    \item If $\phi$ is of the form $\psi_1 \land \psi_2$, then $\len(\phi) = 1 + \len(\psi_1) + \len(\psi_2)$.
    \item If $\phi$ is of the form $\exists^{N}x \psi$ or $\exists^{N}(x,y) \psi$, then $\len(\phi) = 1 + \len(\psi)$.
\end{enumerate}
The \emph{quantifier depth} refers to the maximum nesting level of quantifiers within a formula, indicating how deeply quantifiers are embedded. For a formula $\phi$, $\depth(\phi)$ is defined recursively as follows:
\begin{enumerate}[itemsep=1pt,topsep=0pt]
    \item If $\phi$ is an atomic formula, then $\depth(\phi) = 0$.
    \item If $\phi$ is of the form $\neg \psi$, then $\depth(\phi) = \depth(\psi)$.
    \item If $\phi$ is of the form $\psi_1 \land \psi_2$, then $\depth(\phi) = \max(\depth(\psi_1), \depth(\psi_2))$.
    \item If $\phi$ is of the form  $\exists^{N}x \psi$ or $\exists^{N}(x,y) \psi$, then $\depth(\phi) = 1 + \depth(\psi)$.
\end{enumerate}
The notations $\IterativeTCLogic{k}{t}$ denote the sets of formulas in $\TCLogic{k}$ with quantifier depth at most $t$, respectively.
\subsection{Relation of Logic $\TCLogic{k}$ and $k$-CCWL}\label{sec:appendix-logic-coloring}
In this section, we  explore the expressive power of $k$-CCWL in terms of logic. Intuitively, the following lemma shows that there exists a logical formula capable of characterizing \(k\)-tuples with the same atomic type value.
\begin{lemma}\label{lem:atomic-to-logic}
    Let $\Gamma = (S, \mathcal{X}, \rank, \col)$ be a combinatorial cell complex and let $a$ be an atomic type value. Assume the representation of $a$ is as follows:
    \begin{align*}
        \mathbf{a} = a^{=}_{x_1,x_2}\cdots
        a^{\boundary}_{x_1,x_2}\cdots
        a^{\coboundary}_{x_1,x_2}\cdots
        a^{\lowerLaplacian}_{x_1,x_2}\cdots
        a^{\upperLaplacian}_{x_1,x_2}\cdots
        a^{\col}_{x_1,1}\ldots a^{\col}_{x_1,\ell}\cdots
        a^{\rank}_{x_1,1}\ldots a^{\rank}_{x_1,\rho}\cdots
    \end{align*}
    where for all $x_i, x_j \in \mathcal{X}$,
    \begin{itemize}[itemsep=1pt,topsep=0pt, left=8pt]
    \item $a^{=}_{x_i,x_j}$ encodes the equality information between $x_i$ and $x_j$,
    \item $a^{\boundary}_{x_i,x_j}$, $a^{\coboundary}_{x_i,x_j}$, $a^{\lowerLaplacian}_{x_i,x_j}$, $a^{\upperLaplacian}_{x_i,x_j}$ encode the boundary, coboundary, lower, and upper adjacency relations between $x_i$ and $x_j$, respectively,
    \item $a^{\col}_{x_i,l}$ encodes the information about $\col_\ell(x_i)$, and
    \item $a^{\rank}_{x_i,r}$ encodes  the rank information of $x_i$.
    \end{itemize}
     Then, there exists a formula in $\TCLogic{k}$ that characterizes \(k\)-tuples having the atomic type value $a$. Specifically, there exists a formula $\phi_{a}$ in $\TCLogic{k}$ such that for all $\vecX \in \mathcal{X}^k$, it holds that:
    \begin{align*}
        \pairLogic{\Gamma}{\vecX} \models \phi_{a} \text{ if and only if } \atomic{k}{\Gamma}{\vecX} = a.
    \end{align*}
\end{lemma}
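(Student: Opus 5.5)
The plan is to build $\phi_{a}$ explicitly as a finite conjunction of literals, one literal for each bit of the atomic type $a$. The atomic type of a $k$-tuple is a finite bit string whose entries each record the truth value of one atomic relation on the coordinates, and every such relation has a matching atomic formula in $\TCLogic{k}$ over the variables $x_1,\dots,x_k$.

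\textbf{Step 1: one literal per bit.} For each entry of $a$ we take the corresponding atomic formula, or its negation, according to whether the bit is $1$ or $0$:
\begin{itemize}
    \item for equality bits $a^{=}_{x_i,x_j}$, we use $(x_i=x_j)$ or $\lnot(x_i=x_j)$;
    \item for each $\mathcal{N}\in\{\boundary,\coboundary,\lowerLaplacian,\upperLaplacian\}$ and bits $a^{\mathcal{N}}_{x_i,x_j}$, we use $E^{\mathcal{N}}(x_i,x_j)$ or its negation, taking care to match the argument order to the semantics ``$\mu(x_j)\in\mathcal{N}(\mu(x_i))$'' chosen in the encoding;
    \item for color bits $a^{\col}_{x_i,s}$, we use $P_s(x_i)$ or $\lnot P_s(x_i)$;
    \item for the rank entry of $x_i$, equal to some $r$, we use $R_r(x_i)$; for definiteness we may also add $\lnot R_{r'}(x_i)$ for $r'\neq r$, which is harmless.
\end{itemize}
For pairs with $i=j$, the literal is either trivially true, such as $x_i=x_i$, or fixed by the structure; we include it anyway, since it only makes $\phi_a$ unsatisfiable when $a$ is not realizable, and the equivalence still holds. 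All these literals use only the variables $x_1,\dots,x_k$ and no quantifiers, so $\phi_a:=\bigwedge\ell_b$ over all bits $b$ lies in $\TCLogic{k}$, indeed in $\IterativeTCLogic{k}{0}$. The conjunction is finite because $k$, $\ell$ and $\rho$ are fixed. If $\lor$ is wanted it is not needed here; only $\land$ and $\lnot$ from the grammar are used.

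\textbf{Step 2: verify both directions.} By the semantics clauses for $R_r$, $P_s$, $E^{\mathcal{N}}$ and $=$, the tuple satisfies $(\Gamma,\vecX)\models\ell_b$ exactly when the bit $b$ of $\atomic{k}{\Gamma}{\vecX}$ equals the bit $b$ of $a$. Therefore $(\Gamma,\vecX)\models\phi_a$ holds iff every bit agrees, which is iff $\atomic{k}{\Gamma}{\vecX}=a$.

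\textbf{Expected obstacle.} There is no real mathematical difficulty. The only point needing care is bookkeeping: making sure the encoding of $a$ (rank as a one-hot or as a value, and the orientation of the adjacency bits) matches the predicates exactly, so that each bit is captured by a single literal.
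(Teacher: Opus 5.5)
Your proposal is correct and is essentially the paper's proof: $\phi_a$ is the conjunction of one (possibly negated) atomic literal per entry of $a$, using only $\land$ and $\lnot$ over $x_1,\dots,x_k$, so it is quantifier-free. The differences are cosmetic. You verify the converse direction directly, bit by bit, where the paper argues by contradiction via the unsatisfiability of $\phi_a\wedge\phi_b$. Your explicit matching of argument order for the non-symmetric relations $\boundary,\coboundary$ is also a bit more careful than the paper's $i<j$ bookkeeping for the adjacency literals.
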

\begin{proof}[Proof of \Cref{lem:atomic-to-logic}]\label{proof:atomic-to-logic}
    To prove this lemma, we construct the formula $\phi_a$ in $\TCLogic{k}$ iteratively as follows:
    \begin{enumerate}[itemsep=1pt,topsep=0pt,left=8pt]
        \item Initialization: Let $\phi_a = ()$.
        \item Equality: For each $1 \leq i < j \leq k$, if $a^{=}_{x_i,x_j} = 1$, then $\phi_a = \phi_a \wedge (x_i = x_j)$;  otherwise, $\phi_a = \phi_a \wedge \neg(x_i = x_j)$.
        \item Adjacencies: For each $\mathcal{N} \in \{\boundary, \coboundary, \lowerLaplacian, \upperLaplacian\}$, if $i < j$ and $a^{\mathcal{N}}_{x_i,x_j} = 1$, then $\phi_a = \phi_a \wedge E^{\mathcal{N}}(x_i, x_j)$; otherwise, $\phi_a = \phi_a \wedge \neg E^{\mathcal{N}}(x_i, x_j)$.
        \item Color: For each $s \in [\ell]$, if $a^{\col}_{x_i, s} = 1$, then $\phi_a = \phi_a \wedge P_s(x_i)$; otherwise, $\phi_a = \phi_a \wedge \neg P_s(x_i)$.
        \item Rank: For each $r \in [\rho]_0$, if $a^{\rank}_{x_i, r} = 1$, then $\phi_a = \phi_a \wedge R_r(x_i)$; otherwise, $\phi_a = \phi_a \wedge \neg R_r(x_i)$.
    \end{enumerate}
    Now, we prove that the following holds:
    \begin{align*}
        \pairLogic{\Gamma}{x} \models \phi_a \iff \atomic{k}{\Gamma}{x} = a,\quad \text{for all } x \in \mathcal{X}^k
    \end{align*}
    We prove this in two directions:
    \begin{enumerate}[itemsep=0pt,topsep=0pt,left=8pt]
        \item If $\atomic{k}{\Gamma}{x} = a$, then $\pairLogic{\Gamma}{x} \models \phi_a$: By construction, $\phi_a$ is built to match the atomic structure $a$. Thus, if $\atomic{k}{\Gamma}{x} = a$, it directly follows that $\pairLogic{\Gamma}{x} \models \phi_a$.
        \item If $\pairLogic{\Gamma}{x} \models \phi_a$, then $\atomic{k}{\Gamma}{x} = a$: Assume, for contradiction, that $\pairLogic{\Gamma}{x} \models \phi_a$ and $\atomic{k}{\Gamma}{x} = b \neq a$. This implies that $\pairLogic{\Gamma}{x} \models \phi_b$. Therefore, $\pairLogic{\Gamma}{x} \models \phi_a \wedge \phi_b$, which leads to $\pairLogic{\Gamma}{x} \models \bot$. This contradicts the fact that $\Gamma$ is a non-trivial and valid combinatorial complex.
    \end{enumerate}
\end{proof}
The next lemma shows that if the initial coloring of two ACCs $A$ and $B$ differs during the execution of \(k\)-CCWL, then there exists a quantifier-free logical formula in $\TCLogic{k}$ that can distinguish these two ACCs.
\begin{lemma}
\label{lem:base-case-logic-to-ccwl}
    Let \( A = (S_A, \mathcal{X}_A, \rank_A, \col_A) \) and \( B = (S_B, \mathcal{X}_B, \rank_B, \col_B) \) be two attributed combinatorial complexes. Let \( u_A\in\mathcal{X}^k_A \) and \( u_B \in \mathcal{X}^k_B \). If \( \pairLogic{A}{u_A} \) and \( \pairLogic{B}{u_B} \) agree on all quantifier-free formulas in \(\TCLogic{k}\), then it implies:
    \begin{align*}
        \cccwl{k}{0}{A}{u_A} = \cccwl{k}{0}{B}{u_B}
    \end{align*}
\end{lemma}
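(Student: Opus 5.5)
The plan is to reduce the claim to \Cref{lem:atomic-to-logic}. By the definition of $k$-CCWL, the initial label is exactly the atomic type: $\cccwl{k}{0}{A}{u_A} = \atomic{k}{A}{u_A}$ and $\cccwl{k}{0}{B}{u_B} = \atomic{k}{B}{u_B}$. It therefore suffices to show that the two atomic types coincide, and we argue by contraposition.

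Set $a := \atomic{k}{A}{u_A}$. \Cref{lem:atomic-to-logic} supplies a formula $\phi_a$ with two key properties. First, it is a finite conjunction of (possibly negated) atomic formulas $(x_i = x_j)$, $E^{\mathcal{N}}(x_i,x_j)$, $P_s(x_i)$ and $R_r(x_i)$ over the variables $x_1,\dots,x_k$, so it is quantifier-free and belongs to $\TCLogic{k}$. Second, it holds for exactly those $k$-tuples whose atomic type is $a$.

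The construction of $\phi_a$ is uniform in the complex: the same syntactic formula characterizes atomic type $a$ in any ACC. This holds because every component of the atomic type is read off pointwise from the interpretation of a predicate:
\begin{itemize}
\item equality bits correspond to $=$;
\item the four adjacency bits correspond to $E^{\boundary},E^{\coboundary},E^{\lowerLaplacian},E^{\upperLaplacian}$;
\item the colour bits correspond to $P_s$;
\item the rank corresponds to $R_r$.
\end{itemize}
Hence $\pairLogic{A}{u_A}\models\phi_a$. By the hypothesis of agreement on quantifier-free formulas, $\pairLogic{B}{u_B}\models\phi_a$, and applying \Cref{lem:atomic-to-logic} in $B$ gives $\atomic{k}{B}{u_B}=a$.

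For a more symmetric presentation, one can instead argue directly by contradiction. Suppose the atomic types differ. Then some single coordinate of the atomic-type vector differs, for instance an equality, adjacency, colour or rank bit for a particular index $i$ or pair $(i,j)$. The corresponding atomic formula (e.g.\ $E^{\mathcal{N}}(x_i,x_j)$ or $P_s(x_i)$) is then quantifier-free, true in one structure and false in the other, which contradicts the hypothesis.

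The only point needing care, and the one I expect to be the main obstacle, is bookkeeping in the definition of atomic type. Pairs with $i>j$ must be included for the directional relations $\boundary,\coboundary$, since \Cref{lem:atomic-to-logic} as written only ranges over $i<j$. In the direct contradiction argument this is harmless, because $E^{\mathcal{N}}(x_j,x_i)$ is equally an atomic formula. The rank is encoded by finitely many predicates $R_r$, $r\in[\rho]_0$, where $\rho$ is taken as the maximum order of $A$ and $B$. All of this is routine once the coordinates are enumerated.
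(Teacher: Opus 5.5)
Your proposal is correct. Your direct contradiction argument is essentially the paper's proof: the paper checks agreement on $R_r(x_i)$, $P_s(x_i)$, $(x_i=x_j)$ and $E^{\mathcal{N}}(x_i,x_j)$ predicate by predicate, concludes $(A,u_A)\sim_k(B,u_B)$ and hence equal atomic types, and notes separately that for $k=1$ colour and rank agreement suffice; there the initial label is $(\col(x),\rank(x))$ rather than an atomic type, a case you gloss over but your colour and rank bits cover. Your main route via $\phi_a$ only bundles the same literals into one conjunction and leans on the converse half of \Cref{lem:atomic-to-logic}, which itself rests on this componentwise fact; and your concern about pairs with $i>j$ is moot, since $E^{\boundary}(x_j,x_i)$ and $E^{\coboundary}(x_i,x_j)$ have identical semantics and $\lowerLaplacian,\upperLaplacian$ are symmetric relations.
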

\begin{proof}[Proof of \Cref{lem:base-case-logic-to-ccwl}]\label{proof:base-case-logic-to-ccwl}
    Since \( \pairLogic{A}{u_A} \) and \( \pairLogic{B}{u_B} \) agree on all quantifier-free formulas, $\pairLogic{A}{u_A}$ and $\pairLogic{B}{u_B}$ agree on $\left\{R_r(x_i), \neg R_r(x_i)\right\}$, for all $i \in [k]$ and $r \in [\rho]_0$. Consequently, we conclude 
    \begin{align*}
        \rank_A(u_A(x_i)) = \rank_B(u_B(x_i)),\quad \text{for all } i \in [k].
    \end{align*} 
    Similarly, $\pairLogic{A}{u_A}$ and $\pairLogic{B}{u_B}$ agree on $\left\{P_s(x_i), \neg P_s(x_i)\right\}$ for all $i \in [k]$ and $s \in [l]$. Therefore, we conclude 
    \begin{align*}
        \col_A(u_A(x_i)) = \col_B(u_B(x_i)),\quad \text{for all } i \in [k].
    \end{align*}
    Moreover, $\pairLogic{A}{u_A}$ and $\pairLogic{B}{u_B}$ agree on $\left\{x_i = x_j, \neg(x_i = x_j)\right\}$ for all $i,j\in [k]$, which implies:
    \begin{align*}
        u_A(x_i) = u_A(x_j) \iff u_B(x_i) = u_B(x_j),\quad \text{for all } i,j\in [k]
    \end{align*}
    Furthermore, $\pairLogic{A}{u_A}$ and $\pairLogic{B}{u_B}$ agree on $\left\{E^{\mathcal{N}}(x_i, x_j), \neg E^{\mathcal{N}}(x_i, x_j)\right\}$ for all $i,j\in [k]$ and $\mathcal{N} \in \{\boundary, \coboundary, \lowerLaplacian, \upperLaplacian\}$. Therefore, we  conclude for all  $i,j\in [k], \text{and } \mathcal{N} \in \{\boundary, \coboundary, \lowerLaplacian, \upperLaplacian\}$:
    \begin{align*}
        u_A(x_i) \in \mathcal{N}(A, u_A(x_j)) \Leftrightarrow u_B(x_i) \in \mathcal{N}(B, u_B(x_j)).
    \end{align*}
    Thus, we conclude that $\ksim{A}{u_A}{B}{u_B}{k}$, and consequently, $\atomic{k}{A}{u_A} = \atomic{k}{B}{u_B}$. Therefore, $\cccwl{k}{0}{A}{u_A} = \cccwl{k}{0}{B}{u_B}$.
    For \( k = 1 \), having \( \col_A\left(u_A(x)\right) = \col_B\left(u_B(x)\right) \) and  $\rank_A\left(u_A(x)\right) = \rank_B\left(u_B(x)\right)$ is sufficient to obtain the desired result.
\end{proof}
The following two lemmas show that if two ACCs $A$ and $B$ are distinguished by $k$-CCWL, then they can also be distinguished by $\TCLogic{k+2}$. Furthermore, there exists a logical formula in $\TCLogic{k+2}$ on which these two ACCs do not agree on.
\begin{lemma}\label{lem:logic-to-ccwl}
    Let \( A = (S_A, \mathcal{X}_A, \rank_A, \col_A) \) and \( B = (S_B, \mathcal{X}_B, \rank_B, \col_B) \) be two attributed combinatorial complexes. Let \( u_A\in \mathcal{X}_A \) and \( u_B \in \mathcal{X}_B \). If 
    $\pairLogic{A}{u_A}$ and $\pairLogic{B}{u_B}$ are not $\GTCLogic{3}$-distinguishable, then
    \begin{align*}
        \cccwlbase{\infty}{A}{u_A} = \cccwlbase{\infty}{B}{u_B}.
    \end{align*}
\end{lemma}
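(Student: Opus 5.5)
We argue by induction on the refinement round $t$, proving the stronger statement: for every $t \ge 0$, if $\pairLogic{A}{u_A}$ and $\pairLogic{B}{u_B}$ agree on all $\GTCLogic{3}$ formulas with one free variable (say $x$) and quantifier depth at most $t$, then $\cccwlbase{t}{A}{u_A} = \cccwlbase{t}{B}{u_B}$. Since the $1$-CCWL colouring stabilises (termination argument analogous to \Cref{lem:termination-k-ccwl}), the claim for $\chi^{(\infty)}$ follows. The base case $t=0$ is immediate: the atoms $R_r(x)$ and $P_s(x)$ are in $\GTCLogic{3}$, so agreement on them gives equal rank and colour, i.e.\ $\chi_1^{(0)}(u_A)=\chi_1^{(0)}(u_B)$, exactly as in the $k=1$ remark of \Cref{lem:base-case-logic-to-ccwl}.

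The main tool for the inductive step is a \emph{colour-defining formula}. Using the induction hypothesis in contrapositive form, I would first show that for each colour $c$ realised at round $t-1$ in $A \cup B$ there is a $\GTCLogic{3}$ formula $\psi_c(x)$ of depth at most $t-1$ such that, for every cell $v$ of $A$ or $B$, $v \models \psi_c$ iff $\chi_1^{(t-1)}(v)=c$. This is the standard finite-structure construction. For any two cells with different round-$(t-1)$ colours, the hypothesis yields a separating formula of depth at most $t-1$, possibly after negating it. Taking the conjunction over the finitely many cells of the other colours gives $\psi_c$, and $\GTCLogic{3}$ is closed under $\wedge$ and $\neg$ on one free variable. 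Because the guarded quantifier forms bind $y,z$ and leave $x$ free, I would rename variables so that $\psi_c$ is available with free variable $y$ or $z$ as required. Only three variables are needed, which is the reason the fragment is $\GTCLogic{3}$.

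For the inductive step, suppose $\chi_1^{(t)}(u_A) \neq \chi_1^{(t)}(u_B)$ while the round-$(t-1)$ colours agree. Then one of the four multisets in $\mathcal{M}^{(t)}$ differs.

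For the boundary and coboundary multisets, some colour $c$ occurs with different multiplicities $N_A \neq N_B$, say $N_A > N_B$. The formula $\exists^{N_A}(y,z)\bigl(y=z \wedge E^{\mathcal{N}}(y,x)\wedge \psi_c(y)\bigr)$ then separates the two structures. The conjunct $y=z$ makes pairs correspond to single cells, and we must check that the orientation of $E^{\mathcal{N}}(y,x)$ matches the semantics $y\in\mathcal{N}(x)$.

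For the lower multiset $c^{\downarrow,(t)}$, some colour pair $(c,c')$ has different multiplicities. We use
\[
\exists^{N}(y,z)\bigl(E^{\lowerLaplacian}(y,x)\wedge E^{\boundary}(z,x)\wedge E^{\boundary}(z,y)\wedge \psi_c(y)\wedge\psi_{c'}(z)\bigr),
\]
which counts exactly the pairs $(y,z)$ with $y\in\lowerLaplacian(x)$ and $z\in\boundary(x)\cap\boundary(y)$ of the prescribed colours. The upper case is symmetric, with $(\upperLaplacian,\coboundary)$. Each such formula has depth $t$, which contradicts agreement at depth $t$.

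The step I expect to be the main obstacle is the colour-defining formula. The induction hypothesis gives separation only between pairs of pointed structures, whereas we need a single formula that uniformly defines a colour class over both complexes simultaneously. We must also verify that the guarded syntax permits the variable renamings involved, since $\psi$ must be a formula in the single variable being quantified. I would handle this by proving the induction over all cells of both complexes at once, in the contrapositive direction as sketched, and by checking that the grammar is closed under the variable permutation $x\leftrightarrow y\leftrightarrow z$. A secondary check is that the adjacency-predicate orientation in $E^{\mathcal{N}}(z,y)$ matches $\boundary(x\cap y)$ as defined for $1$-CCWL.
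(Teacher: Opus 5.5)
Your route is the paper's: induction on quantifier depth, with the contrapositive of the depth-$(t-1)$ hypothesis supplying colour-defining formulas $\psi_c$. You then use the same two guarded counting formulas to build a depth-$t$ separator: the $y=z$ form for boundary/coboundary, and the paired $(\lowerLaplacian,\boundary)$/$(\upperLaplacian,\coboundary)$ form for lower/upper. The paper simply posits $\psi_C$ (``the conjunction of the formulas \ldots\ which characterizes color $C$''). You try to construct it, and that construction does not work as written.

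You fix one cell $w$ of colour $c$. For every cell $v$ of $A$ or $B$ with $\chi_1^{(t-1)}(v)\neq c$, you take a formula $\theta_{w,v}$ of depth at most $t-1$ with $w\models\theta_{w,v}$ and $v\not\models\theta_{w,v}$, and conjoin over $v$. The result is true at $w$ and false off the colour class, but nothing makes it true at the \emph{other} cells of colour $c$. Your induction hypothesis gives only ``$\IterativeGTCLogic{3}{t-1}$-equivalent $\Rightarrow$ same colour'', not the converse, so two cells of colour $c$ may disagree on some $\theta_{w,v}$. In that case $\exists^{N_A}(y,z)\bigl(y=z\wedge E^{\mathcal N}(y,x)\wedge\psi_c(y)\bigr)$ counts only part of the colour-$c$ neighbours of $u_A$. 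It can then fail in $A$, and the inductive step collapses.

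The fix is to also take a disjunction over representatives: $\psi_c:=\bigvee_{w}\bigwedge_{v}\theta_{w,v}$, where $w$ ranges over all cells of $A$ and $B$ of colour $c$. This is a finite Boolean combination, so it lies in $\GTCLogic{3}$ via $\neg$ and $\wedge$. It has depth at most $t-1$ and defines exactly the colour class in both complexes. Alternatively, prove the converse direction simultaneously in the induction, or build $\psi_c$ directly by recursion on the colour; then your single-representative conjunction becomes valid.

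On your orientation check: it goes through only under the reading $E^{\mathcal N}(y,x)\iff y\in\mathcal N(x)$, which is what the paper's proof implicitly uses. Under the main text's literal clause, $\mu(x_j)\in\mathcal N(\mu(x_i))$, the boundary case can be repaired by swapping $\boundary$ and $\coboundary$. The fixed-order paired guarded form, however, would then count common cofaces of lower neighbours rather than common faces. So the convention matters for the formula you and the paper rely on.
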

\begin{proof}[Proof of \Cref{lem:logic-to-ccwl}]\label{proof:logic-to-ccwl}
    We prove 
    \begin{equation}\label{equ:iterative-logic-to-ccwl}
        \GlogicEquivIterative{A}{u_A}{B}{u_B}{3}{t} \implies \cccwlbase{t}{A}{u_A} = \cccwlbase{t}{B}{u_B},\quad \text{for all } t \in \mathbb{N}_0,
    \end{equation}
    by induction on $t$.
    \begin{enumerate}[itemsep=0pt,topsep=0pt,left=8pt]
        \item Initial step: We prove that \Cref{equ:iterative-logic-to-ccwl} holds for $t = 0$. According to \Cref{lem:base-case-logic-to-ccwl}, if $\pairLogic{A}{u_A}$ and $\pairLogic{B}{u_B}$ agree on all quantifier-free formulas, then $\cccwlbase{0}{A}{u_A} = \cccwlbase{0}{B}{u_B}$. Thus, the result holds for $t = 0$.
        \item Inductive step: Assume that \Cref{equ:iterative-logic-to-ccwl} holds for all $t \in [n-1]_0$. We must show that it also holds for $t = n$. For contradiction, assume $\cccwlbase{n}{A}{u_A} \neq \cccwlbase{n}{B}{u_B}$. Then, there are two cases to consider:
        \begin{itemize}
            \item If $\cccwlbase{n-1}{A}{u_A} \neq \cccwlbase{n-1}{B}{u_B}$: By the induction hypothesis, there exists a formula $\phi \in \IterativeGTCLogic{3}{n-1} \subset \IterativeGTCLogic{3}{n}$ such that $\pairLogic{A}{u_A}$ and $\pairLogic{B}{u_B}$ do not agree. This contradicts the assumption.
            \item If $\mathcal{M}^{(n)}_A(u_A) \neq \mathcal{M}^{(n)}_B(u_B)$: This case only needs to be analyzed if the aggregated values of either the boundary neighborhood or the lower adjacency neighborhood. The reasoning for other types of neighborhoods is similar. Suppose $c^{B,(n)}(A,u_A) \neq c^{B,(n)}(B,u_B)$. Then, there exists a color $\tilde{C}$ such that:
            \begin{align*}
                \left|\left\{\alpha_A \in \boundary(A,u_A)\,\middle\vert\,\cccwlbase{n-1}{A}{\alpha_A} = \tilde{C}\right\}\right| \neq \left|\left\{\alpha_B \in \boundary(B,u_B)\,\middle\vert\,\cccwlbase{n-1}{B}{\alpha_B} = \tilde{C}\right\}\right|.
            \end{align*}
        \end{itemize}
        Let $N$ be the cardinality of the larger set. Since two cells have the same color in iteration $n-1$ if they agree on all formulas in $\IterativeGTCLogic{3}{n-1}$. Let $\psi_{{C}}$ be the conjunction of the formulas in $\IterativeGTCLogic{3}{n-1}$, which characterizes color ${{C}}$ on iteration $n-1$. Formally, for any color $C$:
        \begin{align*}
            \pairLogic{\Gamma}{x_\Gamma} \models \psi_{{C}} \Leftrightarrow \cccwlbase{n-1}{\Gamma}{x_\Gamma} = {C},\quad \text{for all } \Gamma \in \{A, B\}, \text{and } x_\Gamma \in \mathcal{X}.
        \end{align*}
        Now consider the formula:
        \begin{align*}
            \xi = \exists^{N} (x_2, x_3) \left(\begin{array}{c}\psi_{\tilde{C}}(x_2) \wedge E^{\boundary}(x_2, x_1) \wedge (x_2 = x_3)\end{array}\right).
        \end{align*}
        It follows that \( \pairLogic{A}{u_A} \) and \( \pairLogic{B}{u_B} \) will not agree on the formula \( \xi \in \IterativeGTCLogic{3}{n}\), and it contradicts with the fact that $\pairLogic{A}{u_A}$ and $\pairLogic{B}{u_B}$ agree on all formulas in $\IterativeGTCLogic{3}{n}$. Moreover, if $c^{\downarrow ,(n)}(A,u_A) \neq c^{\downarrow,(n)}(B,u_B)$, then there exist colors $\tilde{C}$ and $\hat{C}$ such that
        \begin{align*}
            \left|\left\{(\alpha_A, \beta_A) \,\middle\vert\,\begin{array}{l} 
                \alpha_A \in \mathcal{N}^{\downarrow}{(A,u_A)},\\
                \beta_A \in \mathcal{N}^{B}{(A,u_A \cap \alpha_A)},\\
                \cccwlbase{n-1}{A}{\alpha_A} = \tilde{C},\\
                \cccwlbase{n-1}{A}{\beta_A} = \hat{C}
            \end{array}\right\}\right| \neq 
            \left|\left\{(\alpha_B, \beta_B) \,\middle\vert\,\begin{array}{l} 
                \alpha_B \in \mathcal{N}^{\downarrow}{(B,u_B)},\\
                \beta_B \in \mathcal{N}^{B}{(B,u_B \cap \alpha_B)},\\
                \cccwlbase{n-1}{B}{\alpha_B} = \tilde{C},\\
                \cccwlbase{n-1}{B}{\beta_B} = \hat{C}
            \end{array}\right\}\right|
        \end{align*}
        Let $N$ be the cardinality of the larger set. Consider the formula:
        \begin{align*}
            \xi = \exists^{N} (x_2, x_3) \left(\begin{array}{c}\psi_{\tilde{C}}(x_2) \wedge \psi_{\hat{C}}(x_3)\wedge E^{\lowerLaplacian}(x_2, x_1) \wedge E^{\boundary}(x_3, x_1) \wedge E^{\boundary}(x_3, x_2)\end{array}\right).
        \end{align*}
        Again, \( \pairLogic{A}{u_A} \) and \( \pairLogic{B}{u_B} \) do not agree on the formula \( \xi \in \IterativeGTCLogic{3}{n}\), which contradicts the assumption. Therefore, $\cccwlbase{n}{A}{u_A} = \cccwlbase{n}{B}{u_B}$.
    \end{enumerate}
    Hence, the lemma holds for all $t \in \mathbb{N}_0$, and we conclude 
    \begin{align*}
        \GlogicEquivIterative{A}{u_A}{B}{u_B}{3}{t} \implies \cccwlbase{\infty}{A}{u_A} = \cccwlbase{\infty}{B}{u_B}.
    \end{align*}
\end{proof}
\begin{theorem}\label{theorem:logic-to-twl}
    Let $A = (S_A, \mathcal{X}_A, \rank_A, \col_A)$ and $B = (S_B, \mathcal{X}_B, \rank_B, \col_B)$ be two attributed combinatorial complexes and let \( u_A\in \mathcal{X}^k_A \) and \( u_B \in \mathcal{X}^k_B \). Then if
    $\pairLogic{A}{u_A}$ and $\pairLogic{B}{u_B}$ are not $\TCLogic{k+2}$-distinguishable, then:
    \begin{align*}
        \cccwl{k}{\infty}{A}{u_A}=\cccwl{k}{\infty}{B}{u_B}
    \end{align*}
\end{theorem}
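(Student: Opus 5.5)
We prove the stronger, depth-bounded statement
\[
\logicEquivIterative{A}{u_A}{B}{u_B}{k+2}{t} \implies \cccwl{k}{t}{A}{u_A}=\cccwl{k}{t}{B}{u_B}\qquad\text{for all } t\in\mathbb{N}_0,
\]
by induction on $t$, following the pattern of \Cref{lem:logic-to-ccwl}. Here $u_A,u_B$ are read as valuations of $x_1,\dots,x_k$, and $x_{k+1},x_{k+2}$ are the two spare variables. The theorem then follows by taking $t$ at least the stabilization index of both complexes, which exists by \Cref{lem:termination-k-ccwl}.

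\textbf{Base case and first step.} For $t=0$ we invoke \Cref{lem:base-case-logic-to-ccwl}, since quantifier-free formulas of $\TCLogic{k}$ are also formulas of $\TCLogic{k+2}$. Before the inductive step we need a \emph{definability} fact. For each $s$ and each color $C$ realized at round $s$ in $A$ or $B$, there is a formula $\psi^{(s)}_C(x_1,\dots,x_k)\in\IterativeTCLogic{k+2}{s}$ such that, for $\Gamma\in\{A,B\}$ and $\mathbf{y}\in\mathcal{X}_\Gamma^k$,
\[
\pairLogic{\Gamma}{\mathbf{y}}\models\psi^{(s)}_C \iff \cccwl{k}{s}{\Gamma}{\mathbf{y}}=C.
\]
This follows from the induction hypothesis by a standard finiteness argument. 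Only finitely many tuples and colors occur. For each pair of tuples with different round-$s$ colors, the contrapositive of the hypothesis gives a separating formula of depth at most $s$. Taking the conjunction over all tuples of other colors, and then a disjunction over the finitely many tuples of color $C$, yields $\psi^{(s)}_C$.

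\textbf{Inductive step.} Suppose $\cccwl{k}{n}{A}{u_A}\neq\cccwl{k}{n}{B}{u_B}$ while the two sides agree on $\IterativeTCLogic{k+2}{n}$. By injectivity of HASH, either the round-$(n-1)$ colors differ, which contradicts the induction hypothesis, or $\mathcal{D}^{(n)}_{k}(u_A)\neq\mathcal{D}^{(n)}_{k}(u_B)$. In the second case there is a context $(a,\langle (C_i,C_i')\rangle_{i=1}^k)$ whose multiplicity differs; let $N$ be the larger multiplicity. We then write
\[
\xi=\exists^{N}(x_{k+1},x_{k+2})\Bigl(\phi_a(x_1,\dots,x_{k+2})\wedge\textstyle\bigwedge_{i=1}^k \psi^{(n-1)}_{C_i}[x_{k+1}/x_i]\wedge\psi^{(n-1)}_{C'_i}[x_{k+2}/x_i]\Bigr).
\]
Here $\phi_a$ is supplied by \Cref{lem:atomic-to-logic} applied with $k+2$ variables. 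The substituted formula $\psi[x_{k+1}/x_i]$ must express ``the tuple $\mathbf{x}[x_{k+1}/i]$ has color $C_i$''. The resulting $\xi$ has depth $n$ and is satisfied on exactly one side, which is a contradiction.

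\textbf{Main obstacle.} The hard part is making the substitution $\psi[x_{k+1}/x_i]$ legitimate inside a $(k+2)$-variable logic. The formula $\psi^{(n-1)}_{C_i}$ itself uses all $k+2$ variables, including $x_{k+1}$ and $x_{k+2}$ as bound variables. Naively renaming the free $x_i$ to $x_{k+1}$ could therefore capture variables, and it would leave $x_i$ free when it should be ignored. We handle this by proving the definability fact in a variable-permutation-invariant form. For every injective map $\sigma:[k]\to[k+2]$ there is a formula $\psi^{(s)}_{C,\sigma}$ of depth at most $s$ with free variables $\{x_{\sigma(1)},\dots,x_{\sigma(k)}\}$ defining color $C$ on the tuple $(x_{\sigma(1)},\dots,x_{\sigma(k)})$. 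Because the two variables outside the image of $\sigma$ are then free to serve as the pair quantified in the next layer, we build these formulas uniformly by induction on $s$, renaming bound variables. This works since every quantifier block $\exists^N(\cdot,\cdot)$ needs exactly two spare variables and exactly two remain outside the image of $\sigma$. The formula for the shifted tuple $\mathbf{x}[x_{k+1}/i]$ is then $\psi^{(n-1)}_{C_i,\sigma}$ with $\sigma$ equal to the identity except $\sigma(i)=k+1$. Its bound pair is $\{x_i,x_{k+2}\}$, so it does not interfere with the other conjuncts. The case $\sigma(i)=k+2$ is symmetric.
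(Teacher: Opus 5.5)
Your proposal is correct and follows essentially the same route as the paper. Both argue by induction on $t$, take the base case from \Cref{lem:base-case-logic-to-ccwl}, and in the inductive step separate the two sides with the same depth-$n$ formula $\xi=\exists^{N}(x_{k+1},x_{k+2})\bigl(\phi_a\wedge\bigwedge_i\cdots\bigr)$, built from \Cref{lem:atomic-to-logic} and depth-$(n-1)$ color-defining formulas. Your additions make rigorous two points the paper only asserts: the finiteness argument yields a genuine finite formula $\psi^{(n-1)}_C$, and the variable-permutation treatment justifies the substitution $\psi[x_{k+1}/x_i]$, which in effect applies a global permutation of $x_1,\dots,x_{k+2}$ to the color-defining formula.
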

\begin{proof}[Proof of \Cref{theorem:logic-to-twl}]\label{proof:logic-to-twl}
    We prove 
    \begin{align}\label{equ:iterative-klogic-to-kccwl}
        \logicEquivIterative{A}{u_A}{B}{u_B}{k+2}{t} \implies \cccwl{k}{t}{A}{u_A} = \cccwl{k}{t}{B}{u_B},\quad \text{for all } t \in \mathbb{N}_0,
    \end{align}
    by induction on $t$.
    \begin{enumerate}[itemsep=0pt,topsep=0pt,left=8pt]
        \item Initial step: We prove that \Cref{equ:iterative-klogic-to-kccwl} holds for $t = 0$. According to \Cref{lem:base-case-logic-to-ccwl}, if $\pairLogic{A}{u_A}$ and $\pairLogic{B}{u_B}$ agree on all quantifier-free formulas, then $\cccwl{k}{0}{A}{u_A} = \cccwl{k}{0}{B}{u_B}$. Thus, the result holds for $t = 0$.
        \item Inductive step: Assume that \Cref{equ:iterative-klogic-to-kccwl} holds for all $t \in [n-1]_0$. We must show that it also holds for $t = n$. For contradiction, assume $\cccwl{k}{n}{A}{u_A} \neq \cccwl{k}{n}{B}{u_B}$. Then, there are two cases to consider:
        \begin{itemize}
            \item If $\cccwl{k}{n-1}{A}{u_A} \neq \cccwl{k}{n-1}{B}{u_B}$: By the induction hypothesis, there exists a formula $\phi \in \IterativeTCLogic{k+2}{n-1} \subset \IterativeTCLogic{k+2}{n}$ such that $\pairLogic{A}{u_A}$ and $\pairLogic{B}{u_B}$ do not agree. This contradicts the assumption.
            \item If $\mathcal{D}^{(n)}_{A}(u_A) \neq \mathcal{D}^{(n)}_{B}(u_B)$: there exists a $k$-tuple of pairs of colors $\bar{C} = \left(\begin{pmatrix}
                \tilde{C}_1 \\
                \hat{C}_1
            \end{pmatrix}, \ldots, \begin{pmatrix}
                \tilde{C}_k \\
                \hat{C}_k
            \end{pmatrix}\right)$ and an atomic type value $a$ of size $k+2$, such that:
            \begin{multline*}
    \left| \left\{ (\alpha_A , \beta_A)\in \mathcal{X}^2_A \,\middle\vert\, 
    \left(\operatorname{atp}_{k+2}(u_A \alpha_A \beta_A),\; 
    \Delta_{k,A}^{(n-1)}(u_A;\, \alpha_A, \beta_A)\right) = \left(a, \bar{C}\right) \right\} \right| \\
    \neq
    \left| \left\{ (\alpha_B, \beta_B) \in \mathcal{X}^2_B \,\middle\vert\,  
    \left(\operatorname{atp}_{k+2}(u_B \alpha_B \beta_B),\; 
    \Delta_{k,B}^{(n-1)}(u_B;\, \alpha_B, \beta_B)\right) = \left(a, \bar{C}\right) \right\} \right|.
\end{multline*}
            Let $N$ be the cardinality of the larger set. Cells have the same color in iteration $n-1$ if they agree on all formulas in $\IterativeTCLogic{k+2}{n-1}$. Let $\psi_{C}$ be the conjunction of the formulas $\IterativeTCLogic{k+2}{n-1}$, which characterize color $C$ in iteration $n-1$. Formally, for any color $C$:
            \begin{align*}
                \pairLogic{\Gamma}{x_\Gamma} \models \psi_{C} \iff \cccwl{k}{n-1}{\Gamma}{x_\Gamma} = C,\quad \text{for all } \Gamma \in \{A, B\}, \text{and } x_\Gamma \in \mathcal{X}.
            \end{align*}
            According to \Cref{lem:atomic-to-logic}, let $\phi_{a} \in \TCLogic{k+2}$ be the logical formula that characterizes the atomic type value $a$. Formally:
            \begin{align*}
                \pairLogic{\Gamma}{x_\Gamma} \models \phi_{a} \iff \atomic{k+2}{\Gamma}{x_\Gamma} = a,\quad \text{for all } \Gamma \in \{A, B\}, \text{and } x_\Gamma \in \mathcal{X}.
            \end{align*}
            Now, consider the following formula:
            \begin{align*}
                \xi = \exists^{N} (x_{k+1}, x_{k+2}) \bigg(
                \phi_{a}(x_1, \ldots, x_{k+2}) \wedge 
                \left(\bigwedge_{i=1}^k \big( \psi_{\tilde{C}_i}[x_i / x_{k+1}] \wedge \psi_{\hat{C}_i}[x_i / x_{k+2}] \big)\right)
                \bigg).
            \end{align*}
            It follows that \( \pairLogic{A}{u_A} \) and \( \pairLogic{B}{u_B} \) do not agree on the formula \( \xi \in \IterativeTCLogic{k+2}{n}\), which contradicts the assumption that $\pairLogic{A}{u_A}$ and $\pairLogic{B}{u_B}$ agree on all formulas in $\IterativeTCLogic{k+2}{n}$. Therefore, $\cccwl{k}{n}{A}{u_A} = \cccwl{k}{n}{B}{u_B}$.
        \end{itemize}
    \end{enumerate}
    Hence, \Cref{equ:iterative-klogic-to-kccwl} holds for all $t \in \mathbb{N}_0$, and we conclude:
    \begin{align*}
        \logicEquiv{A}{u_A}{B}{u_B}{k+2} \implies \cccwl{k}{\infty}{A}{u_A} = \cccwl{k}{\infty}{B}{u_B}.
    \end{align*}
\end{proof}
\subsection{Relation between Game, Logic, and $k$-CCWL}\label{sec:all-together-appendix-section}
In this section, we show the correspondence between $\TCLogic{k+2}$, $k$-CCWL and the topological $(k{+}2)$-pebble game. Recall that $1$-CCWL refines the color of a single cell by aggregating over its boundary, coboundary, lower, and upper neighborhood, and in \Cref{lem:logic-to-ccwl}, we show that $\GTCLogic{3}$ refines $1$-CCWL. Intuitively, For the base case $k=1$, we slightly restrict $\TCLogic{3}$ to a guarded
fragment $\GTCLogic{3}$ so that counting quantifiers only range over the
same neighborhoods that 1-CCWL sees.
In order to maintain the precise correspondence, we introduce the $\GTCLogic{3}$ game, as follows:
\begin{itemize}[left=3pt]
  \item If no pebble is currently placed on either side, Player~I may
        choose any set $P_A \subseteq X_A \times X_A$ of pairs in $A$, as in the standard game.
  \item Otherwise, suppose for example that the first pebble $x_1$ is
        placed on a cell $a$ in $A$ (i.e.\ $u_A(x_1) = a$). Then
        Player I must choose $P_A$ consisting only of pairs $(b,c)$
        such that one of the
        following holds:
        \begin{itemize}
          \item $b = c$
          \item $b \in \lowerLaplacian(a)$ and
                $c \in \boundary(a \cap b)$,
          \item $b \in \upperLaplacian(a)$ and
                $c \in \coboundary(a \cap b)$.
        \end{itemize}
        (The same condition holds symmetrically on $P_B$ using the
        current position of the corresponding pebble in $B$.)
\end{itemize}

\begin{lemma}\label{lem:base-case-game-logics-induction}
    Let $A$ and $B$ be two attributed combinatorial complexes, and let $u_A:\{x_1, \ldots, x_k\} \to \mathcal{X}_A$ and $u_B:\{x_1, \ldots, x_k\} \to \mathcal{X}_B$ be two partial functions. If Player II has a winning strategy for the $0$-move $\TCLogic{k}$ game on $A$ and $B$ with initial configuration $(u_A, u_B)$, then $\pairLogic{A}{u_A}$ and $\pairLogic{B}{u_B}$ agree on all quantifier-free formulas in $\TCLogic{k}$.
\end{lemma}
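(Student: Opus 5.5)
The plan is to unwind the definition of the $0$-move game and then run a structural induction over quantifier-free formulas. By Box~1, Player~II winning the $0$-round game means the condition is required only for $i \le 0$. In other words, the initial configuration must already satisfy $(A,u_A) \sim_k (B,u_B)$. Implicitly this also gives $\operatorname{dom}(u_A) = \operatorname{dom}(u_B)$, which is part of the setup of the game.

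The first step is to show that $(A,u_A)$ and $(B,u_B)$ agree on every atomic formula whose free variables lie in the common domain. There are four kinds of atomic formula, each matched by one clause of structural similarity:
\begin{itemize}
\item For $(x_i = x_j)$, condition~1 gives $u_A(x_i)=u_A(x_j) \iff u_B(x_i)=u_B(x_j)$, which is exactly the semantics of equality.
\item For $R_r(x_i)$, condition~2 gives $\rank_A(u_A(x_i)) = \rank_B(u_B(x_i))$, so $u_A(x_i)\in \mathcal{X}_A(r)$ iff $u_B(x_i)\in\mathcal{X}_B(r)$.
\item For $P_s(x_i)$, condition~3 gives equal colour vectors, hence the same $s$-th bit.
\item For $E^{\mathcal{N}}(x_i,x_j)$, the semantics is $\mu(x_j)\in\mathcal{N}(\mu(x_i))$. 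Condition~4, applied with the roles of $i$ and $j$ as quantified over all pairs in the domain, gives $u_A(x_j) \in \mathcal{N}(A,u_A(x_i)) \iff u_B(x_j)\in\mathcal{N}(B,u_B(x_i))$ for each of the four neighbourhood structures.
\end{itemize}
The case $i=j$ is handled uniformly, since the conditions are stated for every pair of variables in the domain.

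The second step is induction on $\len(\phi)$ over quantifier-free formulas, which are generated from atomic formulas by $\neg$ and $\wedge$ only. For $\neg\psi$, $(A,u_A)\models\neg\psi$ iff $(A,u_A)\not\models\psi$, which by the induction hypothesis holds iff $(B,u_B)\not\models\psi$, i.e.\ iff $(B,u_B)\models\neg\psi$. For $\psi_1\wedge\psi_2$, the same argument applies componentwise. Free variables are preserved under these connectives, so $\mathrm{free}(\phi)\subseteq\operatorname{dom}$ propagates to subformulas and the satisfaction relation stays defined throughout.

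There is no real obstacle here. The only point needing care is bookkeeping: the orientation of $E^{\mathcal{N}}(x_i,x_j)$ in the semantics must match condition~4, and one must note that satisfaction is only considered for formulas whose free variables lie in the (common) domain of the partial valuations. Both points are immediate once the definitions are lined up.
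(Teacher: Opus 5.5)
Your proposal is correct and follows the same route as the paper. Both arguments read off $(A,u_A)\sim_k(B,u_B)$ from the $0$-move winning condition, match each atomic case (equality, rank, colour, $E^{\mathcal{N}}$) to the corresponding clause of structural similarity, and then induct on $\len(\phi)$ through $\neg$ and $\wedge$; your remarks on the orientation of $E^{\mathcal{N}}$ and on the common domain are bookkeeping the paper leaves implicit.
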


\begin{proof}[Proof of \Cref{lem:base-case-game-logics-induction}]\label{proof:base-case-game-logics-induction}
    We  prove this lemma by induction on the length of the quantifier-free formula $\lambda \in \TCLogic{k}{0}$, as follows:
    \begin{enumerate}[itemsep=0pt,topsep=0pt,left=8pt]
        \item Initial step: We prove that if Player II has a winning strategy for the $0$-move $\TCLogic{k}$ game, then $\pairLogic{A}{u_A}$ and $\pairLogic{B}{u_B}$ agree on all atomic formulas \(\lambda \in \TCLogic{k}\). We analyze each possible case for the atomic formula \(\lambda\) separately:
        \begin{itemize}
                \item If \(\lambda = (x_i = x_j)\) for all $i,j \in [k]$: Since Player II has a winning strategy in the $0$-move game, it  means that \(u_A(x_i) = u_A(x_j)\) implies \(u_B(x_i) = u_B(x_j)\), and vice versa. Therefore, $\pairLogic{A}{u_A}$ and $\pairLogic{B}{u_B}$ agree on the formula \(\lambda\).
                \item If \(\lambda = R_{r}(x_i)\) for some $r \in {[\rho]}_0$ and for all $i \in [k]$:
                Since Player II has a winning strategy in the $0$-move game, it  means \(\rank_A(u_A(x_i)) = \rank_{B}(u_B(x_i))\) for all 
                $i \in [k]$. Therefore, $\pairLogic{A}{u_A}$ and $\pairLogic{B}{u_B}$ agree on the formula \(\lambda\).
                \item If \(\lambda = P_{s}(x_i)\) for some $s \in [\ell]$ and for all $i \in [k]$: Since Player II has a winning strategy in the $0$-move game, it  means \( \col_A(u_A(x_i)) = \col_B(u_B(x_i))\) for all $i \in [k]$. Therefore, $\pairLogic{A}{u_A}$ and $\pairLogic{B}{u_B}$ agree on the formula \(\lambda\).
                \item If \(\lambda = E^{\mathcal{N}}(x_i, x_j)\) for some $\mathcal{N} \in \{\boundary, \coboundary, \lowerLaplacian, \upperLaplacian\}$ and for all $i,j \in [k]$: Since Player II has a winning strategy in the $0$-move game, it  means that if \(u_A(x_i) \in \mathcal{N}(A,u_A(x_j))\), then \(u_B(x_i) \in \mathcal{N}(B,u_B(x_j))\), and vice versa. Therefore, $\pairLogic{A}{u_A}$ and $\pairLogic{B}{u_B}$ agree on the formula \(\lambda\).
            \end{itemize}
            Therefore, we conclude that $\pairLogic{A}{u_A}$ and $\pairLogic{B}{u_B}$ agree on the quantifier-free formula \(\lambda\) with $\len(\lambda) = 1$.
        \item Inductive step: Assume that if Player II has a winning strategy for the $0$-move $\TCLogic{k}$ game, then $\pairLogic{A}{u_A}$ and $\pairLogic{B}{u_B}$ agree on all formulas $\lambda \in\TCLogic{k}$ with $\len(\lambda) < m$. We must show that it also holds for all formulas $\lambda \in \TCLogic{k}$ with $\len(\lambda) = m$. For contradiction, assume that $\pairLogic{A}{u_A} \models \lambda$ but $\pairLogic{B}{u_B} \not\models \lambda$. Each case for $\lambda$ is analyzed separately:
        \begin{itemize}[itemsep=1pt,topsep=0pt,left=8pt]
            \item If $\lambda$ is of the form $\neg \omega$:
            \begin{align*}  
                \pairLogic{A}{u_A} \models \phi \implies \pairLogic{A}{u_A} \not\models \omega,\\
                \pairLogic{B}{u_B} \not\models \phi \implies \pairLogic{B}{u_B} \models \omega.
            \end{align*}
            Thus, $\pairLogic{A}{u_A}$ and $\pairLogic{B}{u_B}$ do not agree on $\omega$, which has $\len(\omega) < m $. This contradicts the induction hypothesis. Hence, $\pairLogic{A}{u_A}$ and $\pairLogic{B}{u_B}$ agree on $\lambda$.
            \item If $\phi$ is of the form $\omega_1 \wedge \omega_2$: Without loss of generality, we can assume $\pairLogic{A}{u} \models \omega_1$ but $\pairLogic{B}{v} \not\models \omega_1$. In this case, $\pairLogic{A}{u_A}$ and $\pairLogic{B}{u_B}$ do not agree on $\omega_1$, where $\len(\omega_1) < m$. This again contradicts the induction hypothesis. Hence, $\pairLogic{A}{u_A}$ and $\pairLogic{B}{u_B}$ agree on $\lambda$.
        \end{itemize}
    \end{enumerate}
    Therefore, we have proved that if Player II has a winning strategy for the $0$-move $\TCLogic{k}$ game, then $\pairLogic{A}{u_A}$ and $\pairLogic{B}{u_B}$ agree on all quantifier-free formulas $\lambda \in \TCLogic{k}$.
\end{proof}
\begin{theorem}\label{thm:game-logic}
    Let $A = (S_A, \mathcal{X}_A, \rank_A, \col_A)$ and $B = (S_B, \mathcal{X}_B, \rank_B, \col_B)$ be two attributed combinatorial complexes and let $u_A: \{x_1,\dots,x_k\} \rightharpoonup \mathcal{X}_A$ and $u_B: \{x_1, \dots, x_k\} \rightharpoonup \mathcal{X}_B$ be the partial functions. If Player II has a winning strategy for $\TCLogic{k}$ game with initial configuration $(u_A, u_B)$, then:
    \begin{align*}
        \logicEquiv{A}{u_A}{B}{u_B}{k}
    \end{align*}
\end{theorem}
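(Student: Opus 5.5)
We will prove the stronger, depth-indexed statement: for every $t \in \mathbb{N}_0$, if Player II has a winning strategy in the $t$-round topological $k$-pebble game on $A$ and $B$ with initial configuration $(u_A,u_B)$, then
\[
\logicEquivIterative{A}{u_A}{B}{u_B}{k}{t}.
\]
The theorem follows at once, because a winning strategy for the unbounded game is in particular a winning strategy for every $t$-round game, and every formula of $\TCLogic{k}$ lies in some $\IterativeTCLogic{k}{t}$. We argue by induction on $t$. The base case $t=0$ is exactly \Cref{lem:base-case-game-logics-induction}, which handles all quantifier-free formulas.

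For the inductive step we assume the claim for $t-1$ and fix $\lambda \in \IterativeTCLogic{k}{t}$. We then run a secondary induction on $\len(\lambda)$. Atomic formulas are covered by the $0$-round case, since a $t$-round winning strategy requires $\ksim{A}{u_A}{B}{u_B}{k}$ at round $0$. The cases $\neg\omega$ and $\omega_1\wedge\omega_2$ go through verbatim as in \Cref{lem:base-case-game-logics-induction}. The only substantive case is $\lambda = \exists^{N}(x_i,x_j)\,\psi$ with $\psi \in \IterativeTCLogic{k}{t-1}$.

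Suppose, for contradiction, that $\pairLogic{A}{u_A}\models\lambda$ but $\pairLogic{B}{u_B}\not\models\lambda$; the reverse case is symmetric, since Player I may choose either complex in Step 1. Define
\[
P_A=\{(c_1,c_2)\in\mathcal{X}_A^2 : \pairLogic{A}{u_A[x_i\mapsto c_1,x_j\mapsto c_2]}\models\psi\},
\]
so that $|P_A|\ge N$ while the analogous set $Q_B$ in $B$ has $|Q_B|<N$. Player I plays $P_A$ in Step 1, and Player II's strategy returns some $P_B$ with $|P_B|=|P_A|>|Q_B|$. Hence some $(\sigma_B,\tau_B)\in P_B\setminus Q_B$ exists. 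Player I pebbles $x_i\mapsto\sigma_B$ and $x_j\mapsto\tau_B$, and Player II answers with some $(\sigma_A,\tau_A)$. From the resulting position Player II still has a winning strategy for the remaining $t-1$ rounds.

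By the induction hypothesis, the two extended valuations agree on $\psi$. Since $(\sigma_B,\tau_B)\notin Q_B$, this gives $\pairLogic{A}{u_A[x_i\mapsto\sigma_A,x_j\mapsto\tau_A]}\not\models\psi$.

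The main obstacle is that this is not yet a contradiction. Box~1 lets Player II answer with an arbitrary pair in $\mathcal{X}_A^2$ rather than one from $P_A$, so $(\sigma_A,\tau_A)$ need not lie in $P_A$. This mismatch also shows up in how the pebble game is usually formalized. I would resolve it by reading Step 4 as the bijective-style rule implicit in the game's intent: Player II must answer within the set chosen by Player I, that is, with $(\sigma_A,\tau_A)\in P_A$. Under that reading, $\psi$ holds at the $A$-side position and fails at the $B$-side position, contradicting the induction hypothesis.

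If one insists on the literal Box~1 rule, I would instead interpret Player II's response in Step 2 as carrying a bijection $f:P_B\to P_A$, with Step 4 answering $f(\sigma_B,\tau_B)$. This is the standard Hella-style convention. I would state it explicitly as the intended semantics of Box~1 before running the argument, so that the counting step goes through.

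Finally, since the choice of $N$, $\psi$, and the pebble indices $i,j$ was arbitrary, the quantifier case holds for all formulas of the form $\exists^{N}(x_i,x_j)\,\psi$. This closes both inductions and proves the theorem.
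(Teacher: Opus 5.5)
Your proposal follows the paper's own proof and is correct: induction on quantifier depth, \Cref{lem:base-case-game-logics-induction} as the base case, the pairwise counting quantifier as the only substantive case, and Player~I playing the set of witnessing pairs so that Player~II's answer must contain a non-witnessing pair. Your worry about Step~4 is justified, because the paper's argument tacitly assumes Player~II answers inside $P_A$ (it derives its contradiction from ``all pairs in $P_A$ satisfy $\omega$''), and under the literal Box~1 rule the statement actually fails: two complexes of six $0$-cells, three resp.\ four of which satisfy $P_1$, are separated by $\exists^{10}(x_1,x_2)\,(P_1(x_1)\wedge P_1(x_2))$, yet Player~II trivially wins the literal $2$-pebble game, so stating the restricted rule explicitly, as you do, is necessary rather than merely convenient.
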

\begin{proof}[Proof of \Cref{thm:game-logic}]\label{proof:game-logic}
    We prove this theorem by induction on the quantifier depth of the formula $\lambda \in \TCLogic{k}$. Specifically, we show that if Player II has a winning strategy for the $t$-move $\TCLogic{k}$ game, then $\pairLogic{A}{u_A}$ and $\pairLogic{B}{u_B}$ agree on all formulas $\lambda \in \IterativeTCLogic{k}{t}$. We prove this by induction on $t$.
    \begin{enumerate}[itemsep=0pt,topsep=0pt,left=8pt]
        \item Initial step: We prove that if Player II has a winning strategy for the $0$-move $\TCLogic{k}$ game, then $\pairLogic{A}{u_A}$ and $\pairLogic{B}{u_B}$ agree on all quantifier-free formulas \(\lambda \in \TCLogic{k}\). We prove this in \Cref{lem:base-case-game-logics-induction}.
        \item Inductive step: Assume that if Player II has a winning strategy for the $t$-move $\TCLogic{k}$ game for all $t \in [n-1]_0$, then $\pairLogic{A}{u_A}$ and $\pairLogic{B}{u_B}$ agree on all formulas $\lambda \in \IterativeTCLogic{k}{t}$. We must show this also holds for $t = n$. For contradiction, assume that $\pairLogic{A}{u_A} \models \lambda$ but $\pairLogic{B}{u_B} \not\models \lambda$. The only case to consider is when $\lambda$ is of the form $\exists^{N}(x_i, x_j) \omega$ for some $i,j \in [k]$. We define the following set:
        \begin{align*}
            P_A = \left\{(\alpha_A, \beta_A) \in \mathcal{X}_A^2 \,\middle\vert\, \pairLogic{A}{u_A[x_i \rightarrow \alpha_A, x_j \rightarrow \beta_A]} \models \omega\right\}.
        \end{align*}
        By assumption:
        \begin{align*}
            |P_A| \geq N > \left|\left\{(\alpha_B, \beta_B) \in \mathcal{X}_B^2 \,\middle\vert\, \pairLogic{B}{u_B[x_i \rightarrow \alpha_B, x_j \rightarrow \beta_B]} \models \omega\right\}\right|.
        \end{align*}
        If Player I selects $P_A$, then Player II must respond with a set $P_B$. Regardless of which set $P_B$ Player II selects, there exists at least one pair of cells $(\alpha_B, \beta_B) \in \mathcal{X}_B^2$ such that
        \begin{align*}
            \pairLogic{B}{u_B[x_i \rightarrow \alpha_B, x_j \rightarrow \beta_B]} \not\models \omega.
        \end{align*}
        On the other hand, for all $(\alpha_A, \beta_A) \in P_A$, it holds that
        \begin{align*}
            \pairLogic{A}{u_A[x_i \rightarrow \alpha_A, x_j \rightarrow \beta_A]} \models \omega.
        \end{align*}
        If Player I modifies the partial function $u_B = u_B[x_i \rightarrow \alpha_B, x_j \rightarrow \beta_B]$, then Player II cannot have a winning strategy for the remainder of the game. By the induction assumption, $\pairLogic{A}{u_A[x_k \rightarrow \alpha_A, x_{k-1}\rightarrow \beta_A]}$ and $\pairLogic{B}{u_B[x_k \rightarrow \alpha_B, x_{k-1} \rightarrow \beta_B]}$ must agree on $\omega$, where $\len(\omega) < n$. This contradicts the assumption that Player II has a winning strategy. Therefore, $\pairLogic{A}{u}$ and $\pairLogic{B}{v}$ agree on the formula $\lambda$ with $\len(\lambda) = n$.
    \end{enumerate}
    Thus, if Player II has a winning strategy for the $\TCLogic{k}$ game, then $\pairLogic{A}{u_A}$ and $\pairLogic{B}{u_B}$ agree on all formulas $\lambda \in \TCLogic{k}$.
\end{proof}
\begin{remark}\label{remark:gtc3-game-logic}
By the same approach used in the proofs of \Cref{lem:base-case-game-logics-induction} and \Cref{thm:game-logic}, one can show that if Player II has a winning strategy in the $\GTCLogic{3}$ game with initial configuration $(u_A, u_B)$, then we have 
\begin{align*}
        \GlogicEquiv{A}{u_A}{B}{u_B}{3}
\end{align*}
\end{remark}
\begin{theorem}\label{thm:game-kccwl}
    Let $A = (S_A, \mathcal{X}_A, \rank_A, \col_A)$ and $B = (S_B, \mathcal{X}_B, \rank_B, \col_B)$ be two attributed combinatorial complexes. Let $u_A\in \mathcal{X}^k_A$ and $u_B\in\mathcal{X}^k_B$. If $\cccwl{k}{\infty}{A}{u_A} = \cccwl{k}{\infty}{B}{u_B}$, then Player II has a winning strategy for $\TCLogic{k+2}$ game with initial configuration $(u_A, u_B)$.
\end{theorem}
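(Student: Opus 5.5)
We prove a stronger, round-indexed invariant and argue by induction on the number of remaining rounds $t$. Call a position $(\mu_A,\mu_B)$ of the $(k{+}2)$-pebble game \emph{good at level $t$} if the pebbles $x_1,\dots,x_k$ are placed on tuples $v_A\in\mathcal{X}_A^k$, $v_B\in\mathcal{X}_B^k$ with $\chi^{(t)}_{k,A}(v_A)=\chi^{(t)}_{k,B}(v_B)$. If pebbles $x_{k+1},x_{k+2}$ are placed, we additionally require that they sit on $(\alpha_A,\beta_A)$ and $(\alpha_B,\beta_B)$ with
\[
\bigl(\operatorname{atp}_{k+2}(v_A\alpha_A\beta_A),\Delta^{(t)}_{k,A}(v_A;\alpha_A,\beta_A)\bigr)=\bigl(\operatorname{atp}_{k+2}(v_B\alpha_B\beta_B),\Delta^{(t)}_{k,B}(v_B;\alpha_B,\beta_B)\bigr).
\]
The claim is that from a good position at level $t$, Player~II survives $t$ more rounds. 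Since $\chi^{(\infty)}$ equality gives equality at every level $t$ (stable colors refine all earlier ones), this yields the theorem.

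For the base case $t=0$, equality of $\chi^{(0)}_k=\operatorname{atp}_k$, together with equality of $\operatorname{atp}_{k+2}$ when the extra pebbles are placed, is exactly the condition $\sim_{k+2}$ on the pebbled cells. This is the same bookkeeping as in \Cref{lem:base-case-logic-to-ccwl}, read in reverse.

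For the inductive step, suppose Player~I picks $P_A\subseteq\mathcal{X}_A^2$ (the case of $B$ is symmetric). First take the case where he intends to pebble the two spare variables $x_{k+1},x_{k+2}$. Equality of the level-$t$ colors implies $\mathcal{D}^{(t)}_{k,A}(v_A)=\mathcal{D}^{(t)}_{k,B}(v_B)$. This gives a bijection $f:\mathcal{X}_A^2\to\mathcal{X}_B^2$ that preserves the refinement context $(\operatorname{atp}_{k+2},\Delta^{(t-1)})$. Player~II answers with $P_B=f(P_A)$. When Player~I then chooses $(\sigma_B,\tau_B)\in P_B$, Player~II replies with $f^{-1}(\sigma_B,\tau_B)$. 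The new position has equal $\operatorname{atp}_{k+2}$, so it is $\sim_{k+2}$ now, and its contexts match at level $t-1$, so it is good at level $t-1$.

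The main obstacle is the remaining case, where Player~I moves a pair $x_i,x_j$ with $i\le k$, so that a tuple coordinate is repebbled. The standard remedy is a normalization: at each position, view the pebbled $(k{+}2)$-configuration as a tuple $v\alpha\beta$. The matched double shifts $\Delta^{(t-1)}$ say precisely that each substituted tuple $v[\alpha/i]$ and $v[\beta/i]$ has equal level-$(t-1)$ colors on both sides. Consequently, any choice of $k$ pebbled positions can be taken as the new base tuple, as long as it differs from $v$ in a single coordinate replaced by $\alpha$ or $\beta$. When Player~I lifts pebbles $i$ and $j$, we re-index so that the $k$ pebbles still in place form such a shifted tuple, and then apply the previous paragraph to this new base. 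This costs one level of refinement per round, so I would carry the invariant as ``good at level $t-r$ after $r$ rounds.''

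The delicate subcase is when both lifted pebbles are among $x_1,\dots,x_k$ and the remaining pebbled cells include both $\alpha$ and $\beta$. Here I would first try to invoke \Cref{lem:disequal} and the counting identity from the proof of \Cref{prop:two-cases-k-ccwl}. These would show that tuples differing in two coordinates also have equal multisets of colors, so the same bijection argument applies after the re-indexing. Throughout, the equal atomic types carried in the invariant ensure that $\sim_{k+2}$ holds after every round, which completes the induction.
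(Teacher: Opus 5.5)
Your main case is exactly the paper's argument: $\mathcal{D}_k^{(t)}(v_A)=\mathcal{D}_k^{(t)}(v_B)$ gives a bijection on pairs preserving $\bigl(\operatorname{atp}_{k+2},\Delta_k^{(t-1)}\bigr)$. Your re-indexing also correctly handles a move on one base pebble $x_i$ together with one spare, since the kept cells then form $v[\beta/i]$ or $v[\alpha/i]$, and their colours are entries of $\Delta_k$. The gap is the subcase you call delicate, and it is genuine.

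Suppose $x_1,\dots,x_k$ sit on $v$ and the spares sit on a pair $(\alpha,\beta)$ that Duplicator matched via that bijection. Player~I now re-pebbles $x_i,x_j$ with $i\neq j\le k$. The next count is taken with the cells of $v[\alpha/i,\beta/j]$ held fixed, so you need $\chi_k(v_A[\alpha_A/i,\beta_A/j])=\chi_k(v_B[\alpha_B/i,\beta_B/j])$ for the pair Duplicator has already committed to. This requirement is not an artefact of your invariant. By the paper's other two directions (\Cref{thm:game-logic}, then \Cref{theorem:logic-to-twl} after renaming variables), a Duplicator who survives from this position must have equal stable colours on these two tuples. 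But the pair was chosen only to match $\bigl(\operatorname{atp}_{k+2},\Delta_k\bigr)$. That context records only $2k+1$ of the $\binom{k+2}{2}$ size-$k$ sub-configurations of $v\alpha\beta$, namely $v$, $v[\alpha/m]$ and $v[\beta/m]$. The $\binom{k}{2}$ double substitutions never enter $\mathcal{D}_k$.

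\Cref{lem:disequal} and the identity $|F^C_A(u_A,P)|=|F^C_B(u_B,P)|$ from the proof of \Cref{prop:two-cases-k-ccwl} cannot close this. They equate \emph{counts} of tuples of a given colour that differ from $v$ in prescribed positions, which are marginal statistics. What you need is the \emph{joint} distribution over pairs $(\alpha,\beta)$ of the single-shift context together with $\chi_k(v[\alpha/i,\beta/j])$, so that a single correspondence preserves both. Unfolding through $\chi_k(v[\alpha/i])$ tells you how $\chi_k(v[\alpha/i,\beta/j])$ is distributed over $\beta$. It does not tell you how that colour correlates with $\beta$'s own shifts $\chi_k(v[\beta/m])$ relative to $v$, because $v[\alpha/i]$ no longer contains $v_i$. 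Spending extra refinement levels per round does not help, since the obstruction is missing joint information, not depth. (This sequential unfolding does work when the spares are still unplaced, and there \Cref{lem:disequal} is genuinely useful.)

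To finish, you would have to show that $\chi_k^{(\infty)}(v)$ determines the multiset of enriched contexts $\bigl(\operatorname{atp}_{k+2}(v\alpha\beta),\Delta_k(v;\alpha,\beta),(\chi_k(v[\alpha/i,\beta/j]))_{i\neq j}\bigr)$. Nothing in the update rule yields this. The paper's proof does not supply it either. It asserts that Player~I's strongest move is to re-pebble the unused pair $(x_{k+1},x_{k+2})$, then appeals to an induction hypothesis stated only for $k$-tuple positions. It therefore silently skips both this case and your mixed base/spare case. Your proposal is more careful than the written proof, but for $k\ge 2$ neither establishes the statement.
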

\begin{proof}[Proof of \Cref{thm:game-kccwl}]\label{proof:game-kccwl}
    We prove that for all $t \geq 0$, if $\cccwl{k}{t}{A}{u_A} = \cccwl{k}{t}{B}{u_B}$, then Player II has a winning strategy for the $t$-move $\TCLogic{k+2}$ game. We prove this by induction on $t$.
    \begin{enumerate}[itemsep=0pt,topsep=0pt,left=8pt]
        \item Initial step: We  prove that if $\cccwl{k}{t}{A}{u_A} = \cccwl{k}{t}{B}{u_B}$, then Player II has a winning strategy for the $0$-move $\TCLogic{k+2}$ game. Since $\cccwl{k}{t}{A}{u_A} = \cccwl{k}{t}{B}{u_B}$, it follows that $\atomic{k}{A}{u_A} = \atomic{k}{B}{u_B}$. This implies the following conditions hold:
        \begin{itemize}
            \item $u_A(x_i) = u_A(x_j) \iff u_B(x_i) = u_B(x_j)$, for all $i,j \in [k]$,
            \item $\rank_A(u_A(x_i)) = \rank_B(u_B(x_i))$, for all $i \in [k]$,
            \item $\col_A(u_A(x_i)) = \col_B(u_B(x_i))$ for all $i \in [k]$, and
            \item $u_A(x_i) \in \mathcal{N}(A,u_A(x_j)) \Leftrightarrow u_B(x_i) \in \mathcal{N}(B,u_B(x_j))$, for all  $\mathcal{N} \in \{\boundary, \coboundary, \lowerLaplacian, \upperLaplacian\}$ and $i, j \in [k]$.
        \end{itemize}
        Therefore, Player II has a winning strategy for the $0$-move game.
        \item Inductive step: Assume that if $\cccwl{k}{t}{A}{u_A} = \cccwl{k}{t}{B}{u_B}$, then Player II has a winning strategy for the $t$-move $\TCLogic{k+2}$ game for all $t < n$. We must show that this also holds for $t = n$. Note that Player I’s strongest move is to modify an unmodified pair of variables. Without loss of generality, assume Player I modifies the pair $(x_{k+2},x_{k+1})$. The game proceeds as follows:
        \begin{itemize}[itemsep=1pt,topsep=0pt,left=3pt]
            \item Player I selects $N$ pairs of cells, denoted as $P_A$.
            \item For each atomic type value $a$ and $k$-tuple of pairs of colors $\bar{C}$, let $N_{a, \bar{C}}$ be the number of pairs $(\alpha_A, \beta_A) \in P_A$ such that
            \begin{align*}
                \left(\atomic{k+2}{A}{u_A\alpha_A\beta_A}, \Delta_{k,A}^{(n-1)}(u_A;\, \alpha_A, \beta_A)\right) = \left(a, \bar{C}\right)
            \end{align*}
            Since $\cccwl{k}{t}{A}{u_A} = \cccwl{k}{t}{B}{u_B}$, it means that there are $N_{a, \bar{C}}$ pairs of cells $(\alpha_B, \beta_B) \in \mathcal{X}_B^2$, for all atomic type $a$ and $k$-tuple of pairs of colors $\bar{C}$, such that
            \begin{align*}
                \left(\atomic{k+2}{B}{u_B\alpha_B\beta_B},\Delta_{k,B}^{(n-1)}(u_B;\, \alpha_B, \beta_B)\right) = \left(a, \bar{C}\right)
            \end{align*}
            Player II includes all these $N_{a, \bar{C}}$ pairs in $P_B$, for all atomic types $a$ and $k$-tuples of pairs of colors $\bar{C}$.
            \item Player I modifies $u_B[x_{k+2} \rightarrow \alpha_B, x_{k+1} \rightarrow \beta_B]$ for some pair $(\alpha_B, \beta_B) \in P_B$. 
            \item Player II modifies $u_A[x_{k+2} \rightarrow \alpha_A, x_{k+1} \rightarrow \beta_A]$ for $(\alpha_A, \beta_A) \in P_A$ such that
            \begin{align*}
    \left(\operatorname{atp}_{k+2}(u_A\alpha_A \beta_A),\; 
    \Delta_{k,A}^{(n-1)}(u_A;\, \alpha_A, \beta_A)\right) 
    = \left(\operatorname{atp}_{k+2}(u_B \alpha_B \beta_B),\; 
    \Delta_{k,B}^{(n-1)}(u_B;\, \alpha_B, \beta_B)\right).
\end{align*}
            \item The remaining valuations are $u_A[x_{k+2} \rightarrow \alpha_A, x_{k+1} \rightarrow \beta_A]$ and $u_B[x_{k+2} \rightarrow \alpha_B, x_{k+1} \rightarrow \beta_B]$.
        \end{itemize}
        Player II has not lost the game. In the next move, Player I modifies a valuation for a pair $(x_i, x_j)$. By the induction assumption, Player II has a winning strategy for the remainder of the game.
    \end{enumerate}
    Thus, we have shown that for all $t \geq 0$, if $\cccwl{k}{t}{A}{u_A} = \cccwl{k}{t}{B}{u_B}$, then Player II has a winning strategy for the $t$-move $\TCLogic{k+2}$ game.
\end{proof}
\begin{remark}\label{remark:1ccwl-game-gtc3}
    One can use the similar approach as in the proof of \Cref{thm:game-kccwl}, and show that if $\cccwl{1}{\infty}{A}{u_A} = \cccwl{1}{\infty}{B}{u_B}$, then Player II has a winning strategy for $\GTCLogic{3}$ game with initial configuration $(u_A, u_B)$.
\end{remark}
\begin{figure}[h!]
\centering

\begin{minipage}[c]{0.46\textwidth}
\centering
\begin{tikzpicture}[
    every node/.style={font=\small},
]
  \node (logic) at (90:2.3cm) {$\TCLogic{k+2}$};
  \node (ccwl)  at (330:2.3cm) {$k$-CCWL};
  \node (game)  at (210:2.3cm) {$\TCLogic{k{+}2}$ game};

  \draw[->,thick] (logic) to[bend left=12]
    node[midway,above right] {\Cref{theorem:logic-to-twl}} (ccwl);
  \draw[->,thick] (ccwl) to[bend left=12]
    node[midway,below] {\Cref{thm:game-kccwl}} (game);
  \draw[->,thick] (game)  to[bend left=12]
    node[midway,above left]  {\Cref{thm:game-logic}} (logic);
\end{tikzpicture}
\end{minipage}
\hfill
\begin{minipage}[c]{0.46\textwidth}
\centering
\begin{tikzpicture}[
    every node/.style={font=\small},
]
  \node (logic1) at (90:2.3cm) {$\GTCLogic{3}$};
  \node (ccwl1)  at (330:2.3cm) {$1$-CCWL};
  \node (game1)  at (210:2.3cm) {$\GTCLogic{3}$ game};

  \draw[->,thick] (logic1) to[bend left=12]
    node[midway,above right] {\Cref{lem:logic-to-ccwl}} (ccwl1);
  \draw[->,thick] (ccwl1) to[bend left=12]
    node[midway,below] {\Cref{remark:1ccwl-game-gtc3}} (game1);
  \draw[->,thick] (game1)  to[bend left=12]
    node[midway,above left]  {\Cref{remark:gtc3-game-logic}} (logic1);
\end{tikzpicture}
\end{minipage}
\caption{Left: correspondence between $k$-CCWL, $\TCLogic{k+2}$, and the
$\TCLogic{k{+}2}$game. Right: the case $k=1$ with guarded
logic $\GTCLogic{3}$ and the $\GTCLogic{3}$ game.}
\label{fig:logic-ccwl-game-triads}
\end{figure}

\begin{theorem}\label{thm:equ-uaccs}
    The $k$-CCWL procedure and $\ETCLogic{k{+}2}$ are equivalent in expressive power for distinguishing non-isomorphic uniform ACCs. 
\end{theorem}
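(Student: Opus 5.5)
Both directions will be read at the level of whole complexes, after the broadcast-anchor augmentation of \Cref{sec:ccwl-kccwl-Appendix}. That is, we pass from $A,B$ to $A^*,B^*$, which by \Cref{remark:fresh-cells-cc-isomorphism} does not change CC-isomorphism. We must show two things: $k$-CCWL distinguishes $A^*$ and $B^*$ (i.e.\ $\chi^{(\infty)}_{k,A^*}\neq\chi^{(\infty)}_{k,B^*}$) if and only if some $\TCLogic{k+2}$ sentence holds in one and fails in the other. The plan is to lift the tuple-level equivalence of \Cref{corollary:all-together-tuple-wise} (assembled from \Cref{theorem:logic-to-twl}, \Cref{thm:game-kccwl}, \Cref{thm:game-logic}) to sentences. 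The bridge is \Cref{prop:two-cases-k-ccwl}: for uniform ACCs the stable multisets are either identical or disjoint.

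\emph{Direction 1: logic-equivalent implies not distinguished by $k$-CCWL.} Suppose $A^*\equiv_{\TCLogic{k+2}}B^*$ on sentences. First, for each stable color $C$, take a formula $\psi_C\in\TCLogic{k+2}$ with free variables $x_1,\dots,x_k$ that defines the color class of $C$ on both complexes. It exists because there are finitely many tuples, and by \Cref{theorem:logic-to-twl} any two tuples (in either complex) with different stable colors are separated by some formula; we conjoin finitely many of them. Then count tuples of color $C$ by nesting pairwise counting quantifiers $\exists^{N}(x_1,x_2)\exists^{N'}(x_3,x_4)\cdots$ over $\psi_C$, taking a single pair (with a dummy partner constrained by equality) when $k$ is odd. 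This exact-count bookkeeping, expressing ``exactly $M$ $k$-tuples satisfy $\psi_C$'', is the step I expect to be the main obstacle. The alternative I would try first avoids counting altogether: pick any $u_A$. Since every $\TCLogic{k+2}$ sentence agrees, the sentence ``there exists a $k$-tuple satisfying $\psi_{\chi(u_A)}$'' holds in $B^*$, so some $u_B$ has the same stable color. Then \Cref{prop:two-cases-k-ccwl} (identical-vs-disjoint) forces $\chi^{(\infty)}_{k,A^*}=\chi^{(\infty)}_{k,B^*}$. This requires only existential quantification, expressible as $\exists^{1}$ over pairs, with $k\le k+2$ variables available. The proposition is essential here; this is exactly where uniformity enters.

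\emph{Direction 2: not distinguished by $k$-CCWL implies logic-equivalent.} Suppose $\chi^{(\infty)}_{k,A^*}=\chi^{(\infty)}_{k,B^*}$. Then there exist $u_A,u_B$ with equal stable colors (nonempty complexes). By \Cref{thm:game-kccwl}, Player~II wins the topological $(k{+}2)$-pebble game from $(u_A,u_B)$, and by \Cref{thm:game-logic} the pairs $(A^*,u_A)$ and $(B^*,u_B)$ agree on all $\TCLogic{k+2}$ formulas. In particular they agree on all sentences, since free variables are irrelevant for them. So $A^*$ and $B^*$ are $\TCLogic{k+2}$-equivalent.

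Combining both directions, $k$-CCWL separates two uniform ACCs exactly when $\TCLogic{k+2}$ does. For $k=1$ the same argument goes through with $\GTCLogic{3}$, replacing the theorems by \Cref{lem:logic-to-ccwl}, \Cref{remark:1ccwl-game-gtc3}, \Cref{remark:gtc3-game-logic}. The existential step there becomes an unguarded $\exists^{1}(x_1,x_2)(x_1=x_2\wedge\psi_C(x_1))$ at top level, which the guarded game permits when no pebble is placed.
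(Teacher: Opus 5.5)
Your argument is correct and is essentially the paper's proof. Your Direction~2 is its Case~1: one color-matched pair together with \Cref{thm:game-kccwl} and \Cref{thm:game-logic} already suffices, so the paper's color-preserving bijection is not needed. Your Direction~1 is the contrapositive of its Case~2, with \Cref{prop:two-cases-k-ccwl} doing the same work there. Your separating formula from \Cref{theorem:logic-to-twl}, together with the encoding of $\exists x_1\cdots\exists x_k$ by $\exists^{1}$ over pairs (dummy partner for odd $k$), makes explicit what the paper calls ``standard finite-model-theoretic arguments.''

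The only soft spot is your closing $k=1$ remark, which goes beyond the stated theorem. $\GTCLogic{3}$ has no sentences, and agreement of $(A^*,u_A)$ and $(B^*,u_B)$ on guarded formulas does not transfer to the unguarded top-level sentences you need, since the game started from that configuration never permits the unguarded move. There you should instead use the equality of the stable color multisets together with the fact that every $\GTCLogic{3}$ formula is constant on $1$-CCWL color classes.
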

\begin{proof}
Let $A = (S_A,\mathcal{X}_A,\mathrm{rk}_A,c_A)$ and $B = (S_B,\mathcal{X}_B,\mathrm{rk}_B,c_B)$ be two
uniform ACCs. Recall that the $k$-CCWL \emph{procedure} on $A$ and $B$ is defined by:
augmenting each complex with a identical-vs-disjoint (a fresh $0$-cell connected to all $0$-cells via
freshly colored $1$-cells), running $k$-CCWL, and then comparing the stable global signatures
$\chi_{k,A}^{(\infty)}$ and $\chi_{k,B}^{(\infty)}$. The procedure distinguishes $A$ and $B$ iff
$\chi_{k,A}^{(\infty)} \neq \chi_{k,B}^{(\infty)}$.

By \Cref{prop:disjoint-or-identical-labelings}, applied to uniform ACCs with the identical-vs-disjoint,
exactly one of the following holds:
\[
\chi_{k,A}^{(\infty)} = \chi_{k,B}^{(\infty)}
\quad\text{or}\quad
\chi_{k,A}^{(\infty)} \cap \chi_{k,B}^{(\infty)} = \emptyset.
\]
We show that these two cases are equivalent to agreement vs.\ disagreement on all sentences of
$\ETCLogic{k{+}2}$.

\medskip\noindent
\emph{Case 1: $\chi_{k,A}^{(\infty)} = \chi_{k,B}^{(\infty)}$.}
Then the multisets of stable $k$-CCWL colors on $\mathcal{X}_A^k$ and $\mathcal{X}_B^k$ coincide. In particular,
for every color $C$ the number of $k$-tuples of color $C$ is the same in $A$ and $B$. Hence we can
fix a color-preserving bijection
\[
f : \mathcal{X}_A^k \to \mathcal{X}_B^k
\quad\text{with}\quad
\chi_{k,A}^{(\infty)}(u_A) = \chi_{k,B}^{(\infty)}(f(u_A))
\ \text{for all } u_A \in X_A^k.
\]

By \Cref{corollary:all-together-tuple-wise}, equality of stable $k$-CCWL colors for tuples is
equivalent to indistinguishability in $\ETCLogic{k{+}2}$. Concretely, for every
$u_A \in \mathcal{X}_A^k$, $u_B \in \mathcal{X}_B^k$,
\[
\chi_{k,A}^{(\infty)}(u_A) = \chi_{k,B}^{(\infty)}(u_B)
\iff
\forall \varphi \in \ETCLogic{k{+}2}:\ A,u_A \models \varphi \ \Longleftrightarrow\ B,u_B \models \varphi.
\]
Applying this to each pair $(u_A,f(u_A))$ we obtain
\[
\forall u_A \in \mathcal{X}_A^k,\ \forall \varphi \in \ETCLogic{k{+}2}:
\quad
A,u_A \models \varphi \ \Longleftrightarrow\ B,f(u_A) \models \varphi.
\]

Now let $\psi$ be any \emph{sentence} of $\ETCLogic{k{+}2}$ (no free variables). Since the truth of
$\psi$ does not depend on the chosen valuation, for any $u_A$ we have
$A \models \psi \iff A,u_A \models \psi$, and similarly for $B$. Taking any $u_A \in X_A^k$ and
$u_B = f(u_A)$, we get
\[
A \models \psi
\iff A,u_A \models \psi
\iff B,u_B \models \psi
\iff B \models \psi.
\]
Thus $A$ and $B$ satisfy exactly the same $\ETCLogic{k{+}2}$ sentences, so
$\ETCLogic{k{+}2}$ cannot distinguish them whenever the $k$-CCWL procedure cannot.

\medskip\noindent
\emph{Case 2: $\chi_{k,A}^{(\infty)} \neq \chi_{k,B}^{(\infty)}$.}
By \Cref{prop:disjoint-or-identical-labelings}, this implies
$\chi_{k,A}^{(\infty)} \cap \chi_{k,B}^{(\infty)} = \emptyset$, i.e., no stable color occurs in both
$A$ and $B$. Hence there exists a stable color $C$ and a $k$-tuple $u_A \in \mathcal{X}_A^k$ such that
\[
\chi_{k,A}^{(\infty)}(u_A) = C
\quad\text{and}\quad
\chi_{k,B}^{(\infty)}(u_B) \neq C
\ \text{for all } u_B \in X_B^k.
\]

Let $\equiv_{\ETCLogic{k{+}2}}$ be the equivalence relation on $k$-tuples taken from $A$ and $B$
defined by
\[
u_A \equiv_{\ETCLogic{k{+}2}} u_B
\quad\Longleftrightarrow\quad
\forall \varphi \in \ETCLogic{k{+}2}:\ 
A,u_A \models \varphi \ \Longleftrightarrow\ B,u_B \models \varphi,
\]
for $u_A \in \mathcal{X}_A^k$ and $u_B \in \mathcal{X}_B^k$. By \Cref{corollary:all-together-tuple-wise}, this
equivalence coincides exactly with equality of stable $k$-CCWL colors: for all
$u_A \in \mathcal{X}_A^k$, $u_B \in \mathcal{X}_B^k$,
\[
\chi_{k,A}^{(\infty)}(u_A) = \chi_{k,B}^{(\infty)}(u_B)
\quad\Longleftrightarrow\quad
u_A \equiv_{\ETCLogic{k{+}2}} u_B.
\]

Fix a stable color $C$ that occurs in $A$ but not in $B$, and choose $u_A \in \mathcal{X}_A^k$ with
$\chi_{k,A}^{(\infty)}(u_A) = C$. Consider the set
\[
[u_A]
\;:=\;
\{\, v_A \in \mathcal{X}_A^k \mid \chi_{k,A}^{(\infty)}(v_A) = C \,\}
\;\cup\;
\{\, v_B \in \mathcal{X}_B^k \mid \chi_{k,B}^{(\infty)}(v_B) = C \,\}.
\]
By assumption, no tuple in $B$ has color $C$, so the second set is empty and
\[
[u_A] = \{\, v_A \in \mathcal{X}_A^k \mid \chi_{k,A}^{(\infty)}(v_A) = C \,\}.
\]
By the previous paragraph, all tuples in $[u_A]$ are mutually
$\equiv_{\ETCLogic{k{+}2}}$-equivalent, and no tuple in $B$ is equivalent to them. Thus $[u_A]$ is
a single $\equiv_{\ETCLogic{k{+}2}}$-equivalence class among the tuples of $A$ and $B$.

By standard finite-model-theoretic arguments, each $\equiv_{\ETCLogic{k{+}2}}$-equivalence class is definable
by some $\ETCLogic{k{+}2}$-formula with $k$ free variables. Hence, there exists a formula
$\varphi_C(x_1,\dots,x_k) \in \ETCLogic{k{+}2}$ such that, for every $D \in \{A,B\}$ and every
$v \in X_D^k$,
\[
D,v \models \varphi_C
\quad\Longleftrightarrow\quad
v \in [u_A]
\quad\Longleftrightarrow\quad
\chi_{k,D}^{(\infty)}(v) = C.
\]
In particular, there exists $u_A \in \mathcal{X}_A^k$ with $A,u_A \models \varphi_C$, while for all
$u_B \in \mathcal{X}_B^k$ we have $B,u_B \not\models \varphi_C$ because no tuple in $B$ has color $C$.

Consider now the $\ETCLogic{k{+}2}$-sentence
\[
\exists x_1 \dots \exists x_k\, \varphi_C(x_1,\dots,x_k).
\]
By construction,
\[
A \models \exists x_1 \dots \exists x_k\, \varphi_C(x_1,\dots,x_k)
\quad\text{but}\quad
B \not\models \exists x_1 \dots \exists x_k\, \varphi_C(x_1,\dots,x_k),
\]
so $\ETCLogic{k{+}2}$ distinguishes $A$ and $B$ whenever the $k$-CCWL procedure does.
\end{proof}
\paragraph{$k$-CCWL $\preceq$ $(k-1)$-CCWL.} Now, we know that $k$-CCWL has the same expressive power as $\TCLogic{k+2}$, and $(k-1)$-CCWL has the same expressive power as $\TCLogic{k+1}$. It is also known that $\TCLogic{k+1} \subset \TCLogic{k+2}$, therefore, $k$-CCWL is at least as expressive as $(k-1)$-CCWL. Now, in the next section, we want to show that $k$-CCWL is strictly more expressive than $(k-1)$-CCWL in terms of distinguishing non-isomorphic ACCs.
\section{CCWL on Graphs and its Relation to Weisfeiler–Leman}\label{app:ccwl-graph-relation-wl}
In the following, we want to show that $2$-CCWL has strictly more expressive power than $1$-CCWL. In the proof of \Cref{lem:1ccwl-vs-2ccwl}, instead of running $2$-CCWL on the pair of ACC, we use the result from \Cref{thm:equ-uaccs}, and its corresponding logic to make the proof simpler, which provides a great example of a useful tool in terms of distinguishing ACCs.
\begin{lemma}[$1$-CCWL $<$ $2$-CCWL]\label{lem:1ccwl-vs-2ccwl}
    There is a pair of ACCs such that $1$-CCWL cannot distinguish them, but $2$-CCWL can distinguish them.
\end{lemma}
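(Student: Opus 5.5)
We will use the pair of simplicial complexes $A$ and $B$ from \Cref{fig:two-non-isomorphic-sc}. These are regarded as $1$-dimensional ACCs whose $0$-cells are vertices and $1$-cells are edges, with all vertices sharing one color and all edges sharing another. Both complexes are uniform, since every edge lies in the coboundary of a vertex. We may therefore apply the broadcast-anchor augmentation and invoke \Cref{thm:equ-uaccs}. The proof then splits into two independent claims: (i) $1$-CCWL does not separate $A$ and $B$, and (ii) $2$-CCWL does separate them.

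For (i), we run $1$-CCWL by hand and exhibit a stable coloring that is identical on both sides. Vertices get two classes: the blue vertices $v_4,v_5$ and $v'_4,v'_5$, which have degree $3$, and the peach vertices, which have degree $2$. Edges get three classes. The first is the orange edges joining two peach vertices. The second is the mixed edges joining a peach and a blue vertex. The third is the single blue edge $e_5$ (resp.\ $e'_5$) joining the two blue vertices.

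We then check that this partition is equitable for all four neighborhood multisets. Boundary and coboundary counts follow from the vertex degrees and the endpoint types of each edge. For the lower adjacency $c^{\downarrow}$, we must count pairs $(y,z)$ with $y$ an edge sharing a vertex $z$ with $x$, recorded by the colors of $y$ and $z$. For instance, a mixed edge meets one orange edge at its peach endpoint. At its blue endpoint it meets one mixed edge and the blue edge. This holds in both $A$ and $B$. The upper adjacencies are empty because there are no $2$-cells.

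The fresh anchor and its incident $1$-cells see the same multiset on both sides, so the refined colorings coincide. Hence $\chi_{1,A}^{(\infty)}=\chi_{1,B}^{(\infty)}$. This is exactly the claim stated in the caption of \Cref{fig:two-non-isomorphic-sc}. We expect a short table of neighbor counts per class to suffice.

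For (ii), by \Cref{thm:equ-uaccs} it is enough to produce a $\TCLogic{4}$ sentence true in $A$ but false in $B$. We take the sentence $\phi=\exists^{16}(x_1,x_2)\exists^{1}(x_3,x_4)\,\mathrm{CYCLE}(x_1,x_2,x_3,x_4)$ from \Cref{example:logic-game-explain}.

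We verify its truth value by analyzing induced $4$-cycles in the line-graph structure given by $E^{\lowerLaplacian}$ on edges. In $A$, each of the two $4$-cycles $v_1v_2v_4v_3$ and $v_5v_6v_8v_7$ yields four edges forming an induced chordless $4$-cycle under lower adjacency. To confirm chordlessness, note that $e_1,e_4$ share no vertex and $e_2,e_3$ share no vertex. Each such cycle contributes $8$ ordered adjacent pairs $(x_1,x_2)$ that extend to a cycle, giving $16$ in total.

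We must also check that the augmentation does not disturb the count. The fresh anchor $1$-cells are freshly colored, so we conjoin color predicates restricting $x_1,\dots,x_4$ to original edges. A bounded count of anchor-induced lower cycles is also acceptable, provided the corresponding count in $B$ differs.

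In $B$, we argue that no induced $4$-cycle of edges exists. The only cycles in $B$ are the triangle-free $6$-cycle structure through $v'_1,v'_2,v'_4,v'_5,v'_3$ and its mirror, each of length $5$. Four pairwise consecutively adjacent edges that are non-adjacent across would require a $4$-cycle in the underlying graph or a path configuration. The path configuration fails because consecutive edges along a path of length $4$ have non-adjacent endpoints. Therefore $B\not\models\phi$.

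The main obstacle is making this last step airtight. We need to rule out all $4$-tuples of edges, including those around the degree-$3$ blue vertices, where three edges share a vertex and create chords. We would handle this by case analysis on whether some vertex is incident to three of the four edges.
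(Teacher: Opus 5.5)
Your plan has the same skeleton as the paper's proof. You verify by hand that $1$-CCWL stabilises identically, exhibit a separating $\TCLogic{4}$ sentence, and transfer it to $2$-CCWL through \Cref{thm:equ-uaccs}. The difference is the witness pair. The paper uses $C_6$ versus $C_3\sqcup C_3$. There every vertex and every edge has the same local picture, so $1$-CCWL is stable after one round and step (i) is immediate. Your pair from \Cref{fig:two-non-isomorphic-sc} (two squares joined by the bridge $e_5$ versus two pentagons sharing $e'_5$) is also a valid witness, and it lets you reuse the sentence of \Cref{example:logic-game-explain}. But it turns step (i) into a real computation, and that computation is wrong as you set it up.

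\textbf{Step (i).} Write $O$, $M$ for your orange and mixed edge classes. The two-class vertex partition is not equitable:
\begin{itemize}
\item $v_1,v_8$ (and $v'_1,v'_8$) have coboundary $\{O,O\}$ and two peach upper neighbours;
\item $v_2,v_3,v_6,v_7$ (and their primed copies) have coboundary $\{O,M\}$ and one blue upper neighbour.
\end{itemize}
So $1$-CCWL splits the peach class in round $2$; the coloring drawn in the figure is not the stable one. Also, upper adjacency is empty only for $1$-cells; for $0$-cells it is graph adjacency mediated by the shared edge.

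The repair is to use vertex classes $V_1=\{v_1,v_8\}$, $V_2=\{v_2,v_3,v_6,v_7\}$, $V_3=\{v_4,v_5\}$ (likewise primed), together with $O$, $M$ and $K=\{e_5\}$. In both complexes you then have:
\begin{itemize}
\item vertex coboundaries $\{O,O\}$, $\{O,M\}$, $\{M,M,K\}$, which also determine the upper pairs;
\item edge boundaries $\{V_1,V_2\}$, $\{V_2,V_3\}$, $\{V_3,V_3\}$;
\item lower multisets $\{(O,V_1),(M,V_2)\}$, $\{(O,V_2),(M,V_3),(K,V_3)\}$, and four copies of $(M,V_3)$;
\item class sizes $2,4,2$ and $4,4,1$ on each side.
\end{itemize}
Induction on $t$ then gives equal colors. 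The anchor only adds the class $\{a^*\}$ and three anchor-edge classes indexed by $V_1,V_2,V_3$, which is again equitable and identical on both sides.

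\textbf{Step (ii).} No case analysis is needed. Suppose $(x_1,\dots,x_4)$ satisfies $\mathrm{CYCLE}$, and let $p_i$ be a common vertex of $x_i$ and $x_{i+1}$ (indices mod $4$). Then $p_1,p_4\in x_1$, $p_2,p_3\in x_3$, $p_1,p_2\in x_2$ and $p_3,p_4\in x_4$. So the conjuncts $\neg E^{\lowerLaplacian}(x_1,x_3)$ and $\neg E^{\lowerLaplacian}(x_2,x_4)$ force the $p_i$ to be pairwise distinct, and $p_1p_2p_3p_4$ is a $4$-cycle of the underlying graph. $B$ has girth $5$: its cycles are the two pentagons through $e'_5$ and the outer $8$-cycle, not a ``$6$-cycle structure''. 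Hence $B$ has no such tuple, while $A$ gives exactly your $16$ pairs.

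Your restriction to non-fresh edges is required, not optional. After augmentation every path $p\,q\,r$ closes into a $4$-cycle through $a^*$, so the unrestricted sentence holds in both complexes.
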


\begin{proof}[Proof of \Cref{lem:1ccwl-vs-2ccwl}]
    We use the standard example from the graph setting: the $6$-cycle $C_6$ and the disjoint union of two triangles $C_3 \sqcup C_3$, which $1$-WL fails to distinguish, see e.g.\ \cite{grohe2021logic}.

    We now view these graphs as $1$-dimensional ACCs:
    \begin{itemize}[left=9pt]
        \item On the left, the ACC $A$ obtained from the $6$-cycle $C_6$, with $6$ cells of rank $0$ (the vertices) and $6$ cells of rank $1$ (the edges);
        \item On the right, the ACC $B$ obtained from $C_3 \sqcup C_3$, again with $6$ cells of rank $0$ and $6$ cells of rank $1$.
    \end{itemize}
    In both cases, there are no cells of rank $\ge 2$.

    \begin{figure}[h]
        \centering
        \includegraphics[width=0.9\linewidth]{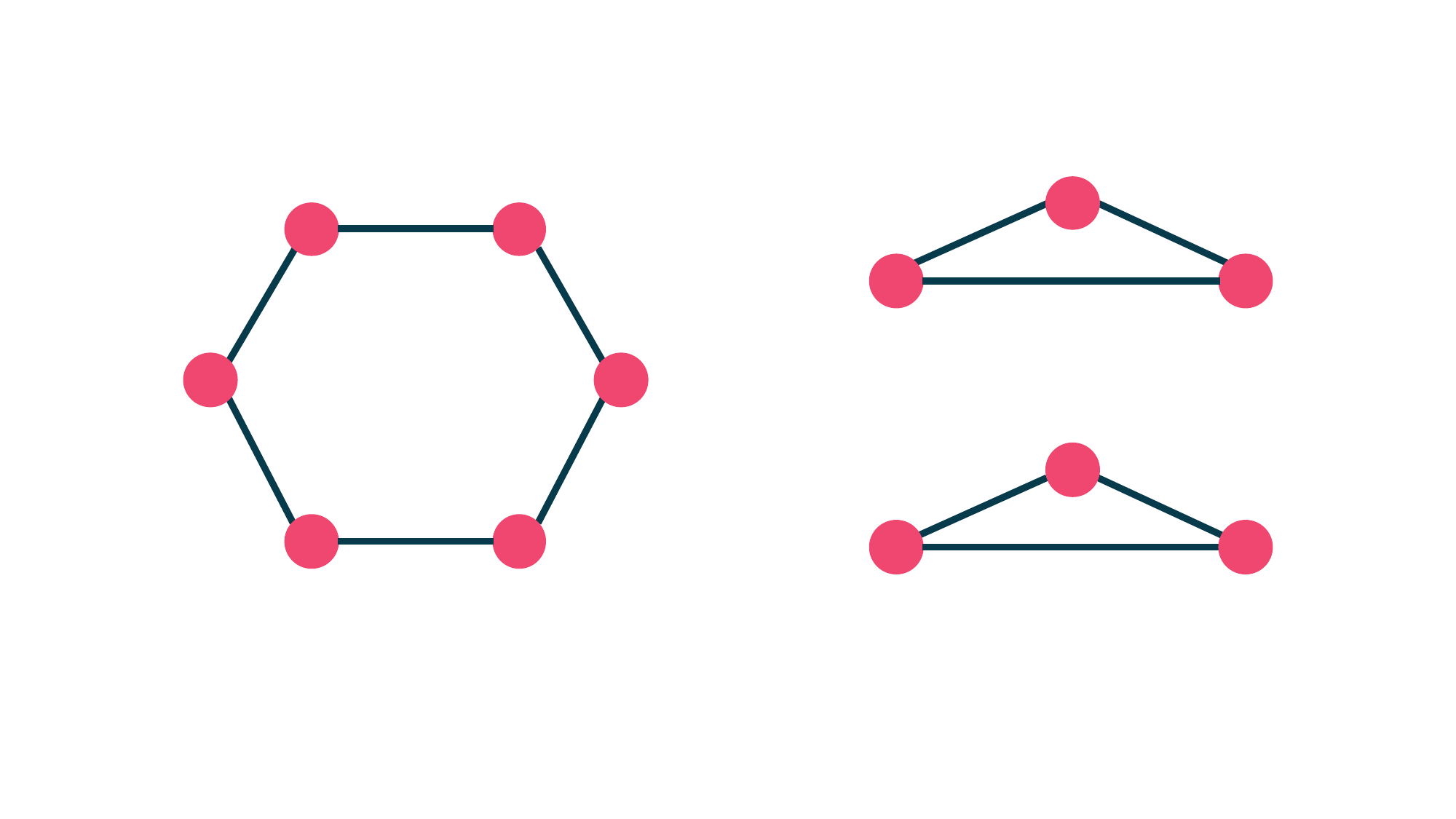}
        \caption{Left: the ACC associated with the $6$-cycle $C_6$. Right: the ACC associated with the disjoint union $C_3 \sqcup C_3$. Both have six $0$-cells, but only the right-hand side contains a $3$-cycle (triangle).}
        \label{fig:c6c3c3}
    \end{figure}
    Intuitively, on these ACCs, $1$-CCWL behaves exactly like $1$-WL on the underlying graphs. In both ACCs, each $0$-cell $v$, aggregates colors exactly from two $0$-rank cells and two $1$-rank cells. Since initially all $0$-cells share the same color and all $1$-cells share the same color, every $0$-cell sees the same multiset of colors. Dually, each $1$-cell $e$ has two $0$-rank cells in its boundary and two $1$-rank cells in its lower neighborhood, so all $1$-cells also see the same multiset. Hence one refinement step already yields a stable coloring that is identical on the two ACCs, and $1$-CCWL cannot distinguish them.

    As a result, after one refinement step the coloring on both ACCs is already stable and identical, and $1$-CCWL cannot distinguish them. Let us also note that adding the broadcast-anchor cell used in our TNN constructions does not affect this argument. If we attach the same anchor cell to $A$ and $B$ in the same way, then all cells in $A$ and $B$ gain the same additional neighbor of the same color. Hence the symmetry of the refinement process is preserved, and $1$-CCWL still cannot distinguish the two ACCs.

    However, consider the following formula $\phi \in \TCLogic{4}$:
\begin{align*}
    \phi \;=\;
    \exists^{1} (x_1, x_2)&\Bigl(
        R_1(x_1) \wedge R_1(x_2) \wedge E^{\lowerLaplacian}(x_1,x_2) \;\wedge\\
        &\exists^{1} (x_3, x_4)\bigl(
            R_1(x_3) \wedge R_1(x_4) \wedge E^{\lowerLaplacian}(x_3,x_4) \wedge
            E^{\lowerLaplacian}(x_1,x_3) \wedge x_2 = x_4
        \bigr)
    \Bigr).
\end{align*}
    Informally, this formula expresses the existence of three rank-$1$ cells (edges) that are pairwise adjacent via their lower neighborhoods, i.e., the presence of a triangle. In the ACC $B$ (two triangles), this property holds. In the ACC $A$ ($6$-cycle), there is no triangle at all, so no such quadruple $(x_1,x_2,x_3,x_4)$ exists and $A \not\models \phi$. Moreover, we can make an adjustment to $\phi$ and create a new formula $\phi'$ that expresses the existence of at least two such triangles:
    \begin{align*}
    \phi' \;=\;
    \exists^{12} (x_1, x_2)&\Bigl(
        R_1(x_1) \wedge R_1(x_2) \wedge E^{\lowerLaplacian}(x_1,x_2) \;\wedge\\
        &\exists^{1} (x_3, x_4)\bigl(
            R_1(x_3) \wedge R_1(x_4) \wedge E^{\lowerLaplacian}(x_3,x_4) \wedge
            E^{\lowerLaplacian}(x_1,x_3) \wedge x_2 = x_4
        \bigr)
    \Bigr).
\end{align*}
    One might worry that adding the identical-vs-disjoint cell to $A$ could create triangles. However, as discussed before, the anchor cell and all edges involving it are added with a fresh color, so they are easily recognizable. In particular, we can use the predicate $P_s(\cdot)$ to restrict attention to non-fresh edges and thus ignore any triangles that use the anchor. Therefore, $\phi$ still separates $A$ and $B$ even in the presence of the identical-vs-disjoint cell.

    Thus, $A$ and $B$ disagree on a $\TCLogic{4}$-sentence. Therefore, $2$-CCWL is able to distinguish $A$ and $B$. Hence, $1$-CCWL cannot distinguish them, this proves that $1$-CCWL is strictly less expressive than $2$-CCWL.
\end{proof}
Given the above lemma, it seems that $k$-WL and $k$-CCWL have similar behavior in terms of distinguishability of pairs of graphs, if we only consider nodes as $0$-rank cells and edges as $1$-rank cells.
To formalize this assumption,  
Let $G = (V_G,E_G,c_G)$ be a vertex-colored simple graph. We associate to $G$ a
$1$-dimensional uniform ACC
\[
A = G^{\mathrm{CC}} := (S_A,\mathcal{X}_A,\mathrm{rk}_A,c_A)
\]
as follows:
\begin{itemize}
  \item $S_A := V_G$;
  \item $\mathcal{X}_A{(0)} := \{\{v\} : v \in V_G\}$,
  \item $\mathcal{X}_A{(1)} := \{\{u,v\} : \{u,v\} \in E_G\}$,
  \item $\mathcal{X}_A := X_A{(0)} \cup X_A{(1)}$, and $\mathrm{rk}_A(\{v\}) = 0$, $\mathrm{rk}_A(\{u,v\}) = 1$,
  \item $c_A(\{v\}) := c_G(v)$, and all $1$-cells receive the same fresh color
        $c_\mathrm{edge}$.
\end{itemize}
\begin{theorem}\label{thm:ccwl-vs-wl-1d}
    Let $G,H$ be vertex-colored graphs, and let $A:= G^{\mathrm{CC}}$ and $B:= H^{\mathrm{CC}}$ be their associated $1$-dimensional ACCs.
    Then, for a fix $k \ge 1$, we have
    \begin{align}
        \mulset{\mathrm{wl}^{(t)}_{k,G}(v_G) :  v_G \in V_G^k} = \mulset{\mathrm{wl}^{(t)}_{k,H}(v_H) :  v_H \in V_H^k}
    \end{align}
    if and only if 
    \begin{align}
        \mulset{\chi_{k,A}^{(t)}(x_A) : x_A \in \mathcal{X}_A^k} = \mulset{\chi_{k,B}^{(t)}(x_B) : x_B \in \mathcal{X}_B^k}.
    \end{align}
\end{theorem}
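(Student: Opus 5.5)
The proof splits into two directions that are handled differently. For the direction from equal $k$-CCWL multisets to equal $k$-WL multisets, $k$-CCWL is simulated on the all-vertex part of $\mathcal{X}^k$. For the converse direction, each $k$-tuple of cells has to be encoded by vertex data that $k$-WL can see. I expect this converse direction to be the main obstacle, and I am not sure that the claimed lockstep agreement at every fixed $t$ survives it.

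\emph{From $k$-CCWL to $k$-WL.} First I would note two structural facts about $A=G^{\mathrm{CC}}$.
\begin{enumerate}
\item For two $0$-cells $\{u\},\{v\}$ we have $\{v\}\in\upperLaplacian(\{u\})$ iff $\{u,v\}\in E_G$. The relations $\boundary,\coboundary,\lowerLaplacian$ are empty between $0$-cells.
\item $\operatorname{atp}_k$ records the rank sequence, so every $k$-CCWL color of an all-vertex tuple $\mathbf{x}\in\mathcal{X}_A(0)^k$ already says that the tuple is all-vertex.
\end{enumerate}
By induction on $t$ I would then prove the following: for all-vertex tuples $\mathbf{x}_A,\mathbf{x}_B$ with underlying vertex tuples $\mathbf{v}_G,\mathbf{v}_H$, if $\chi^{(t)}_{k,A}(\mathbf{x}_A)=\chi^{(t)}_{k,B}(\mathbf{x}_B)$, then $\mathsf{wl}^{(t)}_{k,G}(\mathbf{v}_G)=\mathsf{wl}^{(t)}_{k,H}(\mathbf{v}_H)$.
\begin{itemize}
\item The base case uses fact 1: $\operatorname{atp}_k$ and the graph atomic type agree.
\item In the inductive step, equality of the contexts $\mathcal{D}^{(t)}_k$ gives equality of the diagonal sub-multisets with $\alpha=\beta$, exactly as in \Cref{lem:disequal}. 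Restricting further to $\alpha$ of rank $0$, each element reduces to $\bigl(\operatorname{atp}_{k+1}(\mathbf{v}u),(\chi^{(t-1)}_k(\mathbf{x}[u/i]))_i\bigr)$.
\item By the induction hypothesis this determines the $k$-WL refinement multiset.
\end{itemize}
Since equal global $\chi$-multisets have equal sub-multisets on all-vertex tuples (by fact 2), equality of the WL multisets follows.

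\emph{From $k$-WL to $k$-CCWL.} The plan is to show that $\chi^{(t)}_{k,A}(\mathbf{x})$ is a function of $k$-WL data of the underlying vertices. Each edge cell $\{u,v\}$ would be replaced by its endpoint pair. A cell tuple $\mathbf{x}$ would be assigned the invariant $I^{(t)}(\mathbf{x})$: the multiset of $\mathsf{wl}^{(t)}_k$-colors of vertex tuples obtained by choosing one endpoint per edge coordinate, together with the incidence pattern of these choices. All four cell adjacencies are quantifier-free in endpoint terms, using equality, $E_G$ and shared endpoints. Hence the base case should be routine. The inductive step would mirror the double substitution $(\alpha,\beta)$ by at most two single-vertex WL substitutions, and then aggregate by counting.

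\emph{Main obstacle.} An edge needs two vertex variables, so this encoding a priori calls for $2k$ vertex coordinates rather than $k$. In logical terms, $\TCLogic{k+2}$ over cells has to be translated into $\CLogic{k+1}$ over vertices, and the pairwise quantifier over edges could exceed $k+1$ variables. What I would try first is to prove the equivalence at the stable level via the logics. I would use \Cref{corollary:all-together-tuple-wise} on the cell side and the Cai--F\"urer--Immerman theorem on the vertex side, and translate each cell variable into vertex variables, reusing bound variables and exploiting that an edge variable is needed only while its endpoints are pebbled. If the variable count cannot be kept at $k+1$, the statement would have to be weakened: either to the stable colorings, or to a shifted correspondence between iteration $t$ of one algorithm and iteration $ct$ of the other for a constant $c$. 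I would flag this point explicitly in the write-up rather than claim exact per-$t$ agreement.
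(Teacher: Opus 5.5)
Your first direction is correct for $k\ge 2$. Restricting $\mathcal{D}^{(t)}_k$ to its diagonal, rank-$0$ part gives exactly the $k$-WL refinement multiset of the underlying vertex tuple. So equal $k$-CCWL multisets at round $t$ force equal $k$-WL multisets at round $t$. For $k=1$, $\chi_1$ is the message-passing CCWL rather than the box, and the paper's $1$-WL also aggregates over non-neighbours, so the tuple-wise implication fails there; run the induction through $\upperLaplacian$ and compare global multisets instead.

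The converse cannot be completed because it is \emph{false}, and the obstruction is not the one you name. It is not that an edge costs two vertex variables. It already appears on all-vertex tuples: $\operatorname{atp}_{k+2}(\mathbf{x}\alpha\beta)$ records the relations between $\alpha$ and $\beta$, so a single round of $k$-CCWL counts completions of a $k$-tuple to $(k{+}2)$-vertex patterns.

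For $k=2$, take the $4\times 4$ rook's graph and the Shrikhande graph, both strongly regular with parameters $(16,6,2,2)$. The paper's $2$-WL is equivalent to $\CLogic{3}$ and never separates them. Now fix adjacent $u,v$ and count the pairs $(\alpha,\beta)$ for which $\operatorname{atp}_4(\{u\},\{v\},\alpha,\beta)$ says ``four distinct, pairwise $\upperLaplacian$-adjacent $0$-cells''. This is the number of ordered $K_4$-completions of $uv$. It is $2$ in the rook's graph and $0$ in the Shrikhande graph, whose vertex neighbourhoods are hexagons. So the $\chi^{(1)}_2$-multisets already differ, and they stay different at every later round.

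In logical terms, the sentence $\exists^{1}(x_1,x_2)\,\exists^{1}(x_3,x_4)\bigwedge_{i<j}\bigl(E^{\upperLaplacian}(x_i,x_j)\wedge\neg(x_i=x_j)\bigr)$ keeps four vertex variables live at once. No variable-reusing translation into $\CLogic{3}$ exists. Hence neither of your fallbacks (stable colorings, or a $t$ versus $ct$ shift) rescues the statement for $k\ge 2$.

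Your doubt about a fixed $t$ is justified even at $k=1$. The uncoloured graphs $K_{1,3}\sqcup K_3$ and (a triangle with a pendant edge)${}\sqcup P_3$ have the same degree sequence, so their $1$-WL multisets agree at $t=1$. But the round-$1$ lower-adjacency multiset of an edge cell $\{u,v\}$ has size $\deg u+\deg v$ up to an additive constant. The edge degree sums are $\mulset{4,4,4,4,4,4}$ versus $\mulset{5,5,4,4,3,3}$. At $k=1$ the stable-level version does hold; only the lockstep in $t$ fails.

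The paper's proof does not supply a missing idea you could borrow. Its inductive step through the incidence graph simply asserts $M^{(t)}_{k,A^{\mathrm{inc}}}(v)=M^{(t)}_{k,B^{\mathrm{inc}}}(u)\iff D^{(t)}_{k,A}(x_A)=D^{(t)}_{k,B}(x_B)$. That is exactly the single- versus double-substitution step the example refutes. Its Step~3, per-round agreement of $k$-WL on $G$ and on $A^{\mathrm{inc}}$, is likewise only asserted.

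What is actually provable is the one-sided statement you already have. For every $t$, equal $k$-CCWL multisets on $G^{\mathrm{CC}},H^{\mathrm{CC}}$ imply equal $k$-WL multisets on $G,H$, and for $k=2$ the $k$-CCWL side is strictly finer.
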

\begin{proof}[Proof of \Cref{thm:ccwl-vs-wl-1d}]
We follow the following steps to prove the theorem.

\begin{itemize}[left=3pt]
    \item Step 1: Let $A = (S_A, \mathcal{X}_A, \rank_A, \col_A)$ be the associated $1$-dimensional ACC of the graph $G$. We define the incidence graph $A^{\mathrm{inc}}$ of $A$ as follows:
    \begin{itemize}
        \item $V_{A^{\mathrm{inc}}} := V_G \;\dot\cup\; E_G$. We consider a new vertex type $v^e$ for each edge $e=\{u,v\}\in E_G$.
        \item $E_{A^{\mathrm{inc}}} := E_G 
        \;\cup\;
        \bigl\{\{u,e\} : e=\{u,v\}\in E_G,\ u\in e\bigr\}$.
        \item  Vertex colors: each $v\in V_G$ keeps its original color but is tagged as ``vertex’’;
        each $e\in E_G$ receives a fresh color tagged
        as ``edge’’, so vertex- and edge-nodes are distinguishable.
    \end{itemize}
    We define $B^\mathrm{inc}$ in the same way.
    \item Step 2: We claim that for every $t \geq 0$, we have 
    \begin{align*}
  \mulset{\mathrm{wl}^{(t)}_{k,A^{\mathrm{inc}}}(v) : v \in V_{A^{\mathrm{inc}}}^k}
  &=
  \mulset{\mathrm{wl}^{(t)}_{k,B^{\mathrm{inc}}}(u) : u \in V_{B^{\mathrm{inc}}}^k}
  \\
  &\Updownarrow
  \\
  \mulset{\chi_{k,A}^{(t)}(x) : x \in \mathcal{X}_A^k}
  &=
  \mulset{\chi_{k,B}^{(t)}(y) : y \in \mathcal{X}_B^k}.
\end{align*}
We prove this by induction on $t \geq 0$. 
\begin{enumerate}
    \item Initial step: We prove that the theorem holds for $t = 0$. The $k$-WL atomic type $\operatorname{atp}_{k,A^{\mathrm{inc}}}(v)$ records:
    \begin{itemize}
      \item the equality pattern among the $v_i$'s,
      \item which pairs $(v_i,v_j)$ are adjacent in $A^{\mathrm{inc}}$, and
      \item the initial vertex colors $c_{A^{\mathrm{inc}}}(v_i)$.
    \end{itemize}
    The $k$-CCWL atomic type $\operatorname{atp}_{k,A}(x)$ records:
    \begin{itemize}
      \item the equality pattern among the $x_i$'s,
      \item the truth values of $\boundary,\coboundary,\mathcal{N}_\downarrow,\mathcal{N}_\uparrow$ between $x_i,x_j$, and
      \item the initial cell colors $c_A(x_i)$.
    \end{itemize}
    Moreover, we know that $x_i \in \upperLaplacian(x_j)$ if and only if $\{v_i,v_j\} \in E_G$. Therefore, we can define a bijective function $F:\mathcal{X}_A^k \cup \mathcal{X}_B^k \rightarrow V_{A^{\mathrm{inc}}}^k \cup V_{B^{\mathrm{inc}}}^k$ such that for all $x_A \in \mathcal{X}_A^k$, we have $F(x_A) = v = (v_1,\dots,v_k)$ where $v_i = v^{e}$ if $x_i = e = \{p,q\} \in E_G$ and $v_i = p$ if $x_i = \{p\} \in V_G$. Same also holds for all $x_B \in \mathcal{X}_B^k$. Therefore, for all $x_A \in \mathcal{X}_A^k$ and $x_B \in \mathcal{X}_B^k$, let $v = F(x_A)$ and $u = F(x_B)$, we have
    \[
    \begin{aligned}
    \mathrm{wl}^{(0)}_{k,A^{\mathrm{inc}}}(v) = \mathrm{wl}^{(0)}_{k,B^{\mathrm{inc}}}(u)
    &\iff \operatorname{atp}_{k,A^{\mathrm{inc}}}(v) = \operatorname{atp}_{k,B^{\mathrm{inc}}}(u)\\
    &\iff \operatorname{atp}_{k,A}(x_A) = \operatorname{atp}_{k,B}(x_B)\\
    &\iff \chi^{(0)}_{k,A}(x_A) = \chi^{(0)}_{k,B}(x_B).
    \end{aligned}
    \]
    Therefore, we have 
    \[
    \mulset{\chi_{k,A}^{(0)}(x) : x \in \mathcal{X}_A^k}
  =
  \mulset{\chi_{k,B}^{(0)}(y) : y \in \mathcal{X}_B^k}
  \]
  if and only if 
    \[
    \mulset{\mathrm{wl}^{(0)}_{k,A^{\mathrm{inc}}}(v) : v \in V_{A^{\mathrm{inc}}}^k}
    =
    \mulset{\mathrm{wl}^{(0)}_{k,B^{\mathrm{inc}}}(u) : u \in V_{B^{\mathrm{inc}}}^k}.
    \]
    \item Inductive step: Assume the statement holds at round $t{-}1$. We compare the refinement multisets used by $k$-WL and $k$-CCWL on incidence graphs and $1$-ACCs, \footnote{The case $k = 1$ can be handled similarly; therefore, we only consider $k\geq 2$.}. 
    
For $k$-WL, the refinement of $v$ at round $t$ is a hash of
$\mathrm{wl}^{(t-1)}_{k,A^{\mathrm{inc}}}(v)$ together with the multiset
\[
M^{(t)}_{k,A^{\mathrm{inc}}}(v)
\;:=\;
\bigl\{\!\!\bigl\{\, 
(\operatorname{atp}_{k+1,A^{\mathrm{inc}}}(\bar v[v_i \leftarrow u]),
   \langle \mathrm{wl}^{(t-1)}_{k,A^{\mathrm{inc}}}(\bar v[v_i \leftarrow u]) \rangle_{i=1}^k )
:\ u \in V_{A^{\mathrm{inc}}}
\,\bigr\}\!\!\bigr\},
\]
    For $k$-CCWL on $A$, the refinement of $x$ at round $t$ is a hash of
$\chi^{(t-1)}_{k,A}(x)$ and the multiset
\[
D^{(t)}_{k,A}(x)
\;:=\;
\bigl\{\!\!\bigl\{\,
(\operatorname{atp}_{k+2,A}(\bar x_{\alpha\beta}),
   \langle (\chi^{(t-1)}_{k,A}(\bar x[\alpha/i]),
            \chi^{(t-1)}_{k,A}(\bar x[\beta/i])) \rangle_{i=1}^k )
:\ \alpha,\beta \in \mathcal{X}_A
\,\bigr\}\!\!\bigl\},
\]
Same as the base of induction, for all $x_A \in \mathcal{X}_A^k$ and $x_B \in \mathcal{X}_B^k$, let $v = F(x_A)$ and $u = F(x_B)$, we have
    \[
    \begin{aligned}
    M^{(t)}_{k,A^{\mathrm{inc}}}(v) = M^{(t)}_{k,B^{\mathrm{inc}}}(u)
    &\iff D^{(t)}_{k,A}(x_A) = D^{(t)}_{k,B}(x_B)\\
    &\iff \chi^{(t)}_{k,A}(x_A) = \chi^{(t)}_{k,B}(x_B).
    \end{aligned}
    \]
    Therefore, by the induction hypothesis, we have 
    \[
    \mulset{\chi_{k,A}^{(t)}(x) : x \in \mathcal{X}_A^k}
  =
  \mulset{\chi_{k,B}^{(t)}(y) : y \in \mathcal{X}_B^k}
  \]
  if and only if 
    \[
    \mulset{\mathrm{wl}^{(t)}_{k,A^{\mathrm{inc}}}(v) : v \in V_{A^{\mathrm{inc}}}^k}
    =
    \mulset{\mathrm{wl}^{(t)}_{k,B^{\mathrm{inc}}}(u) : u \in V_{B^{\mathrm{inc}}}^k}.
    \]
\end{enumerate}
\item Step 3: We now show that $k$-WL on $G$ and on $A^\mathrm{inc}$ have the same distinguishing power. Intuitively, $A^\mathrm{inc}$ is obtained from $G$ by \emph{adding} a vertex per edge, connecting to its endpoint and coloring the new edge-vertices by a fresh color. Conversely, $G$ can be recovered uniquely from $A^\mathrm{inc}$ by restricting to the ``vertex-type'' nodes.
\item Step 4: 
Combining Step~2 and Step~3, we obtain for all $t\ge 0$:
\begin{align*}
\mulset{\mathrm{wl}^{(t)}_{k,G}(v_G) : v_G \in V_G^k}
 = \mulset{\mathrm{wl}^{(t)}_{k,H}(v_H) : v_H \in V_H^k}
\\[2pt]
\Longleftrightarrow\quad
\mulset{\mathrm{wl}^{(t)}_{k,A^{\mathrm{inc}}}(z) : z \in V_{A^{\mathrm{inc}}}^k}
 = \mulset{\mathrm{wl}^{(t)}_{k,B^{\mathrm{inc}}}(z') : z' \in V_{B^{\mathrm{inc}}}^k}
\\[2pt]
\Longleftrightarrow\quad
\mulset{\chi^{(t)}_{k,A}(x_A) : x_A \in \mathcal{X}_{A}^k}
 = \mulset{\chi^{(t)}_{k,B}(x_B) : x_B \in \mathcal{X}_{B}^k},
\end{align*}
which is exactly the statement of \Cref{thm:ccwl-vs-wl-1d}.
\end{itemize}
\end{proof}
In the following, we want to generalize the result in \Cref{lem:1ccwl-vs-2ccwl} and prove
\begin{theorem}[$(k{-}1)$-CCWL $<$ $k$-CCWL]\label{thm:k-strictly}
    For any $k \geq 1$, there is a pair of ACCs such that $(k-1)$-CCWL cannot distinguish them, but $k$-CCWL can distinguish them.
\end{theorem}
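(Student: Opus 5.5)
The plan is to reduce the strictness of the CCWL hierarchy to the known strictness of the classical Weisfeiler--Leman hierarchy on graphs, using \Cref{thm:ccwl-vs-wl-1d} as the bridge. The case $k=1$ (with $0$-CCWL read as the trivial test comparing only the multiset of initial cell colors) is immediate. The case $k=2$ is \Cref{lem:1ccwl-vs-2ccwl}. So assume $k\ge 2$ in general and aim for a uniform argument.

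\textbf{Step 1: graph separation.} Use the classical strictness result of \citet{cai1992optimal}. For every $k\ge 2$ there exist vertex-colored graphs $G_k,H_k$ (the CFI construction over a base graph of large enough treewidth) such that $(k{-}1)$-WL does not distinguish them but $k$-WL does. That is, the stable multisets $\mulset{\mathrm{wl}^{(\infty)}_{k-1,G_k}(v): v\in V_{G_k}^{k-1}}$ and the corresponding one for $H_k$ coincide, while the $k$-WL multisets differ. Since the graphs are finite and both refinements stabilize, equality and inequality at the stable round transfer to every sufficiently large round $t$.

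\textbf{Step 2: lift to ACCs.} Set $A:=G_k^{\mathrm{CC}}$ and $B:=H_k^{\mathrm{CC}}$. Apply \Cref{thm:ccwl-vs-wl-1d} at each round $t$, once with parameter $k-1$ and once with parameter $k$.
\begin{itemize}
\item With parameter $k-1$, the $(k{-}1)$-CCWL global multisets of $A$ and $B$ agree at every round, hence also at stabilization, so $(k{-}1)$-CCWL fails.
\item With parameter $k$, the $k$-CCWL global multisets differ at some round $t$. Because the partitions only refine, once they differ they remain different, so $k$-CCWL succeeds.
\end{itemize}
These ACCs are $1$-dimensional and uniform, since every edge cell has boundary $0$-cells. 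Non-isomorphism of $A$ and $B$ follows from non-isomorphism of $G_k$ and $H_k$, because a graph is recovered from its associated ACC.

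\textbf{Step 3: the broadcast anchor.} If the statement is read through the augmented procedure of \Cref{thm:equ-uaccs}, the anchor must be handled. Observe that $A^*$ equals $(G_k^+)^{\mathrm{CC}}$, where $G_k^+$ is $G_k$ with one fresh-colored apex vertex adjacent to all vertices. The same holds for $B^*$ and $H_k^+$, up to the fresh edge colors, which only refine the encoding. Adding a uniquely colored apex preserves $k$-WL equivalence in both directions. For the non-distinguishing direction, Duplicator maps apex to apex in the bijective pebble game. For the distinguishing direction, the apex is individualized by its color and distinguishing formulas relativize to non-apex vertices, so they do not need extra variables. Then repeat Step 2 on $G_k^+$ and $H_k^+$.

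\textbf{Main obstacle.} The main risk is that the argument leans entirely on \Cref{thm:ccwl-vs-wl-1d}, whose Step~3 (equivalence of $k$-WL on $G$ and on its incidence graph) was only sketched. Edge-vertices in the incidence graph could in principle change $k$-WL power, since subdividing edges can cost pebbles. If that step needs repair, the fallback is to prove the two required directions directly via the logic characterization in \Cref{corollary:all-together-tuple-wise}.
\begin{itemize}
\item For distinguishing: translate a $\CLogic{k+1}$ sentence separating $G_k,H_k$ into a $\TCLogic{k+2}$ sentence. Vertices become $R_0$-cells, adjacency $E(x,y)$ becomes $E^{\upperLaplacian}(x,y)$, and each single quantifier $\exists^N x$ becomes $\exists^N(x,x')(x=x'\wedge\cdots)$, using the spare variable.
\item For non-distinguishing: transfer Duplicator's $(k)$-pebble bijective strategy on the graphs to a $(k{+}1)$-pebble topological strategy on the ACCs, mapping edge cells via the pebbled endpoint pairs.
\end{itemize}
The second item is where I expect the real difficulty. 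One edge cell encodes two vertices, so pebble counts may not match, and some slack in the CFI parameter may be needed, for example choosing a base graph whose treewidth is large enough relative to $2k$.
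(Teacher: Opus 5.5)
\textbf{Gap: the lower-level transfer fails.} Your Steps 1--2 are exactly the paper's proof: a CFI pair pushed through \Cref{thm:ccwl-vs-wl-1d}. The risk you flag is real. The implication you need at the lower level, ``equal $(k{-}1)$-WL signatures $\Rightarrow$ equal $(k{-}1)$-CCWL signatures,'' is false.

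Take $G$ to be the $4\times 4$ rook's graph and $H$ the Shrikhande graph. Both are $\mathrm{srg}(16,6,2,2)$, so $2$-WL (i.e.\ $\CLogic{3}$) never separates them. In $G^{\mathrm{CC}}$ and $H^{\mathrm{CC}}$, let $(e,f)$ be distinct rank-$1$ cells with $f\notin\lowerLaplacian(e)$. Look at the diagonal entries $\alpha=\beta$ of $\mathcal{D}^{(1)}_2(e,f)$ whose atomic type records $\mathrm{rk}(\alpha)=1$, $\alpha\in\lowerLaplacian(e)\cap\lowerLaplacian(f)$ and $\alpha\notin\{e,f\}$. These are exactly the edges joining an endpoint of $e$ to an endpoint of $f$, so the round-$1$ color of $(e,f)$ records whether $e\cup f$ spans a $K_4$. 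That happens inside a row of the rook's graph and never in the Shrikhande graph, which has clique number $3$. Hence $2$-CCWL separates the two complexes after one round. This survives the broadcast anchor, since an anchor edge meets at most one of $e,f$.

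This is your ``an edge cell holds two vertices'' effect, and it also refutes Step~3 of that proof ($k$-WL on $G$ versus on $A^{\mathrm{inc}}$). Consequently, for $k\ge 3$, the failure of $(k{-}1)$-WL on the CFI pair says nothing about $(k{-}1)$-CCWL on $G_k^{\mathrm{CC}},H_k^{\mathrm{CC}}$. (The case $k=2$ is handled separately by \Cref{lem:1ccwl-vs-2ccwl}.) That is the missing step, and the paper's own argument has the same hole.

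\textbf{Your fallback does not close the gap.} The distinguishing half is fine, and is cleaner done directly. Restrict $\mathcal{D}^{(t)}_k$ to diagonal entries $\alpha=\beta$ of rank $0$. Induction on $t$ then shows that, across both complexes, the $k$-CCWL color of an all-vertex tuple determines its $k$-WL color. So $k$-WL-distinguishable implies $k$-CCWL-distinguishable. Going through $\TCLogic{k+2}$ instead needs the CCWL-to-game direction, whose inductive step does not control colors of doubly substituted tuples $\mathbf{x}[\alpha/i][\beta/j]$.

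The non-distinguishing half is the entire difficulty. Duplicator must survive against tuples of cells in which each edge cell pins two vertices, so the effective vertex budget roughly doubles. Choosing CFI graphs of larger treewidth so that $(k{-}1)$-CCWL provably fails destroys the other half. Your only lower bound on $k$-CCWL is $k$-WL, which then fails as well. What is needed is one family of pairs on which the cell-based threshold is controlled from both sides: a Duplicator strategy against $(k{-}1)$-CCWL and a separating argument for $k$-CCWL. That is a new CFI-type analysis, not a parameter change.
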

\begin{proof}[Proof of \Cref{thm:k-strictly}]
    We use the result that is found in \cite{cai1992optimal}. 
    \begin{theorem}[\cite{cai1992optimal}]\label{thm:cai-fuerer-immerman}
        For any $k \geq 1$, there is a pair of graphs $G_k$ and $H_k$ such that $k$-WL can distinguish them, but $(k-1)$-WL cannot distinguish them.
    \end{theorem}
    Now, as a direct result of \Cref{thm:cai-fuerer-immerman} and \Cref{thm:ccwl-vs-wl-1d}, we can conclude that for any $k \geq 2$, there is a pair of ACCs $G^{\mathrm{CC}}_k$ and $H^{\mathrm{CC}}_k$ such that $k$-CCWL can distinguish them, but $(k-1)$-CCWL cannot distinguish them.
\end{proof}
\end{document}